%!TEX root = main.tex
%%%%%%%%%%%%%%%%%%%%%%%%%%%%%%%%%%%%%%%%%%%%%%%
%%%%%%%%%%%%%%%%%%%%%%%%%%%%%%%%%%%%%%%%%%%%%%
\documentclass{article}

% if you need to pass options to natbib, use, e.g.:
% \PassOptionsToPackage{numbers, compress}{natbib}
% before loading nips_2018

% !TEX root = main.tex
%%%%%%%%%%%%%%%%%%%%%%%%%%%%%%%%%%%%%%%%%%%%%%%%%%%%%%%
% if you need to pass options to natbib, use, e.g.:
%     \PassOptionsToPackage{numbers, compress}{natbib}
% before loading neurips_2019

% ready for submission
%     \usepackage[nonatbib]{neurips_2019}

% to compile a preprint version, e.g., for submission to arXiv, add add the
% [preprint] option:
%     \usepackage[preprint]{neurips_2019}

% to compile a camera-ready version, add the [final] option, e.g.:
     \usepackage[final, nonatbib]{neurips_2019}

% to avoid loading the natbib package, add option nonatbib:
%     \usepackage[nonatbib]{neurips_2019}

%\bibliographystyle{plainnat}
%\bibliographystyle{plain}
\bibliographystyle{alpha}

\usepackage[utf8]{inputenc} % allow utf-8 input
\usepackage[T1]{fontenc}    % use 8-bit T1 fonts
\usepackage{url}            % simple URL typesetting
\usepackage{booktabs}       % professional-quality tables
\usepackage{amsfonts}       % blackboard math symbols
\usepackage{nicefrac}       % compact symbols for 1/2, etc.
\usepackage{slashbox,multirow}
\usepackage{array,booktabs}
%\newcolumntype{x}[1]{>{\centering\let\newline\\\arraybackslash\hspace{0pt}}p{#1}}
\newcolumntype{x}[1]{>{\centering\arraybackslash}p{#1}}

% Recommended, but optional, packages for figures and better typesetting:
\usepackage{microtype}
\usepackage{graphicx}
\usepackage{amsmath,amssymb}
\usepackage{mathabx} % for arrows \drsh
\usepackage{subcaption}
\usepackage{booktabs} % for professional tables
\usepackage{xspace}
\usepackage{enumerate}
\usepackage{mathrsfs}
\usepackage{etoolbox}

% hyperref makes hyperlinks in the resulting PDF.
% If your build breaks (sometimes temporarily if a hyperlink spans a page)
% please comment out the following usepackage line and replace
% \usepackage{icml2018} with \usepackage[nohyperref]{icml2018} above.
%\usepackage{hyperref}
\usepackage{xcolor}

\usepackage{algorithm}
\usepackage{algcompatible}
\usepackage[noend]{algpseudocode}
%\usepackage[noend]{algorithmic}

%\usepackage[lined, boxed, ruled, commentsnumbered]{algorithm2e}
% Attempt to make hyperref and algorithmic work together better:
% \newcommand{\theHalgorithm}{\arabic{algorithm}}
% \usepackage[lined, boxed, ruled, commentsnumbered, noend]{algorithm2e}

% \newboolean{showappendix}
\newboolean{showcomments}
\setboolean{showcomments}{true}

% % Use the following line for the initial blind version submitted for review:
% % \usepackage{icml2018}

% %%%%%%%%%%%%%%%%%%%%%%%%%%%%%%%%%%%%%%%%%%%%%%%
% %%%%%%%%%%%%%%%%%%%%%%%%%%%%%%%%%%%%%%%%%%%%%%% Adish: spacing
% \setlength{\abovecaptionskip}{0px}
% \setlength{\belowcaptionskip}{0pt}

% \usepackage{titlesec}
% \titlespacing*{\section}
% {0pt}{1.0ex plus 1ex minus .2ex}{0.5ex plus .1ex}
% \titlespacing*{\subsection}
% {0pt}{1.0ex plus 1ex minus .2ex}{1.0ex plus .2ex}
% \setlength{\parskip}{3.0pt plus 1.5mm minus 1.5mm}

%%%%%%%%%%%%%%%%%%%%%%%%%%%%%%%%%%%%%%%%%%%%%%%
%%%%%%%%%%%%%%%%%%%%%%%%%%%%%%%%%%%%%%%%%%%%%%%

\newcommand{\BlackBox}{\rule{1.5ex}{1.5ex}}  % end of proof
\newenvironment{proof}{\par\noindent{\bf Proof\ }}{\hfill\BlackBox\\[2mm]}

\newtheorem{theorem}{Theorem}%[subsection]
\newtheorem{lemma}[theorem]{Lemma}

\newtheorem{defn}{Theorem}
\newtheorem{definition}[defn]{Definition}
%\newtheorem{definition}[theorem]{Definition}

% ------------------------------------------------------------------
\newcommand{\yuxin}[1]{\ifthenelse{\boolean{showcomments}}{\textcolor{blue}{YC: #1}}{}}
\newcommand{\adish}[1]{\ifthenelse{\boolean{showcomments}}{\textcolor{red}{AS: #1}}{}}

 % reviews

 % reviews

\newcommand{\commentout}[1]{}

\newcommand{\denselist}{
\itemsep -4pt\topsep-8pt\partopsep-8pt\itemindent 0pt
}

% for squeezing the text
\newcommand{\Hypotheses}{\mathcal{H}}
\newcommand{\hypotheses}{H}

\newcommand{\hstar}{\hypothesis^\star}
\newcommand{\hinit}{{\hypothesis_0}}

\newcommand{\Examples}{\mathcal{Z}}
\newcommand{\Instances}{\mathcal{X}}
\newcommand{\Clabels}{\mathcal{Y}}

\newcommand{\examples}{Z}

\newcommand{\instances}{X}

\renewcommand{\example}{{z}}
\newcommand{\instance}{{x}}
\newcommand{\clabel}{{y}}

\newcommand{\TD}{\textsf{TD}\xspace}
\newcommand{\PBTD}{\textsf{PBTD}\xspace}
\newcommand{\NCTD}{\textsf{NCTD}\xspace}
\newcommand{\RTD}{\textsf{RTD}\xspace}

\newcommand{\VCD}{\textsf{VCD}\xspace}

% \newcommand{\hstar}{{\hypothesis^*}}

%\newcommand{\avgf}{\obj}

% \newcommand{\obj}{f}

%\newcommand{\obs}{\bx_\cA}

% Text conveniences
% ------------------------------------------------------------------------------

% Wrappers: Parens, brackets, etc
% -------------------------------------------------------------------------------

\newcommand{\paren} [1] {\ensuremath{ \left( {#1} \right) }}

% Functions and Functionals
% --------------------------------------------------------

%\newcommand{\E}[1]{\ensuremath{\mathbb{E}\left[#1\right] }}

%\newcommand{\argmax}{\operatorname{arg\,max}}
%\newcommand{\argmin}{\operatorname{arg\,min}}

\def \argmin {\mathop{\rm arg\,min}}

% fundamental sets and objects

\newcommand{\reals}{\ensuremath{\mathbb{R}}}

\newcommand{\hypothesis}[0]{\ensuremath{h}}
%\newcommand{\tests}[0]{\data}

 %{\ensuremath{X}}

% asymptotics

\newcommand{\bigO}[1]{\ensuremath{O\paren{#1}}}

\newcommand{\bigOmega}[1]{\ensuremath{\Omega\paren{#1}}}

% problems and complexity classes

 % bf or sc or rm?

% Constants and Variables

% Caligraphic Vars

% Bold vars

% ===================================================

%\newcommand{\indset}{\ensuremath{\mathcal{I}}}

% ========= Source: Matt's Macros ================================
% ----------------------------------------------------------------
\vfuzz2pt % Don't report over-full v-boxes if over-edge is small
\hfuzz2pt % Don't report over-full h-boxes if over-edge is smallq
% THEOREMS -------------------------------------------------------
% \newtheorem{theorem}{Theorem}% [section]
% \newtheorem{corollary}[theorem]{Corollary}
% \newtheorem{lemma}[theorem]{Lemma}
% \newtheorem{proposition}[theorem]{Proposition}
% \newtheorem{problem}[theorem]{Problem}
% [theorem]
% \newtheorem*{exampledef*}{Example}
% \theoremstyle{definition}
% \newtheorem{definition}[theorem]{Definition}
% \theoremstyle{remark}
% \newtheorem{rem}[theorem]{Remark}
\numberwithin{equation}{section}

% % References
% ----------------------------------------------------------------
\newcommand{\defref}[1]{Definition~\ref{#1}}
\newcommand{\tableref}[1]{Table~\ref{#1}}
\newcommand{\figref}[1]{Figure~\ref{#1}}

\newcommand{\secref}[1]{\S\ref{#1}}

\newcommand{\thmref}[1]{Theorem~\ref{#1}}

\newcommand{\lemref}[1]{Lemma~\ref{#1}}
\renewcommand{\algref}[1]{Algorithm~\ref{#1}}

\newcommand{\lnref}[1]{Line~\ref{#1}}

% -------------------------------------------------------------
%                       subfigure
% -------------------------------------------------------------

% enable subfigures
%\usepackage{subfigure}
\usepackage{subcaption}
\usepackage{wrapfig}

%\newbox\subfigbox
%\makeatletter
%\newenvironment{subfloat}
%{\def\caption##1{\gdef\subcapsave{\relax##1}}%
%\let\subcapsave\@empty
%\setbox\subfigbox\hbox
%\bgroup}
%{\egroup
%\subfigure[\subcapsave]{\box\subfigbox}}
%\makeatother

%-------------------------------------------------------------
%            Gabor's commands for adaptivity gap
%-------------------------------------------------------------

% \renewcommand{\S}{\mathcal{S}}

% \newtheorem{example}{Example}
% \newtheorem{theorem}{Theorem}
% \newtheorem{lemma}[theorem]{Lemma}
% \newtheorem{proposition}[theorem]{Proposition}
% \newtheorem{remark}[theorem]{Remark}
% \newtheorem{corollary}[theorem]{Corollary}
% \newtheorem{definition}[theorem]{Definition}
% \newtheorem{conjecture}[theorem]{Conjecture}
% \newtheorem{axiom}[theorem]{Axiom}

% \newcommand{\reals}{\mathbb{R}}

\newcommand{\pref}{\sigma}

\newcommand{\CF}{\textsf{CF}}

\newcommand{\Val}{D}
\newcommand{\Candidate}{\mathbf{C}}

%%%%%%%%%%%%%%%%%%%%%%%%%%%%%%%%%%%%%%%%%%

\newcommand{\Teacher}{\mathrm{T}}

\newcommand{\TeachingSeq}{\mathcal{S}}

\newcommand{\Sigmatd}[1]{\Sigma_{#1}{\text -}\TD}

%%%%%%%%%%%%%%%%%%%%%%%%%%%%%%%%%%%%%%%%%%
\newcommand{\SigmaConst}{\Sigma_{\textsf{const}}}
\newcommand{\SigmaGlobal}{\Sigma_{\textsf{global}}}
\newcommand{\SigmaGvs}{\Sigma_{\textsf{gvs}}}
\newcommand{\SigmaLocal}{\Sigma_{\textsf{local}}}
\newcommand{\SigmaLvs}{\Sigma_{\textsf{lvs}}}

\newcommand{\gvs}{\textsf{gvs}}
\newcommand{\glbl}{\textsf{global}}
\newcommand{\lvs}{\textsf{lvs}}
\newcommand{\lcl}{\textsf{local}}
%%%%%%%%%%%%%%%%%%%%%%%%%%%%%%%%%%%%%%%%%%
\newcommand{\SigmaConstTD}{\Sigmatd{\textsf{const}}}
\newcommand{\SigmaGlobalTD}{\Sigmatd{\textsf{global}}}
\newcommand{\SigmaGvsTD}{\Sigmatd{\textsf{gvs}}}
\newcommand{\SigmaLocalTD}{\Sigmatd{\textsf{local}}}
\newcommand{\SigmaLvsTD}{\Sigmatd{\textsf{lvs}}}

\usepackage{mathtools}
\DeclarePairedDelimiter{\ceil}{\lceil}{\rceil}

\newcommand{\hclass}[1]{H^{#1}}
\newcommand{\compactDSet}[1]{\Psi_{#1}}
\newcommand{\indexof}[1]{I{(#1)}}
\newcommand{\hnext}{\hypothesis_{\text{next}}}

%%%%%%%%%%%%%%%%%%%%%%%%%%%%%%%%%%%%%%%%%%

\usepackage{tikz}
%\usetikzlibrary{matrix,trees,positioning,arrows.meta,arrows}
%\usetikzlibrary{matrix,trees,positioning,arrows.meta,arrows,backgrounds,fit,calc}
\usetikzlibrary{matrix,trees,backgrounds,fit,calc}

\def\constCircle{(4,0) circle (0.1cm)}

\def\gvsEllipse{(5,0) ellipse (2.2cm and 1.5cm)}
\def\localEllipse{(3,0) ellipse (2.2cm and 1.5cm)}
\def\lvsEllipse{(4,0) ellipse (4cm and 1.8cm)}

\newcommand{\distfun}{g}

%\newcommand{\sigmatd}{\Sigma}

%%% Local Variables:
%%% mode: plain-tex
%%% TeX-master: "main"
%%% End:

%\title{Complexity of Collusion-Free Teaching: From Batch To Sequential Models}
%\title{Complexity of Preference-Based Teaching:\\ From Batch To Sequential Models}
%\title{Towards a Unified View of Batch and Sequential Teaching Dimensions}
%\title{Towards a Unified View of Batch and Sequential Teaching Models}
%\title{Towards a Unified View of Different Teaching Dimensions}
%\title{Towards a Batch and Sequential Teaching Dimensions}
%\title{Towards a Unified View of Batch and Sequential Teaching Models}
%\title{Preference-Based Batch and Sequential Teaching:\\A Unified View of Models}
%\title{Preference-Based Batch and Sequential Teaching Models: A Unified View}
\title{Preference-Based Batch and Sequential Teaching:\\Towards a Unified View of Models}

\author{
  \textbf{Farnam Mansouri}\textsuperscript{\textdagger} \ \ \ \ 
  \textbf{Yuxin Chen}\textsuperscript{\textdaggerdbl} \ \ \ \ 
  \textbf{Ara Vartanian}\textsuperscript{$\star$} \ \ \  \ 
  \textbf{Xiaojin Zhu}\textsuperscript{$\star$} \ \ \ \ 
  \textbf{Adish Singla}\textsuperscript{\textdagger}\\
%   \textbf{Farnam Mansouri}\textsuperscript{\textdagger} \quad 
%   \textbf{Yuxin Chen}\textsuperscript{\textdaggerdbl} \quad 
%   \textbf{Ara Vartanian}\textsuperscript{$\star$}\\
%   \textbf{Xiaojin Zhu}\textsuperscript{$\star$}\quad
%   \textbf{Adish Singla}\textsuperscript{\textdagger}\\
  \\
  %MPI-SWS
  \textsuperscript{\textdagger}Max Planck Institute for Software Systems (MPI-SWS), \texttt{\{mfarnam, adishs\}@mpi-sws.org}, \\  
  \textsuperscript{\textdaggerdbl}University of Chicago, \texttt{chenyuxin@uchicago.edu}, \\
  \textsuperscript{$\star$}University of Wisconsin-Madison, \texttt{\{aravart, jerryzhu\}@cs.wisc.edu} \\
}

%%%%%%%%%%%%%%%%%%%%%%%%%%%%%%%%%%%%%%%%%%%%%%%%%%%%%%%%%%
\begin{document}
\maketitle

%%%%%%%%%%%%%%%%%%%%%%%%%%%%%%%%%%%%%%%%%%%%%%%%%%%%%%%%
\newtoggle{longversion}
\settoggle{longversion}{true}
% Comment region command (from Wesley Willett)
%\iftoggle{longversion}{\newcommand{\extversion}{the Appendix}}
%{\newcommand{\extversion}{the extended version of this paper}}
%%%%%%%%%%%%%%%%%%%%%%%%%%%%%%%%%%%%%%%%%%%%%%%%%%%%%%%%%%
% !TEX root = main.tex
%%%%%%%%%%%%%%%%%%%%%%%%%%%%%%%%%%%%%%%%%%%%%%%%%%%%%%%%%%
%%%%%%%%%%%%%%%%%%%%%%%%%%%%%%%%%%%%%%%%%%%%%%%%%%%%%%%%%%
\begin{abstract}
Algorithmic machine teaching studies the interaction between a teacher and a learner where the teacher selects labeled examples aiming at teaching a target hypothesis. In a quest to lower teaching complexity and to achieve more natural teacher-learner interactions, several teaching models and complexity measures have been proposed for both the batch settings (e.g., worst-case, recursive, preference-based, and non-clashing models) as well as the sequential settings (e.g., local preference-based model). To better understand the connections between these different batch and sequential models, we develop a novel framework which captures the teaching process via preference functions $\Sigma$. In our framework, each function $\sigma \in \Sigma$ induces a teacher-learner pair with teaching complexity as $\TD(\sigma)$. We show that the above-mentioned teaching models are equivalent to specific types/families of preference functions in our framework. This equivalence, in turn, allows us to study the differences between two important teaching models, namely $\sigma$ functions inducing the strongest batch (i.e., non-clashing) model and $\sigma$ functions inducing a weak sequential (i.e., local preference-based) model.  Finally, we identify preference functions inducing a novel family of sequential models with teaching complexity linear in the VC dimension of the hypothesis class: this is in contrast to the best known complexity result for the batch models which is quadratic in the VC dimension.
%Finally, we identify preference functions inducing a novel family of sequential models with teaching complexity linear in the VC dimension of the hypothesis class (i.e, $O(\VCD)$): this is in contrast to the best known bound of $O(\VCD^2)$ for the batch models.
\end{abstract}
\vspace{-4mm}
\section{Introduction}\label{sec:intro}
\vspace{-2mm}
Algorithmic machine teaching studies the interaction between a teacher and a learner where the teacher’s goal is to find an optimal training sequence to steer the learner towards a target hypothesis \cite{goldman1995complexity,zilles2011models,zhu2013machine,singla2014near,zhu2015machine,DBLP:journals/corr/ZhuSingla18}. An important quantity of interest is the teaching dimension (TD) of the hypothesis class, representing the worst-case number of examples needed to teach any hypothesis in a given class. Given that the teaching complexity depends on what assumptions are made about teacher-learner interactions, different teaching models lead to different notions of teaching dimension. In the past two decades, several such teaching models have been proposed, primarily driven by the motivation to lower teaching complexity and to find models for which the teaching complexity has better connections with learning complexity measured by Vapnik–Chervonenkis dimension (VCD)~\cite{vapnik1971uniform} of the class.

Most of the well-studied teaching models are for the batch setting (e.g., worst-case \cite{goldman1995complexity,kuhlmann1999teaching}, recursive \cite{zilles2008teaching,zilles2011models,doliwa2014recursive}, preference-based \cite{gao2017preference}, and non-clashing~\cite{pmlr-v98-kirkpatrick19a} models). In these batch models, the teacher first provides a set of examples to the learner and then the learner outputs a hypothesis. In a quest to achieve more natural teacher-learner interactions and enable richer applications, various different models have been proposed for the sequential setting (e.g., local preference-based model for version space learners~\cite{chen2018understanding}, models for gradient learners~\cite{liu2017iterative,DBLP:conf/icml/LiuDLLRS18,DBLP:conf/ijcai/KamalarubanDCS19}, models inspired by control theory~\cite{zhu2018optimal,DBLP:conf/aistats/LessardZ019}, models for sequential tasks~\cite{cakmak2012algorithmic,haug2018teaching,tschiatschek2019learner}, and models for human-centered applications that require adaptivity~\cite{singla2013actively,hunziker2018teaching}).

%\todo{Add IRL-NeurIPS'19 learner with preferences}
%\todo{Application papers: workshop / CVPR'18 / Forgetful learner}

%In a quest to achieve more natural teacher-learner interactions, recent works have proposed models for the sequential setting (e.g., local preference-based model~\cite{chen2018understanding} for version space learners). Beyond boolean functions and version space learneenable richer applications
%, models for gradient learner's~\cite{liu2017iterative}, and models inspired by control theory~\cite{zhu2018optimal}).
%
%%%%%%%%%%%%%%%%%%%%%%%%%%%%%%%%

%and better understand the relationship between batch and sequential models. 
In this paper, we seek to gain a deeper understanding of how different teaching models relate to each other.
%Given our focus is primarily unWe focus on finite-hypothesis classes of Boolean functions and consider noise-free version space learner.
To this end,  we develop a novel teaching framework which captures the teaching process via preference functions $\Sigma$. Here, a preference function $\sigma \in \Sigma$ models how a learner navigates in the version space as it receives teaching examples (see \secref{sec:model} for formal definition); in turn, each function $\sigma$ induces a teacher-learner pair with teaching dimension  $\TD(\sigma)$ (see \secref{sec:complexity}). We highlight some of the key results below:
%; we use this equivalence to gain further insights into the relations between these models. 
%\subsubsection*{Main contributions, key results, and new directions}
%\subsection{Overview of our Results}
%%as the minimax value of a two-player game, 
%%Each such familiy is expoentially large in size, which in turn explains why their teaching measures reduce.. 
\begin{itemize}
    %\item We propose a generic teaching framework that depends on preference functions (\secref{sec:model}).
    %\item We introduce the notion of \TD for the framework and collusion-free preference functions (\secref{sec:complexity}).
    %We show that the well-studied  teaching models coincide with specific types/families of preference functions in our framework. 
    % We show that existing batch models (\TD, \RTD / \PBTD, \NCTD)  correspond
    \item We show that the well-studied teaching models in batch setting corresponds to specific families of $\sigma$ functions in our framework (see \secref{sec:non-seq-family-pref} and \tableref{tab:overview}).
    \item We study the differences in the family of $\sigma$ functions inducing the strongest batch model~\cite{pmlr-v98-kirkpatrick19a} and functions inducing a weak sequential model~\cite{chen2018understanding} (\secref{sec:connection}) (also, see the relationship between $\SigmaGvs$ and $\SigmaLocal$ in \figref{fig:venndiagram_seq}).
    \item  We identify preference functions inducing a novel family of sequential models with teaching complexity linear in the VCD of the hypothesis class. We provide a constructive procedure to find such $\sigma$ functions with low teaching complexity (\secref{sec:lvs_linvcd}).
    %: this is in contrast to the best known bound of $O(\VCD^2)$ for the batch models
\end{itemize}
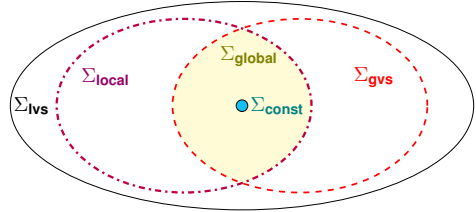
\begin{wrapfigure}{r}{0.42\textwidth}
\centering
%\vspace*{-.2cm}
\scalebox{.77}{
\begin{tikzpicture}
  \begin{scope}
    \clip \gvsEllipse;
    \fill[yellow!20] \localEllipse;
  \end{scope}
  \draw [fill=white!30!cyan] \constCircle node[text={rgb:green,3;blue,3}, right=.02cm] {\textbf{$\SigmaConst$}};
  % \draw \globalCircle node[text=black,above=1cm,right=0.1cm] {$\SigmaGlobal$};
  \draw (3,0) node[text={rgb:red,7;green,7;blue,0},above=0.85cm,right=0.5cm] {\textbf{$\SigmaGlobal$}};
  \draw [red,line width=0.3mm,dashed] \gvsEllipse node[text={rgb:red,5;green,0;blue,0},above=.5cm,right=.8cm] {\textbf{$\SigmaGvs$}};
  \draw [purple,line width=0.4mm,dashdotted]\localEllipse node[text={rgb:red,1;purple,5;blue,2},above=.5cm,left=0.8cm] {\textbf{$\SigmaLocal$}};
  \draw \lvsEllipse node[text=black,left=3.2cm] {\textbf{$\SigmaLvs$}};
\end{tikzpicture}}
\caption{Venn diagram for different families of preference functions.} \label{fig:venndiagram_seq}
%\vspace*{-3mm}
\end{wrapfigure}
%\adish{Venn Diagram, A table with three rows (sigma families, existing meausres, VCD numbers)}

Our key findings are highlighted in \figref{fig:venndiagram_seq} and \tableref{tab:overview}. Here, \figref{fig:venndiagram_seq} illustrates the relationship between different families of preference functions that we introduce, and \tableref{tab:overview} summarizes the key complexity results we obtain for different families. Our unified view of the existing teaching models in turn opens up several intriguing new directions such as (i) using our constructive procedures to design preference functions for addressing open questions of whether \RTD / \NCTD is linear in \VCD, and (ii) understanding the notion of collusion-free teaching in sequential models. We discuss these directions further in \secref{sec:discussion}.
\begin{table}[!htbp]
\centering
%\vspace*{-2cm}
% \begin{minipage}[b]{.4\textwidth}
% \scalebox{.7}{
% \begin{tikzpicture}
%   \begin{scope}
%     \clip \gvsEllipse;
%     \fill[cyan!20] \localEllipse;
%   \end{scope}
%   \draw [fill=white!30!cyan] \constCircle node[text={rgb:green,3;blue,3}, right=.02cm] {$\SigmaConst$};
%   % \draw \globalCircle node[text=black,above=1cm,right=0.1cm] {$\SigmaGlobal$};
%   \draw (3,0) node[text={rgb:red,1;green,2;blue,5},above=0.85cm,right=0.5cm] {$\SigmaGlobal$};
%   \draw [red,line width=0.3mm,dashed] \gvsEllipse node[text={rgb:red,5;green,2;blue,1},above=.5cm,right=.8cm] {$\SigmaGvs$};
%   \draw [purple,line width=0.4mm,dashdotted]\localEllipse node[text={rgb:red,1;purple,5;blue,2},above=.5cm,left=0.8cm] {$\SigmaLocal$};
%   \draw \lvsEllipse node[text=black,left=3.2cm] {$\SigmaLvs$};
% \end{tikzpicture}}
% \end{minipage}
% \begin{minipage}[b]{.58\textwidth}
\scalebox{0.9}{
\begingroup
\renewcommand{\arraystretch}{1.25}
\begin{tabular}{c|c|c|c|c|c}
    Families & $\SigmaConst$ & $\SigmaGlobal$ & $\SigmaGvs$ & $\SigmaLocal$ & $\SigmaLvs$\\ %\\[.5em]\hline
    \midrule
    %\toprule
    Reduction &  $\TD$ & $\RTD$ / $\PBTD$ & $\NCTD$& 
    Local-\PBTD & -- \\
    Complexity Results & -- & $O(\VCD^2)$ & $O(\VCD^2)$ & $O(\VCD^2)$ & $O(\VCD)$\\
    \hline
    &\cite{goldman1995complexity}&\cite{zilles2011models,gao2017preference,DBLP:journals/corr/HuWLW17}&\cite{pmlr-v98-kirkpatrick19a}&\cite{chen2018understanding}&
\end{tabular}
\endgroup
}
% \end{minipage}
%\vspace*{1mm}
\caption{Overview of our main results -- reduction to existing models and teaching complexity.}\label{tab:overview} %\label{fig:venndiagram_seq}
%\vspace*{-5mm}
\end{table}

% Our unified view of the existing teaching models in turn brings us to several intruiguing new questions:
% \begin{itemize}
%     \item what is notion of collusion-free teaching in sequential models? Following XXX definition, NCTD is strongest model, but that in our model doesn't allow any sequential behavior. Given that  teaching is often sequential, how do we reason about collusion-free ness?
%     \item our framework provides new constructive ways of finding the complexity measures. By restricting to a particular family and finding an (near-) optimal sigma in that family provides different ways of reasoning about teaching comexplity. We expect these new tools can be used to solve some open questions like if NCTD is linear in VCD.
% \end{itemize}

%%%%%%%%%%%%%%%%%%%%%%%%%%%%%%%%%%%%%%%%%%%%%%%
%%%%%%%%%%%%%%%%%%%%%%%%%%%%%%%%%%%%%%%%%%%%%%%
%%%%%%%%%%%%%%%%%%%%%%%%%%%%%%%%%%%%%%%%%%%%%%%
%%%%%%%%%%%%%%%%%%%%%%%%%%%%%%%%%%%%%%%%%%%%%%%

%%% Local Variables:
%%% mode: latex
%%% TeX-master: "main"
%%% End:

% !TEX root = main.tex
\vspace{-8mm}
\section{The Teaching Model}\label{sec:model}
%We now describe the teaching domain, and then present a generic model of the learner and the teacher.

%%%%%%%%%%%%%%%%%%%%%%%%%%%%%%%%%%%%%%%%%%%%%%%
%%%%%%%%%%%%%%%%%%%%%%%%%%%%%%%%%%%%%%%%%%%%%%%
\paragraph{The teaching domain.}
Let $\Instances$, $\Clabels$ be a ground set of unlabeled instances and the set of labels. % that could  possibly be assigned to the elements of $\Instances$. 
Let $\Hypotheses$ be a finite class of hypotheses; each element $\hypothesis\in \Hypotheses$ is a function $\hypothesis: \Instances \rightarrow \Clabels$. Here, we only consider boolean functions and hence $\Clabels = \{0,1\}$. 
In our model, $\Instances$, $\Hypotheses$, and $\Clabels$ are known to both the teacher and the learner. There is a target hypothesis $\hstar\in \Hypotheses$ that is known to the teacher, but not the learner.  Let $\Examples \subseteq \Instances \times \Clabels$ be the ground set of labeled examples. Each element $\example = (\instance_\example,\clabel_\example) \in \Examples$ represents a labeled example where the label is given by the target hypothesis $\hstar$, i.e., $\clabel_\example = \hstar(\instance_\example)$.
%
%Let $\Sequences$ be the set of all sequences of of examples $\examples = (\example_1, ..., \example_m)$ where for every $i \in [m]$, $\example_i \in \Examples$.
%Let $\Examples$ be set of all sequences of of examples $\examples = (\example_1, ..., \example_m)$ where for every $i \in [m]$, $\example_i \in \Instances \times \Clabels$.
% where the label is given by the target hypothesis $\hstar$, i.e., $\clabel_\example = \hstar(\instance_\example)$.
%
%Here, we define the notion of version space needed to formalize our model of the learner. 
%\paragraph{Version space} 
%Given a set of labeled examples $\examples \subseteq \Examples$, 
For any $\examples \subseteq \Examples$, the \emph{version space} induced by $\examples$ is the subset of hypotheses $\Hypotheses(\examples) \subseteq \Hypotheses$ that are consistent with the labels of all the examples, i.e., $\Hypotheses(\examples):= \{\hypothesis\in \Hypotheses \mid \forall \example = (\instance_\example, \clabel_\example) \in \examples, h(\instance_\example) = \clabel_\example\}$.

\paragraph{Learner's preference function.}
%\label{sec:model:learner}
\looseness -1 We consider a generic model of the learner that captures our assumptions about how the learner adapts her hypothesis based on the labeled examples received from the teacher. A key ingredient of this model is the learner's \emph{preference function} over the hypotheses. %Intuitively, the preference function encodes the learner's transition preferences. 
The learner, based on the information encoded in the inputs of preference function---which include the current hypothesis and the current version space---will choose one hypothesis in $\Hypotheses$. Our model of the learner strictly generalizes the local preference-based model considered in \cite{chen2018understanding}, where the learner's preference was only encoded by her current hypothesis.
Formally, we consider preference functions of the form $\sigma: \Hypotheses \times 2^\Hypotheses \times \Hypotheses \rightarrow \reals$.
%Given $\examples$ which is the sequence of all examples learner has received (full history of teaching procedure) and $\hypothesis$ which is current hypothesis. 
For any two hypotheses $\hypothesis', \hypothesis''$, we say that the learner prefers $\hypothesis'$ to $\hypothesis''$ based on the current hypothesis $\hypothesis$ and version space $\hypotheses \subseteq \Hypotheses$, iff $\sigma(\hypothesis' ; \hypotheses, \hypothesis) < \sigma(\hypothesis'';\hypotheses, \hypothesis)$. If $\sigma(\hypothesis';\hypotheses, \hypothesis) = \sigma(\hypothesis'';\hypotheses, \hypothesis)$, then the learner could pick either one of these two.

%%%%%%%%%%%%%%%%%%%%%%%%%%%%%%%%%%%%%%%%%%%%%%%
%%%%%%%%%%%%%%%%%%%%%%%%%%%%%%%%%%%%%%%%%%%%%%%
\paragraph{Interaction protocol and teaching objective.}
The teacher's goal is to steer the learner towards the target hypothesis $\hstar$ by providing a sequence of labeled examples. The learner starts with an initial hypothesis $\hinit \in \Hypotheses$ before receiving any labeled examples from the teacher. 
%Then, the interaction between the teacher and the learner proceeds in discrete time steps.  
At time step $t$, the teacher selects a labeled example $\example_t \in \Examples$, and the learner makes a transition from the current hypothesis to the next hypothesis.
Let us denote the labeled examples received by the learner up to (and including) time step $t$ via $\examples_{t}$. Further, we denote the learner's version space at time step $t$ as $\hypotheses_t=\Hypotheses(\examples_{t})$, and the learner's hypothesis before receiving $\example_t$ as $\hypothesis_{t-1}$. The learner picks the next hypothesis based on the current hypothesis $\hypothesis_{t-1}$, version space $\hypotheses_{t}$, and preference function $\sigma$:
  \begin{align}
    \hypothesis_{t} \in \argmin_{\hypothesis'\in \hypotheses_{t}} \sigma(\hypothesis'; \hypotheses_t, \hypothesis_{t-1}).
    \label{eq.learners-jump}
  \end{align}

Upon updating the hypothesis $\hypothesis_t$, the learner sends $\hypothesis_t$ as feedback to the teacher. Teaching finishes here if the learner's updated hypothesis $\hypothesis_t$ equals $\hstar$. 
%in the existing teaching models (e.g., under both the batch settings \cite{zilles2011models,gao2017preference})
%
We summarize the interaction in Protocol~\ref{alg:interaction}.\footnote{It is important to note that in our teaching model, the teacher and the learner use the same preference function. This assumption of shared knowledge of the preference function is also considered in existing teaching models for both the batch settings (e.g., as in \cite{zilles2011models,gao2017preference}) and the sequential settings \cite{chen2018understanding}).}
%We summarize the interaction protocol between the teacher and the learner in Protocol~\ref{alg:interaction}. 

%   The learner starts with an initial hypothesis $\hypothesis^0\in \Hypotheses$ before receiving any labeled examples from the teacher. Then, the interaction between the teacher and the learner proceeds in discrete rounds denoted by $t=1,2,\ldots$ time steps. At any time step $t$, the teacher provides a labeled example $\example^t=(\instance^t, \hstar(\instance^t))$ to the learner, the learner updates the hypotheses $h^t$, and then sends $h^t$ as feedback to the teacher. Teaching terminates at time $t$ if $h^t = h^*$.
%\renewcommand{\algorithmcfname}{Protocol}

%\todo{For example, it is not discussed in detail about the teaching protocols: whether the teacher knows about the learner’s preference function? Will the teacher know the preference function beforehand, or get better knowledge during the sequential teaching process?}

\makeatletter
\renewcommand{\ALG@name}{Protocol}
\makeatother

% \begin{wrapfigure}{r}{0.6\textwidth}
% \centering
% % \begin{figure}
% % \centering
%   \begin{minipage}[b]{0.6\textwidth}
% \begin{algorithm}[H]
\begin{algorithm}[h!]
  \caption{Interaction protocol between the teacher and the learner}\label{alg:interaction}
    \begin{algorithmic}[1]
        \State learner's initial version space is $\hypotheses_{0} = \Hypotheses$ and learner starts from an initial hypothesis $\hinit \in \Hypotheses$
        \For{$t=1, 2, 3, \ldots$}
            \State learner receives $\example_t = (\instance_t, \clabel_t)$; updates  $\hypotheses_t=\hypotheses_{t-1} \cap \Hypotheses(\{\example_t\})$; picks $\hypothesis_t$ per Eq.~\eqref{eq.learners-jump};
            \State teacher receives $\hypothesis_t$ as feedback from the learner;
            \State \textbf{if \ } $\hypothesis_t = \hstar$ \textbf{then \ } teaching process terminates
            %\If{$\hypothesis_t = \hstar$}
            %    \State teaching process terminates
            %\EndIf
        \EndFor
	\end{algorithmic}
%     \begin{algorithmic}[1]
%         \State learner's initial version space is $\hypotheses_{0} = \Hypotheses$ and learner starts from an initial hypothesis $\hinit \in \Hypotheses$
%         \For{$t=1, 2, 3, \ldots$}
%             \State learner receives %a new labeled example
%             $\example_t = (\instance_t, \clabel_t)$;
%             % \nl learner receives a new labeled example $\example^t=(\instance^t, \clabel^t)$; \\
%             \State learner updates %the version space as 
%             $\hypotheses_t=\hypotheses_{t-1} \cap \Hypotheses(\{\example_t\})$;
%         % \nl set of candidate hypotheses as $\proj(\hypothesis^{t-1},  \Hypotheses^t)$; \\ %\tcc*[h]{\textcolor{blue}{set of closest hypotheses}}\\
%             \State learner picks %the next hypothesis
%             $\hypothesis_t$ according to Eq.~\eqref{eq.learners-jump};
%             \State teacher receives $\hypothesis_t$ as feedback from the learner;
%             \If{$\hypothesis_t = \hstar$}
%                 \State teaching process terminates
%             \EndIf
%         \EndFor
% 	\end{algorithmic}
\end{algorithm}
% \end{minipage}
% \end{wrapfigure}
% \end{wrapfigure}

% {\bfseries Remark}: By using a specific instance of the function where $\pref(h_i,h_j)$ is a constant $ \forall \ h_i, h_j \in \Hypotheses$, our generic model would be equivalent to classical ``worst-case'' model of version space learners as studied by \cite{goldman1995complexity}.

%%%   Local Variables:
%%%   mode: latex
%%%   TeX-master: "main"
%%%   End:

% !TEX root = main.tex

%%%%%%%%%%%%%%%%%%%%%%%%%%%%%%%%%%%%%%%%%%%%%%%
%%%%%%%%%%%%%%%%%%%%%%%%%%%%%%%%%%%%%%%%%%%%%%%
% \section{The Teaching Model}\label{sec:model}
%\section{Collusion-free Preference Families}
\vspace{-4mm}
\section{The Complexity of Teaching}\label{sec:complexity}
\vspace{-1mm}
%\subsection{Teaching Dimension as the Value of a Two-Player Zero-Sum Game}
%\subsection{Teaching Dimension for the Sequential Setting}
\subsection{Teaching Dimension for a Fixed Preference Function}
Our objective is to design teaching algorithms that can steer the learner towards the target hypothesis in a minimal number of time steps. We study the \emph{worst-case} number of steps needed, as is common when measuring information complexity of teaching~\cite{goldman1995complexity,zilles2011models,gao2017preference,zhu2018optimal}. %For a fixed $\Instances,\Hypotheses,\hinit$, the complexity of teaching target hypothesis $\hstar$ under the preference function $\sigma$ is given by the teaching dimension $\TD_{\Instances,\Hypotheses,\hinit}(\pref,\hstar)$. 
Fix the ground set of instances $\Instances$ and the learner's preference $\sigma$. For any version space $\hypotheses\subseteq\Hypotheses$, the worst-case optimal cost for steering the learner from $\hypothesis$ to $\hstar$ is characterized by
%we use $\Val_{\pref}(\hypotheses, \hypothesis, \hstar)$ to denote the
%Fix the ground set of instances $\Instances$ and the learner's preference $\sigma$. For any version space $\hypotheses$, learner's current hypothesis $\hypothesis$, and $\hstar$, we use $\Val_{\pref}(\hypotheses, \hypothesis, \hstar)$ to denote the minimax value of the game, i.e., the minimal loss the teacher has to incur when it knows the actions of the learner. Formally, 
\begin{align*}
  \Val_{\pref}(\hypotheses, \hypothesis, \hstar) =
  \begin{cases}
    1, & %\text{candidate $h$'s is $\hstar$} \\
    %{\text{if}~}
    \exists z, \text{~s.t.~}\Candidate_{\sigma}(H,h,z) = \{h^*\}\\
    %\min_{\example} |\Candidate_\sigma(\hypotheses, \hypothesis, \example)| = \{\hstar\}\\ %~ \forall p \in \argmin_{\hypothesis'\in \Hypotheses(\{\example\}} \sigma(\hypothesis'; \hypothesis^0, \example)\\
    % 0, &\hypothesis^0 = \hstar\\
    1 + \min_{\example} \max_{\hypothesis'' \in \Candidate_\sigma(\hypotheses, \hypothesis, \example) % \{\text{candidate $h$'s}\} %\proj(h^0, \Hypotheses(\{\example\}))
    } \Val_{\pref}(\hypotheses \cap \Hypotheses(\{\example\}), \hypothesis'', \hstar) ,  &\text{otherwise}
  \end{cases}
\end{align*}
where 
%\begin{align*}
    $\Candidate_\sigma(\hypotheses, \hypothesis, \example) =\argmin_{\hypothesis' \in \hypotheses \cap \Hypotheses(\{\example\})} \sigma(\hypothesis'; \hypotheses \cap \Hypotheses(\{\example\}), \hypothesis)$
%\end{align*}
denotes the set of candidate hypotheses most preferred by the learner. 
% The teaching complexity $\TD_{\Instances,\Hypotheses,\hinit}(\pref,\hstar)$ is then defined as the minimax value of the game as learner starts from the initial hypothesis $\hinit$:
% \begin{align}
%     \TD_{\Instances,\Hypotheses,\hinit}(\pref,\hstar) = \Val_{\pref}(\Hypotheses, \hinit, \hstar)
% \end{align}
Note that our definition of teaching dimension is similar in spirit to the local preference-based teaching complexity defined by \cite{chen2018understanding}. We shall see in the next section, this complexity measure in fact reduces to existing notions of teaching complexity for specific families of preference functions.
%where we generalize the notion of local-PBTD to handle more flexible preference functions.

Given a preference function $\pref$ and the learner's initial hypothesis $\hinit$, the teaching dimension w.r.t. $\pref$ is defined as the worst-case optimal cost for teaching any target $\hstar$:
\begin{align}
    \TD_{\Instances,\Hypotheses,\hinit}(\pref) = \max_{\hstar} \Val_{\pref}(\Hypotheses, \hinit, \hstar).
\end{align}
\subsection{Teaching Dimension for a Family of Preference Functions}

In this paper, we will investigate several families of preference functions (as illustrated in \figref{fig:venndiagram_seq}).
%\looseness -1 \paragraph{Teaching dimension $\Sigma{\text-}\TD$.}
%In the subsequent sections, we will investigate several subclasses of the collusion-free preference functions. 
%For a family of preference functions $\Sigma \subset \Sigma_{\CF}$, 
For a family of preference functions $\Sigma$, 
we define the teaching dimension w.r.t the family $\Sigma$ as the teaching dimension w.r.t. the \emph{best} $\pref$ in that family:
\begin{align}\label{eq:sigmatd}
        \Sigma_{}{\text -}\TD_{\Instances,\Hypotheses,\hinit} =  \min_{\sigma\in \Sigma} \TD_{\Instances,\Hypotheses,\hinit}(\pref).
\end{align}

\subsection{Collusion-free Preference Functions}
%When designing teachnig models, 
%Despite the quest of teaching efficiency,
An important consideration when designing teaching models is to ensure that the teacher and the learner are ``collusion-free'', i.e., they are not allowed to collude or use some ``coding-trick'' to achieve arbitrarily low teaching complexity. %Existing teaching models under the batch setting often require that the teacher-learner pair satisfies certain notions of collusion-freeness~\cite{goldman1996teaching,angluin1997teachers,ott1999avoiding}.
%
%Under the sequential setting, 
%Since arbitrarily low teaching complexity can be obtained if teacher and learner are allowed to ``collude" or use some ``coding-trick'', an important consideration is to design teaching models where teacher-learner pair satisfies certain notions of collusion-freeness~\cite{goldman1996teaching,angluin1997teachers,ott1999avoiding}. 
A well-accepted notion of collusion-freeness in the batch setting is one proposed by \cite{goldman1996teaching} (also see \cite{angluin1997teachers,ott1999avoiding,pmlr-v98-kirkpatrick19a}). Intuitively, it captures the idea that a learner conjecturing hypothesis $\hypothesis$ will not change its mind when given additional information consistent with $\hypothesis$. 
%In particular, Goldman and Mathias \cite{goldman1995complexity} provided a definition of collusion-free teaching, which captures the idea that a learner conjecturing hypothesis $\hypothesis$ will not change its mind when given additional information consistent with $\hypothesis$. 
In comparison to batch models, the notion of collusion-free teaching in the sequential models is not well understood. 
%there is no established notion of collusion-freeness, and their relations to batch models is not clear.  %\yuxin{merge with following}
%An important consideration is to design teaching models where teacher-learner pair satisfies certain notions of collusion-freeness~\cite{goldman1996teaching,angluin1997teachers,ott1999avoiding}.
%
%Collusion-free Teaching: from Batch to Sequential Models
%\paragraph{Collusion-free preference functions.}
%We now introduce a class of preference functions, which we call \emph{collusion-free} preference functions. 
%Inspired by \cite{goldman1996teaching}, w
We introduce a novel notion of  collusion-freeness for the sequential setting, which captures the following idea: if $\hypothesis$ is the only hypothesis in the most preferred set defined by $\sigma$, then the learner %will not change its mind 
    will always stay at $\hypothesis$ as long as additional information received by the learner is consistent with $\hypothesis$. We formalize this notion in the definition below. Note that for $\sigma$ functions corresponding to batch models (see \secref{sec:non-seq-family-pref}), \defref{def:seq-col-free} reduces to the collusion-free definition of \cite{goldman1996teaching}.

\begin{definition}[Collusion-free preference]\label{def:seq-col-free}
%Let $\seqTeacher$ be a any teacher such for all $\hypothesis \in \Hypotheses$, $\seqTeacher(\hypothesis)$ is consistent with $\hypothesis$. Then, if preference function $\sigma$ has the following property then the pair $(\seqTeacher, \seqLearner)$ are collision free. 
%
Consider a time $t$ where the learner’s current hypothesis is $\hypothesis_{t-1}$ and version space is $\hypotheses_{t}$ (see Protocol \ref{alg:interaction}). Further assume that the learner’s preferred hypothesis for time $t$ is uniquely given by $\argmin_{\hypothesis' \in \hypotheses_t} \sigma(\hypothesis'; \hypotheses_t, \hypothesis_{t-1})=\{\hat{\hypothesis}\}$. 
Let $S$ be additional examples provided by an adversary from time $t$ onwards. We call a preference function collusion-free, if for any $S$ consistent with $\hat{\hypothesis}$, it holds that $\argmin_{\hypothesis' \in \hypotheses_t \cap \Hypotheses(S)} \sigma(\hypothesis'; \hypotheses_t \cap \Hypotheses(S), \hat{\hypothesis}) = \{\hat{\hypothesis}\}$.

%whenever $S$ is consistent with $\hat{\hypothesis}$, then the following holds $\argmin_{\hypothesis' \in \hypotheses_t \cap \Hypotheses(S)} \sigma(\hypothesis'; \hypotheses_t \cap \Hypotheses(S), \hat{\hypothesis}) = \left\{\hat{\hypothesis}\right\}$.
%
%Assume that the teacher has proposed a sequence of  teaching examples $\examples_t$ at time $t$, and the learner's current hypothesis is $\hypothesis_{t-1}$. We call a preference function collusion-free, if for any $t>0$, $\argmin_{\hypothesis' \in \hypotheses_t} \sigma(\hypothesis'; \hypotheses_t, \hypothesis_{t-1}) = \{\hypothesis\}$ implies that $\argmin_{\hypothesis' \in \hypotheses_t \cap \Hypotheses(S)} \sigma(\hypothesis'; \hypotheses_t \cap \Hypotheses(S), \hypothesis) = \{\hypothesis\}$ for any set of examples $S$ provided by the adversary which are consistent with $\hypothesis$.
%
%learner stays at the same hypothesis for any future time $\tau > t$, i.e., $\hypothesis_{\tau} = \hypothesis_t$, as long as the examples provided by adversary are consistent with $\hypothesis_t$.  %$|\argmin_{\hypothesis' \in \hypotheses_\tau} \sigma(\hypothesis'; \hypothesis_{\tau-1}, \hypotheses_\tau)| = \{\hypothesis_t\}$ for any sequence of teaching examples consistent with $\hypothesis_t$.
%if for any $t>0$, $|\Candidate_\sigma(\hypothesis_{t-1}, \hypotheses_t, \example_t)| = \{\hypothesis_t\}$ implies that $|\Candidate_\sigma(\hypothesis_{t-1}, \hypotheses_t, \example_t@\examples)| = \{\hypothesis_t\}$ for any sequence of teaching examples $\examples$ consistent with $\hypothesis_t$.

\end{definition}
%For every $\examples$, that $\hypothesis$ is consistent with $\examples$, and which $\exists \examples': \examples = T(\hypothesis) \cdot \examples'$, it holds that $\hypothesis = \argmin_{\hypothesis' \in \Hypotheses(\examples)} \sigma(\hypothesis';\hypothesis, \examples')$, also $|\argmin_{\hypothesis' \in \Hypotheses(\examples')} \sigma(\hypothesis';\hypothesis, \examples)| = 1$. We call such preference functions collision free.

%$\Sigma_{\CF} = \{\sigma: \sigma is \textnormal{collusion-free}\}$
 %\yuxin{this doesn't seem necessary}.
 
In this paper, we study preference functions that are collusion-free. In particular, we use $\Sigma_{\CF}$ to denote the set of preference functions that induce collusion-free teaching: %teacher-learner pairs:% as specified in
\begin{align*}
    \Sigma_{\CF} = \{\sigma\mid \sigma \text{ is collusion-free}\}. 
\end{align*}

\vspace{-4mm}
\section{Preference-based Batch Models} \label{sec:non-seq-family-pref}
\vspace{-1mm}
%For all families of preference functions in both section~\ref{sec:non-seq-family-pref}, and section~\ref{sec:seq-family-pref} all of preference functions $\sigma$ are driven from simpler function $f_\sigma$, which has less arguments, with slight abuse of notation, we construct $f_\sigma$ instead of $\sigma$, because $\sigma$ can be easily calculated form $f_\sigma$.

%%%%%%%%%%%%%%%%%%%%%%%%%%%%%%%%%%%%%%%%%%%%%%%
%%%%%%%%%%%%%%%%%%%%%%%%%%%%%%%%%%%%%%%%%%%%%%%
\subsection{Families of Preference Functions}
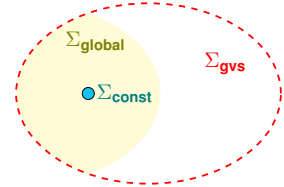
\begin{wrapfigure}{r}{0.26\textwidth}
\centering
%\vspace*{-1cm}
\scalebox{.8}{
\begin{tikzpicture}
  \begin{scope}
    \clip \gvsEllipse;
    \fill[yellow!20] \localEllipse;
  \end{scope}
  \draw [fill=white!30!cyan] \constCircle node[text={rgb:green,3;blue,3}, right=.02cm] {\textbf{$\SigmaConst$}};
  \draw (3,0) node[text={rgb:red,10;green,10;blue,0},above=0.85cm,right=0.5cm] {\textbf{$\SigmaGlobal$}};
  \draw [red,line width=0.3mm,dashed] \gvsEllipse
  node[text={rgb:red,5;green,0;blue,0},above=.5cm,right=.8cm] {\textbf{$\SigmaGvs$}};
\end{tikzpicture}}
\caption{Batch models.} \label{fig:venndiagram_batch}
\end{wrapfigure}
We consider three families of preference functions which do not depend on the learner's current hypothesis. 
The first one is the family of uniform preference functions, denoted by $\SigmaConst$, which corresponds to constant preference functions:
% \begin{align*}
%     \SigmaConst =  \{\sigma \in  \Sigma_{\CF}: \forall \hypothesis', \hypotheses, \hypothesis, \exists c \in \reals, \text{~s.t.~} \sigma(\hypothesis'; \hypotheses, \hypothesis) = c\}
% \end{align*}
\begin{align*}
    \SigmaConst =  \{\sigma \in  \Sigma_{\CF} \mid \exists c \in \reals, \text{~s.t.~} \forall \hypothesis', \hypotheses, \hypothesis, \sigma(\hypothesis'; \hypotheses, \hypothesis) = c\}
\end{align*}
The second family, denoted by $\SigmaGlobal$, corresponds to the preference functions that do not depend on the learner's current hypothesis and version space. In other words, the preference functions capture some \emph{global} preference ordering of the hypotheses:
% \begin{align*}
%     \SigmaGlobal =  \{\sigma \in \Sigma_{\CF}: \forall \hypothesis', \hypotheses, \hypothesis,~\exists~\distfun: \Hypotheses \rightarrow \reals, \text{~s.t.~} \sigma(\hypothesis'; \hypotheses, \hypothesis) = \distfun(\hypothesis')\}
% \end{align*}
\begin{align*}
    \SigmaGlobal =  \{\sigma \in \Sigma_{\CF}\mid \exists~\distfun: \Hypotheses \rightarrow \reals, \text{~s.t.~}\forall \hypothesis', \hypotheses, \hypothesis,~\sigma(\hypothesis'; \hypotheses, \hypothesis) = \distfun(\hypothesis')\}
\end{align*}
The third family, denoted by $\SigmaGvs$, corresponds to the preference functions that depend on the learner's version space, but do not depend on the learner's current hypothesis:
% \begin{align*}
%     \SigmaGvs =  \{\sigma \in \Sigma_{\CF}: \forall \hypothesis', \hypotheses, \hypothesis,~\exists~\distfun: \Hypotheses\times 2^\Hypotheses \rightarrow \reals, \text{~s.t.~} \sigma(\hypothesis'; \hypotheses, \hypothesis) = \distfun(\hypothesis', \hypotheses)\}
% \end{align*}
\begin{align*}
    \SigmaGvs =  \{\sigma \in \Sigma_{\CF} \mid \exists~\distfun: \Hypotheses\times 2^\Hypotheses \rightarrow \reals, \text{~s.t.~} \forall \hypothesis', \hypotheses, \hypothesis, \sigma(\hypothesis'; \hypotheses, \hypothesis) = \distfun(\hypothesis', \hypotheses)\}
\end{align*}
\figref{fig:venndiagram_batch} illustrates the relationship between these preference families. 
\vspace{-1mm}
\subsection{Complexity Results}
%\subsubsection*{Connections with existing notions of teaching dimensions}
%We first review the definitions of several classical notions of teaching dimension.
We first provide several definitions, including the formal definition of VC dimension as well as several existing notions of teaching dimension.

%\yuxin{'15 paper or the original '71 paper?}
\begin{definition}[Vapnik–Chervonenkis dimension \cite{vapnik1971uniform}]
The VC dimension for $H \subseteq \Hypotheses$ w.r.t. a fixed set of unlabeled instances $X \subseteq \Instances$, denoted by $\VCD(H,X)$, is the cardinality of the largest set of points $X' \subseteq X$ that are ``shattered''.\footnote{In the classical definition of $\VCD$, only the first argument $H$ is present; the second argument $X$ is omitted and is by default the ground set of unlabeled instances $\Instances$.} Formally, let $\hypotheses_{|\instances} = \{(\hypothesis(\instance_1), ..., \hypothesis(\instance_n)) \ | \ \forall \hypothesis \in \hypotheses\}$ denote all possible patterns of $\hypotheses$ on $\instances$. Then $\VCD(H, X)=\max{|X'|}, \text{~s.t.~} X' \subseteq X \text{~and~} |H_{|X'}| = 2^{|X'|}$.
%\todo{R2: Perhaps make clear whether or not $H_{|X}$ is a class of sequences (since "pattern" seems to suggest that the order of the $h(x_i)$'s is used).}. 
%The VC Dimension for $H$, denoted by $\VCD(H,X)$, is the cardinality of the largest set of points that the algorithm can shatter. Formally, let $\hypotheses_{|\instances} = \{\{\hypothesis(\instance_1), ..., \hypothesis(\instance_n)\} \ | \ \forall \hypothesis \in \hypotheses\}$ denote all possible patterns of $\hypotheses$ on $\instances$ \todo{revise the language so that order information is used}. Then $\VCD(H, X)=\max{|X'|}, \text{~s.t.~} X' \subseteq X \text{~and~} |H_{|X'}| = 2^{|X'|}$ \todo{Improve presentation}.
%Let $\Teacher: \Hypotheses \rightarrow \Examples^*$ be a teacher mapping on $\Hypotheses$. The teaching dimension of a hypothesis class $\Hypotheses$ is the smallest integer $t$ that for all  $\hypothesis\in\Hypotheses$ will have the teaching sequence of length at most $t$, i.e., $\TD = \max_{\hypothesis\in\Hypotheses} \min_\Teacher |\Teacher(\hypothesis)|$.
\end{definition}

\begin{definition}[Teaching dimension \cite{goldman1995complexity}] For any hypothesis $\hypothesis\in \Hypotheses$, we call a set of instances $\Teacher(\hypothesis) \subseteq \Instances$ a teaching set for $\hypothesis$, if it can uniquely identify $\hypothesis \in \Hypotheses$. %$\forall \hypothesis' \neq \hypothesis \in \Hypotheses , \exists \instance \in \Teacher(\hypothesis), \text{~s.t.~}\hypothesis'(\instance) \neq \hypothesis(\instance)$. 
The teaching dimension for $\Hypotheses$, denoted by $\TD(\Hypotheses)$, is the maximum size of the minimum teaching set for any $\hypothesis\in \Hypotheses$: $\TD(\Hypotheses) = \max_{\hypothesis\in\Hypotheses} \min |\Teacher(\hypothesis)|$.
%Let $\Teacher: \Hypotheses \rightarrow \Examples^*$ be a teacher mapping on $\Hypotheses$. The teaching dimension of a hypothesis class $\Hypotheses$ is the smallest integer $t$ that for all  $\hypothesis\in\Hypotheses$ will have the teaching sequence of length at most $t$, i.e., $\TD = \max_{\hypothesis\in\Hypotheses} \min_\Teacher |\Teacher(\hypothesis)|$.
\end{definition}

%\todo{Why RTD was introduced...}
As noted by \cite{zilles2008teaching}, the teaching dimension of \cite{goldman1995complexity} does not always capture the intuitive idea of cooperation between teacher and learner. The authors then introduced a model of cooperative teaching that resulted in the complexity notion of recursive teaching dimension, as defined below.
% and learning 
%The recursive teaching dimension which is proposed in \cite{doliwa2010recursive}, is defined as follows.
\begin{definition}[Recursive teaching dimension %\cite{zilles2008teaching, doliwa2010recursive,zilles2011models,doliwa2014recursive} 
\cite{zilles2008teaching,zilles2011models}]
The recursive teaching dimension (\RTD) of $\Hypotheses$, denoted by $\RTD(\Hypotheses)$, is the
smallest number $k$, such that one can find an ordered sequence of hypotheses in $\Hypotheses$, denoted by $(\hypothesis_1, \dots, \hypothesis_i, \dots, \hypothesis_{|\Hypotheses|})$, where every hypothesis $\hypothesis_i$ has a teaching set of size no more than $k$ to be distinguished from the hypotheses in the remaining sequence.
%where one can order all the hypothesis in $\Hypotheses$ as an ordered sequence $(\hypothesis_1, \dots, \hypothesis_{|\Hypotheses|})$, such that every hypothesis $\hypothesis_i, i < |\Hypotheses|$ has a teaching set of size no more than $k$ to be distinguished from the hypotheses in the remaining sequence.
\end{definition}
%Note that \RTD is equivalent to preference-based teaching dimension (\PBTD) \cite{gao2017preference} for finite hypothesis classes as we consider in this paper.
In this paper we consider finite hypothesis classes. Under this setting, \RTD is equivalent to preference-based teaching dimension (\PBTD) \cite{gao2017preference}.
%Note that \RTD is equivalent to preference-based teaching dimension (\PBTD) \cite{gao2017preference} for finite hypothesis classes
%
%We further consider the non-clashing teaching dimension \cite{pmlr-v98-kirkpatrick19a}. In this paper, 
%
%\PBTD \cite{gao2017preference} extends the notion of \RTD to infinite classes. %is equivalent to the 
%

In a recent work of  \cite{pmlr-v98-kirkpatrick19a}, a new notion of teaching complexity, called non-clashing teaching dimension or \NCTD, was introduced (see definition below). Importantly, \NCTD is the optimal teaching complexity among teaching models in the batch setting that satisfy the collusion-free property of \cite{goldman1996teaching}. %Below, we consider a discrete variant of the non-clashing teaching dimension \cite{pmlr-v98-kirkpatrick19a}. 
\begin{definition}[Non-clashing teaching dimension \cite{pmlr-v98-kirkpatrick19a}]
Let $\Hypotheses$ be a hypothesis class and $\Teacher: \Hypotheses \rightarrow 2^\Instances$ be a ``teacher mapping''
on $\Hypotheses$, i.e., mapping a given hypothesis to a teaching set.\footnote{We refer the reader to the original paper \cite{pmlr-v98-kirkpatrick19a} for a more formal description of ``teacher mapping".}
We say that $\Teacher$ is non-clashing on $\Hypotheses$ iff there are no two distinct $\hypothesis,\hypothesis'\in\Hypotheses$ such that $\Teacher(\hypothesis)$ is consistent with $\hypothesis'$ and $\Teacher(\hypothesis')$ is consistent with $\hypothesis$.
%Let $\Hypotheses$ be a hypothesis class over the instance space $\Instances$ . Let $\Teacher$ be a non-clashing teacher mapping. The order of $\Teacher$ on $\Hypotheses$, denoted by $\ord(\Teacher; \Hypotheses)$, is then defined
%by $\ord(\Teacher; \Hypotheses) = \sup \{\Teacher (\hypothesis) \given \hypothesis \in \Hypotheses\}$. \yuxin{$\sup \rightarrow \max$} 
The non-clashing Teaching Dimension of $\Hypotheses$, denoted by $\NCTD(\Hypotheses)$, is defined as $\NCTD(\Hypotheses) = \min_{\Teacher \text{~is non-clashing}} \{\max_{h\in \Hypotheses} |\Teacher(h)|\}$.
%\todo{Define ``teacher mapping''}
\end{definition}

We show in the following, that the teaching dimension $\Sigma{\text-}\TD$ in Eq.~\eqref{eq:sigmatd} unifies the above definitions of \TD's for batch models.

\begin{theorem}[Reduction to existing notions of TD's] \label{theorem:equivelence-results} Fix $\Instances,\Hypotheses,\hinit$. The teaching complexity for the three families reduces to the existing notions of teaching dimensions:
\begin{enumerate}\denselist
\item $\SigmaConstTD_{\Instances,\Hypotheses,\hinit} = \TD(\Hypotheses)$
\item $\SigmaGlobalTD_{\Instances,\Hypotheses,\hinit} = \RTD(\Hypotheses) = O(\VCD(\Hypotheses, \Instances)^2) $
\item $\SigmaGvsTD_{\Instances,\Hypotheses,\hinit} = \NCTD(\Hypotheses) = O(\VCD(\Hypotheses, \Instances)^2)$
\end{enumerate}
% \begin{enumerate}
%     \item $\TD_{\Sigma_{classic}}(\Hypotheses, \Instances) := \TD(\Hypotheses, \Instances)$. Where $\TD(\Hypotheses, \Instances)$ is the well known notion of teaching dimension introduced by \cite{goldman1995complexity}.
%     \item $\TD_{\Sigma_{Global}}(\Hypotheses, \Instances) := \PBTD(\Hypotheses, \Instances)$.
%   \item $\TD_{\Sigma_{GVS}}(\Hypotheses, \Instances) := \NCTD(\Hypotheses, \Instances)$.
% \end{enumerate}
\end{theorem}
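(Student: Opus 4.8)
The plan is to prove each of the three identities via two matching inequalities, leveraging a feature common to all three families: every $\sigma\in\SigmaConst\cup\SigmaGlobal\cup\SigmaGvs$ is independent of the learner's current hypothesis, so $\Val_\sigma(\hypotheses,\hypothesis,\hstar)$ does not depend on $\hypothesis$, the inner $\max$ over $\hypothesis''\in\Candidate_\sigma$ in the recursion collapses, and the learner's feedback conveys nothing the teacher cannot itself compute. Unrolling the recursion then shows that $\Val_\sigma(\Hypotheses,\hinit,\hstar)$ equals the least cardinality of a set $S$ of ($\hstar$-labeled) examples such that $\hstar$ is the \emph{unique} most preferred hypothesis of the version space $\Hypotheses(S)$ under $\sigma$; in particular this quantity is independent of $\hinit$, and for all three families teaching reduces to ``shrink the version space until $\hstar$ becomes uniquely preferred.''

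\emph{Parts 1 and 2.} Part~1 is then immediate: for a constant $\sigma$ every consistent hypothesis is equally (hence minimally) preferred, so $\hstar$ is uniquely preferred in $\Hypotheses(S)$ exactly when $\Hypotheses(S)=\{\hstar\}$, i.e.\ exactly when $S$ is a teaching set for $\hstar$; thus $\Val_\sigma(\Hypotheses,\hinit,\hstar)=\min\lvert\Teacher(\hstar)\rvert$ and $\SigmaConstTD=\TD(\Hypotheses)$ (every constant $\sigma$ is collusion-free and they all give the same cost). For Part~2 I would show $\SigmaGlobalTD=\RTD(\Hypotheses)$, which together with the identity $\RTD=\PBTD$ for finite classes and the bound $\RTD=O(\VCD^2)$ of~\cite{DBLP:journals/corr/HuWLW17} yields the statement. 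For $\SigmaGlobalTD\le\RTD$: given a recursive ordering $\hypothesis_1,\dots,\hypothesis_n$ with teaching sets $T_i$, $\lvert T_i\rvert\le\RTD$, separating $\hypothesis_i$ from $\{\hypothesis_{i+1},\dots,\hypothesis_n\}$, set $\distfun(\hypothesis_i)=-i$; teaching $\hypothesis_j$ then only requires eliminating the strictly more preferred hypotheses $\hypothesis_{j+1},\dots,\hypothesis_n$, which $T_j$ does, so $\TD(\distfun)\le\RTD$, and an injective global $\distfun$ is readily collusion-free. For $\SigmaGlobalTD\ge\RTD$: for any $\distfun\in\SigmaGlobal$ with $\TD(\distfun)=k'$, each witnessing set $S_{\hstar}$ must be inconsistent with every $\hypothesis'\ne\hstar$ satisfying $\distfun(\hypothesis')\le\distfun(\hstar)$, so ordering the hypotheses by increasing $\distfun$ and reversing turns the $S_{\hstar}$ into valid recursive teaching sets of size $\le k'$, whence $\RTD\le k'$.

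\emph{Part 3} is the heart of the theorem. For $\SigmaGvsTD\ge\NCTD$: given $\distfun\in\SigmaGvs$ with $\TD(\distfun)=k'$ and witnessing sets $S_{\hstar}$ of size $\le k'$, define the teacher mapping $\Teacher(\hstar)=\{\instance_\example:\example\in S_{\hstar}\}$. If $\Teacher$ clashed, say $\hypothesis\ne\hypothesis'$ with $\Teacher(\hypothesis)$ consistent with $\hypothesis'$ and $\Teacher(\hypothesis')$ consistent with $\hypothesis$, then $S_{\hypothesis'}$ is consistent with $\hypothesis$ and $S_{\hypothesis}$ with $\hypothesis'$; starting from $\Hypotheses(S_{\hypothesis})$ (where $\hypothesis$ is uniquely preferred) and feeding the adversarial examples $S_{\hypothesis'}$, \defref{def:seq-col-free} forces $\hypothesis$ to be the unique most preferred hypothesis of $\Hypotheses(S_{\hypothesis})\cap\Hypotheses(S_{\hypothesis'})$; the symmetric argument forces $\hypothesis'$ to be as well, and since this is the same version space we get $\hypothesis=\hypothesis'$---a contradiction, so $\Teacher$ is non-clashing and $\NCTD\le k'$. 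For $\SigmaGvsTD\le\NCTD$: take an optimal non-clashing mapping $\Teacher$ with $\max_\hypothesis\lvert\Teacher(\hypothesis)\rvert=\NCTD$, write $S_\hypothesis$ for $\Teacher(\hypothesis)$ labeled by $\hypothesis$, and set $\mathcal{D}_\hypothesis=\{H\subseteq\Hypotheses:\hypothesis\in H\subseteq\Hypotheses(S_\hypothesis)\}$. The non-clashing property makes the families $\mathcal{D}_\hypothesis$ pairwise disjoint (a common element of $\mathcal{D}_\hypothesis\cap\mathcal{D}_{\hypothesis'}$ would exhibit a clash between $\hypothesis$ and $\hypothesis'$). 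Then $\distfun(\hypothesis',H)=0$ if $H\in\mathcal{D}_{\hypothesis'}$ and $\distfun(\hypothesis',H)=1$ otherwise lies in $\SigmaGvs$: it depends only on $(\hypothesis',H)$; it has a unique minimizer exactly on the down-closed families $\mathcal{D}_\hypothesis$ and is tie-laden everywhere else (so collusion-freeness holds on $\bigcup_\hypothesis\mathcal{D}_\hypothesis$ and is vacuous off it); and feeding $\Teacher(\hstar)$ one example at a time drives the version space into $\mathcal{D}_{\hstar}$, so the learner reaches $\hstar$ within $\lvert\Teacher(\hstar)\rvert\le\NCTD$ steps. Hence $\SigmaGvsTD\le\NCTD$, and $\NCTD=O(\VCD^2)$ follows since every recursive teaching mapping is non-clashing, so $\NCTD\le\RTD=O(\VCD^2)$.

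The step I expect to be the main obstacle is the construction of $\distfun$ in the direction $\SigmaGvsTD\le\NCTD$. The temptation is to give $\distfun(\cdot,H)$ a unique minimizer on every version space $H$; but then collusion-freeness propagates that minimizer down to every smaller version space containing it, making essentially every target un-teachable. The resolution---keeping $\distfun(\cdot,H)$ tie-laden except on the pairwise-disjoint down-sets $\mathcal{D}_\hypothesis$, and invoking the non-clashing property precisely (and only) to secure that disjointness---is the crux, and it is the mirror image of how, in the direction $\SigmaGvsTD\ge\NCTD$, a clash is exactly what would let two distinct hypotheses be simultaneously the unique preferred hypothesis in a single version space.
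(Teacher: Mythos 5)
Your proposal is correct and follows essentially the same route as the paper: Parts 1 and 2 are the standard reductions to $\TD$ and to the $\RTD=\PBTD$ ordering (which the paper simply cites), and your Part 3 mirrors the paper's two inequalities --- using collusion-freeness of a $\sigma\in\SigmaGvs$ to force non-clashing of the induced teacher mapping for $\SigmaGvsTD\ge\NCTD$, and building $\sigma$ from a non-clashing mapping by marking exactly the version spaces $H$ with $\hypothesis\in H\subseteq\Hypotheses(\Teacher(\hypothesis))$ as most preferred (whose pairwise disjointness is precisely the paper's contradiction argument) for $\SigmaGvsTD\le\NCTD$. No gaps.
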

%We will provide the proof in appendix~\ref{sec.appendix.equivelence-results}
%Here we briefly sketch the proof for  \thmref{theorem:equivelence-results}. Note that 
%\looseness-1
Our teaching model strictly generalizes the local-preference based model of \cite{chen2018understanding}, which reduces to  the ``worst-case'' model when $\sigma \in \SigmaConst$ (corresponding to $\TD$) \cite{goldman1995complexity} and the global ``preference-based'' model when $\pref \in \SigmaGlobal$. Hence we get $\SigmaConstTD_{\Instances,\Hypotheses,\hinit} = \TD(\Hypotheses)$ and $\SigmaGlobalTD_{\Instances,\Hypotheses,\hinit} = \RTD(\Hypotheses)$. 
To establish the equivalence between $\SigmaGvsTD_{\Instances,\Hypotheses,\hinit}$ and $\NCTD(\Hypotheses)$, it suffices to show that for any $\Instances,\Hypotheses,\hinit$, the following holds: (i) $\SigmaGvsTD_{\Instances,\Hypotheses,\hinit} \geq \NCTD(\Hypotheses)$, and (ii) $\SigmaGvsTD_{\Instances,\Hypotheses,\hinit} \leq \NCTD(\Hypotheses)$.
%The full proof for the theorem is provided in the appendix.
\iftoggle{longversion}
{The full proof is provided in Appendix~\ref{sec.appendix.batch-models.theorem-proof}.}
{The full proof is provided in Appendix~A.2 of the supplementary.}
%for the theorem 
%\todo{extra space after toggle}
%\todo{There is no reference to Appendix A.1}
%%%%%%%%%%%%%%%%%%%%%%%%%%%%%%%%%%%%%
%%%%%%%%%%%%%%%%%%%%%%%%%%%%%%%%%%%%%
%In \tableref{tab:warmth_example_sequences}, we provide teaching sequences for the well known Warmuth hypothesis class \cite{doliwa2014recursive}, achieving these teaching dimensions under different preference functions. 
%\todo{R2: Perhaps explain here why the Warmuth hypothesis class is interesting/significant, e.g. it is the smallest concept class for which RTD exceeds VCD.}

In \tableref{tab:main:warmuth_class_sequence_pref}, we consider the well known Warmuth hypothesis class \cite{doliwa2014recursive} where $\SigmaConstTD=3$, $\SigmaGlobalTD=3$, and $\SigmaGvsTD=2$.
%%%%%%%%%%%%%%%%%%%%%%%%%%%%%%%%%%%%%%
%Also, we provide teaching sequences achieving these teaching dimensions under different preference functions. 
\tableref{tab:pref_const_global} and \tableref{tab:pref_gvs} show preference functions $\sigma \in \SigmaConst$, $\sigma \in \SigmaGlobal$, and $\sigma \in \SigmaGvs$ that achieve the minima in Eq.~\eqref{eq:sigmatd}. \tableref{tab:warmth_example_sequences} shows the teaching sequences achieving these teaching dimensions for these preference functions.
%under different
%We provide the teaching sequences achieving these teaching dimensions under different preference functions in \tableref{tab:warmth_example_sequences}, and show examples of the preference functions in \tableref{tab:pref_const_global} and \tableref{tab:pref_gvs}.
%
%%%%%%%%%%%%%%%%
%%%%%%%%%%%%%%%%%%%%%%%%%%%%%%%%%%%
\iftoggle{longversion}
{In Appendix~\ref{sec.appendix.batch_models.example},}
{In Appendix~A.1,}
we provide another hypothesis class where $\SigmaConstTD=3$, $\SigmaGlobalTD=2$, and $\SigmaGvsTD=1$.
\vspace{-2mm}
\section{Preference-based Sequential Models} \label{sec:seq-family-pref}
\vspace{-2mm}
\subsection{Families of Preference Functions}
% \begin{wrapfigure}{r}{0.45\textwidth}
% \centering
% \vspace*{-2cm}
% \scalebox{.7}{
% \begin{tikzpicture}
%   \begin{scope}
%     \clip \gvsEllipse;
%     \fill[cyan!20] \localEllipse;
%   \end{scope}
%   \draw [fill=white!30!cyan] \constCircle node[text={rgb:green,3;blue,3}, right=.02cm] {$\SigmaConst$};
%   % \draw \globalCircle node[text=black,above=1cm,right=0.1cm] {$\SigmaGlobal$};
%   \draw (3,0) node[text={rgb:red,1;green,2;blue,5},above=0.85cm,right=0.5cm] {$\SigmaGlobal$};
%   \draw [red,line width=0.3mm,dashed] \gvsEllipse node[text={rgb:red,5;green,2;blue,1},above=.5cm,right=.8cm] {$\SigmaGvs$};
%   \draw [purple,line width=0.4mm,dashdotted]\localEllipse node[text={rgb:red,1;purple,5;blue,2},above=.5cm,left=0.8cm] {$\SigmaLocal$};
%   \draw \lvsEllipse node[text=black,left=3.2cm] {$\SigmaLvs$};
% \end{tikzpicture}}
% \caption{Venn diagram illustration for different families of preference functions} \label{fig:venndiagram_seq}
% \vspace*{-3mm}
% \end{wrapfigure}
\looseness -1 In this section, we investigate two families of preference functions that depend on the learner's current hypothesis $\hypothesis_{t-1}$.
%Here we are discussing families which preference function is related to current hypothesis $\hypothesis_t$.
The first one is the family of local preference-based functions \cite{chen2018understanding}, denoted by $\SigmaLocal$, which corresponds to preference functions that depend on the learner's current (local) hypothesis, but do not depend on the learner's version space:
\begin{align*}
    \SigmaLocal =  \{\sigma \in \Sigma_{\CF} \mid %& 
    \exists~\distfun: \Hypotheses\times \Hypotheses \rightarrow \reals, %\\ &
    \text{~s.t.~} \forall \hypothesis', \hypotheses, \hypothesis,  \sigma(\hypothesis'; \hypotheses, \hypothesis) = \distfun(\hypothesis', \hypothesis)\}
\end{align*}
The second family, denoted by $\SigmaLvs$, corresponds to the preference functions that depend on %the learner's current (local) hypothesis $\hypothesis$, as well as the learner's version space $\hypotheses$.
all three arguments of $\sigma(\hypothesis'; \hypotheses, \hypothesis)$. 
The dependence of $\sigma$ on the learner's current (local) hypothesis and the version space renders a powerful family of preference functions:
\begin{align*}
    \SigmaLvs =  \{\sigma \in \Sigma_{\CF} \mid \exists~\distfun: \Hypotheses \times 2^\Hypotheses \times \Hypotheses \rightarrow \reals, \text{~s.t.~} \forall \hypothesis', \hypotheses, \hypothesis,\sigma(\hypothesis'; \hypotheses, \hypothesis) = \distfun(\hypothesis',\hypotheses, \hypothesis)\}
\end{align*}

\begin{table}[t!]\label{tab:warmth_example_families}
\centering
    \begin{subtable}[t]{\textwidth}
        \centering
        \scalebox{0.85}{
        \begin{tabular}{l|lllll||l|l|l|l}
        \backslashbox{$\hypothesis$}{$\instance$} & $\instance_1$ & 
        $\instance_2$ & 
        $\instance_3$ & 
        $\instance_4$ & 
        $\instance_5$ & 
        $\TeachingSeq_{\textsf{const}}=\TeachingSeq_{\textsf{\glbl}}$ &
        $\TeachingSeq_{\gvs}$ &
        $\TeachingSeq_{\lcl}$ &
        $\TeachingSeq_{\lvs}$ 
        \\ \hline
        $\hypothesis_1$ 
        & 1 & 1 & 0  & 0 & 0 
        & $\paren{\instance_1, \instance_2, \instance_4}$ % const
        & $\paren{\instance_1, \instance_2}$% gvs
        & $\paren{\instance_1}$ % local
        & $\paren{\instance_1}$ % lvs
        \\
        $\hypothesis_2$
        & 0 & 1 & 1 & 0 & 0
        & $\paren{\instance_2, \instance_3, \instance_5}$ % const
        & $\paren{\instance_2, \instance_3}$% gvs
        & $\paren{\instance_3}$ % local
        & $\paren{\instance_2}$ % lvs
        \\
        $\hypothesis_3$
        & 0 & 0 & 1 & 1 & 0
        & $\paren{\instance_1, \instance_3, \instance_4}$ % const
        & $\paren{\instance_3, \instance_4}$% gvs
        & $\paren{\instance_3, \instance_4}$  % local
        & $\paren{\instance_3}$ % lvs
        \\
        $\hypothesis_4$
        & 0 & 0 & 0 & 1 & 1
        & $\paren{\instance_2, \instance_4, \instance_5}$ % const
        & $\paren{\instance_4, \instance_5}$% gvs
        & $\paren{\instance_5, \instance_4}$  % local
        & $\paren{\instance_4}$ % lvs
        \\
        $\hypothesis_5$
        & 1 & 0 & 0 & 0 & 1
        & $\paren{\instance_1, \instance_3, \instance_5}$ % const
        & $\paren{\instance_1, \instance_5}$% gvs
        & $\paren{\instance_5}$ % local
        & $\paren{\instance_5}$ % lvs
        \\
        $\hypothesis_6$
        & 1 & 1 & 0 & 1 & 0
        & $\paren{\instance_1, \instance_2, \instance_4}$ % const
        & $\paren{\instance_2, \instance_4}$% gvs
        & $\paren{\instance_4}$ % local
        & $\paren{\instance_3}$ % lvs
        \\
        $\hypothesis_7$
        & 0 & 1 & 1 & 0 & 1
        & $\paren{\instance_2, \instance_3, \instance_5}$ % const
        & $\paren{\instance_3, \instance_5}$% gvs
        & $\paren{\instance_3, \instance_5}$  % local
        & $\paren{\instance_4}$ % lvs
        \\
        $\hypothesis_8$
        & 1 & 0 & 1 & 1 & 0
        & $\paren{\instance_1, \instance_3, \instance_4}$ % const
        & $\paren{\instance_1, \instance_4}$% gvs
        & $\paren{\instance_4, \instance_3}$  % local
        & $\paren{\instance_5}$ % lvs
        \\
        $\hypothesis_9$
        & 0 & 1 & 0 & 1 & 1
        & $\paren{\instance_2, \instance_4, \instance_5}$ % const
        & $\paren{\instance_2, \instance_5}$% gvs
        & $\paren{\instance_4, \instance_5}$  % local
        & $\paren{\instance_1}$ % lvs
        \\
        $\hypothesis_{10}$
        & 1 & 0 & 1 & 0 & 1 
        & $\paren{\instance_1, \instance_3, \instance_5}$ % const
        & $\paren{\instance_1, \instance_3}$% gvs
        & $\paren{\instance_5, \instance_3}$ % local
        & $\paren{\instance_2}$ % lvs
        \end{tabular}}
        \caption{The Warmuth hypothesis class and the corresponding teaching sequences (denoted by $\TeachingSeq$).
        }\label{tab:warmth_example_sequences}
    \end{subtable}
    \begin{subtable}[t]{.3\textwidth}
        \centering
        \scalebox{0.9}{
        \begin{tabular}{c|c}
        $\hypothesis'$ & $\forall \hypothesis' \in \hypotheses$  \\\hline
            %  $\sigma_{\textsf{global}}(h, \cdot, \cdot)$ & 1 & 1 & 0 & 1 & 1 & 1\\
        $\sigma_{\textsf{const}}(\hypothesis'; \cdot, \cdot)$ & \multirow{2}{*}{0}\\
        $\sigma_{\glbl}(\hypothesis'; \cdot, \cdot)$ \\
    \end{tabular}
    }
    %     \begin{tabular}{c|cccccccccc}
    %     $\hypothesis$ & $\hypothesis_1$ & $\hypothesis_2$ & $\hypothesis_3$ & $\hypothesis_4$ & $\hypothesis_5$ & $\hypothesis_6$ & $\hypothesis_7$ & $\hypothesis_8$ & $\hypothesis_9$ & $\hypothesis_{10}$ \\\hline
    %         %  $\sigma_{\textsf{global}}(h, \cdot, \cdot)$ & 1 & 1 & 0 & 1 & 1 & 1\\
    %     $\sigma_{\textsf{const}}(\hypothesis, \cdot, \cdot)$ & \multirow{2}{*}{0} & 
    %     \multirow{2}{*}{0} & 
    %     \multirow{2}{*}{0} & 
    %     \multirow{2}{*}{0} & 
    %     \multirow{2}{*}{0} & 
    %     \multirow{2}{*}{0} & 
    %     \multirow{2}{*}{0} & 
    %     \multirow{2}{*}{0} & 
    %     \multirow{2}{*}{0} & 
    %     \multirow{2}{*}{0} \\
    %     $\sigma_{\glbl}(\hypothesis, \cdot, \cdot)$ \\
    % \end{tabular}
    \caption{
    $\sigma_{\textsf{const}}$ and $\sigma_{\glbl}$
    } \label{tab:pref_const_global}
    \end{subtable}
    \quad
    \begin{subtable}[t]{.6\textwidth}
        \centering
        \scalebox{0.9}{
        \begin{tabular}{c|p{.1cm}p{.1cm}p{.1cm}p{.1cm}p{.1cm}p{.1cm}p{.1cm}p{.1cm}p{.1cm}p{.1cm}}
        % \backslashbox{$\hypothesis'$}{$\hypothesis$} &
        {$\hypothesis\backslash \hypothesis'$} & 
        $\hypothesis_1$ & $\hypothesis_2$ & $\hypothesis_3$ & $\hypothesis_4$ & $\hypothesis_5$ & $\hypothesis_6$ & $\hypothesis_7$ & $\hypothesis_8$ & $\hypothesis_9$ & $\hypothesis_{10}$ \\\hline
            %  $\sigma_{\textsf{global}}(h, \cdot, \cdot)$ & 1 & 1 & 0 & 1 & 1 & 1\\
        $\sigma_{\lcl}(\hypothesis'; \cdot, \hypothesis=\hypothesis_1)$ & 
        0 & 2 & 4 & 4 & 2 & 1 & 3 & 3 & 3 & 3 \\
        %\vspace{-3mm}{\footnotesize $\dots$} 
        {\footnotesize $\dots$}&  
    \end{tabular}
    }
    \caption{$\sigma_{\lcl}$ %$\sigma_{\lcl}(\hypothesis; \hypothesis'=\hypothesis_1, \cdot) = \hypothesis \Delta \hypothesis'$ 
    representing the Hamming distance between $\hypothesis'$ and $\hypothesis$. } \label{tab:pref_local}
    \end{subtable}
        \begin{subtable}[t]{.3\textwidth}
    % \begin{tabular}{c|p{.9cm}p{.9cm}p{.9cm}p{.9cm}p{.9cm}p{.9cm}p{.9cm}p{.9cm}p{.9cm}p{.9cm}}
    \scalebox{0.8}{
    \begin{tabular}{c|ccc}
    $\hypothesis'$ & $\hypothesis_1$ & $\hypothesis_2$ 
    &  $\dots$ \\ \hline 
    %$\hypothesis_3$ & $\hypothesis_4$ & $\hypothesis_5$ & $\hypothesis_6$ & $\hypothesis_7$ & $\hypothesis_8$ &  $\hypothesis_9$ & $\hypothesis_{10}$\\\hline
         \multirow{2}{*}{$\hypotheses$} & 
         $\{\hypothesis_1, \hypothesis_6\}$ & 
         $\{\hypothesis_2, \hypothesis_7\}$ &
         \dots
        %  $\{\hypothesis_3, \hypothesis_8\}$ & 
        %  $\{\hypothesis_4, \hypothesis_9\}$ & 
        %  $\{\hypothesis_5, \hypothesis_{10}\}$ &
        %  $\{\hypothesis_6, \hypothesis_9\}$ & 
        %  $\{\hypothesis_7, \hypothesis_{10}\}$ & 
        %  $\{\hypothesis_8, \hypothesis_6\}$ & 
        %  $\{\hypothesis_9, \hypothesis_7\}$ & 
        %  $\{\hypothesis_{10}, \hypothesis_8\}$ 
         \\
        & 
         $\{\hypothesis_1\}$ & 
         $\{\hypothesis_2\}$ &
         $\dots$
        %  $\{\hypothesis_3\}$ & 
        %  $\{\hypothesis_4\}$ & 
        %  $\{\hypothesis_5\}$ &
        %  $\{\hypothesis_6\}$ & 
        %  $\{\hypothesis_7\}$ & 
        %  $\{\hypothesis_8\}$ & 
        %  $\{\hypothesis_9\}$ & 
        %  $\{\hypothesis_{10}\}$ 
         \\\hline
        %  $\sigma$ & 
        %  $\sigma(\hypothesis_1, \cdot, \hypotheses) = 0$ & $\sigma(\hypothesis_2, \cdot, \hypotheses) = 0$ &
        %  $\sigma(\hypothesis_3, \cdot, \hypotheses) = 0$ &
        %  $\sigma(\hypothesis_4, \cdot, \hypotheses) = 0$ &
        %  $\sigma(\hypothesis_5, \cdot, \hypotheses) = 0$ &
        %  $\sigma(\hypothesis_6, \cdot, \hypotheses) = 0$
         $\sigma_\gvs$ & 
         0 & 0 & $\dots$\\
        %  $\sigma(h, \cdot, \cdot)$ & 1 & 1 & 0 & 1 & 1 & 1\\
        % $\sigma(\cdot, \cdot, \cdot)$ & 0 & 0 & 0 & 0 & 0 & 0\\
    \end{tabular}}
    \caption{$\sigma_\gvs(\hypothesis'; \hypotheses, \cdot)$
    %Example $\sigma_\gvs(\hypothesis'; \hypotheses, \cdot)$ values for $\Teacher_{\gvs}$.
    %For all other $\hypothesis', \hypotheses$ pairs not specified in the table, $\sigma(\hypothesis', \hypotheses, \cdot) = 1$.
    }\label{tab:pref_gvs}
    \end{subtable}
    \qquad
     \begin{subtable}[t]{.5\textwidth}
        \centering
        % \begin{tabular}{c|p{1cm}p{1cm}p{1cm}p{1cm}p{1cm}p{1cm}p{1cm}p{1cm}p{1cm}p{1cm}}
        \scalebox{0.7}{
        \begin{tabular}{c|cc|cc|c}
        % \backslashbox{$\hypothesis'$}{$\hypothesis$} &
        $\hypothesis'$ & 
         \multicolumn{2}{c|}{$\hypothesis_1$} & 
         \multicolumn{2}{c|}{$\hypothesis_2$} & 
        %  \multicolumn{2}{c|}{$\hypothesis_3$} & 
         $\dots$
        %  \multicolumn{2}{c|}{$\hypothesis_4$} & 
        %  \multicolumn{2}{c}{$\hypothesis_5$}
         \\\hline
        \multirow{2}{*}{$\hypotheses$} & \multicolumn{2}{c|}{$\{\hypothesis_1\} \cup$} & \multicolumn{2}{c|}{$\{\hypothesis_2\} \cup$} &
         %\multicolumn{2}{c|}{$\{\hypothesis_3\} \cup$} &
         $\dots$
        %  \multicolumn{2}{c|}{$\{\hypothesis_4\} \cup$} &
        %  \multicolumn{2}{c}{$\{\hypothesis_5\} \cup$}
         \\
        &\multicolumn{2}{c|}{$\{\hypothesis_5, \hypothesis_6,\hypothesis_8,\hypothesis_{10}\}^*$}
        &\multicolumn{2}{c|}{$\{\hypothesis_1, \hypothesis_7,\hypothesis_6,\hypothesis_{9}\}^*$}
        %&\multicolumn{2}{c|}{$\{\hypothesis_2, \hypothesis_7,\hypothesis_8,\hypothesis_{10}\}^*$}
        & $\dots$
        % &\multicolumn{2}{c|}{$\{\hypothesis_3, \hypothesis_6,\hypothesis_8,\hypothesis_{9}\}^*$}
        % &\multicolumn{2}{c}{$\{\hypothesis_4, \hypothesis_7,\hypothesis_9,\hypothesis_{10}\}^*$}
         \\\hline
         $\hypothesis$ & 
         \multicolumn{2}{c|}{$\hypothesis_1$} & 
         $\hypothesis_1$ & $\hypothesis_2$ &
         %$\hypothesis_1$ & $\hypothesis_3$ &
         $\dots$
        %  $\hypothesis_1$ & $\hypothesis_4$ &
        %  $\hypothesis_1$ & $\hypothesis_5$
         \\\hline
        %  $\sigma$ & 
        %  $\sigma(\hypothesis_1, \cdot, \hypotheses) = 0$ & $\sigma(\hypothesis_2, \cdot, \hypotheses) = 0$ &
        %  $\sigma(\hypothesis_3, \cdot, \hypotheses) = 0$ &
        %  $\sigma(\hypothesis_4, \cdot, \hypotheses) = 0$ &
        %  $\sigma(\hypothesis_5, \cdot, \hypotheses) = 0$ &
        %  $\sigma(\hypothesis_6, \cdot, \hypotheses) = 0$
        $\sigma_\lvs$ & 
        \multicolumn{2}{c|}{0} & 
        0 & 0 & $\dots$\\%\hline
        %  $\sigma(h, \cdot, \cdot)$ & 1 & 1 & 0 & 1 & 1 & 1\\
        % $\sigma(\cdot, \cdot, \cdot)$ & 0 & 0 & 0 & 0 & 0 & 0\\
        \end{tabular}
        }
    \caption{$\sigma_\lvs(\hypothesis'; \hypotheses, \hypothesis)$. %Example $\sigma_\lvs(\hypothesis'; \hypotheses, \hypothesis)$ values for $\Teacher_{\lvs}$.
    %The preference function for     $\Teacher_{\lvs}$. For all other $\hypothesis', \hypotheses, \hypothesis$ tuples not specified in the table, $\sigma_\lvs(\hypothesis'; \hypotheses, \hypothesis) = 1$. 
    Here, $\{\cdot \}^*$ denotes all subsets.}\label{tab:pref_lvs}
    \end{subtable}
\vspace{-1mm}    
\caption{Teaching sequences with different preference functions for the Warmuth hypothesis class
\cite{doliwa2014recursive}.\protect\footnotemark~ \iftoggle{longversion}{Full preference functions are given in Appendix~\ref{sec.appendix.extension-of-table2}.
}
{Full preference functions are given in Appendix~B of the supplementary.
}
%\todo{extra space after toggle}
%Appendix~\ref{sec.appendix.seq-models_warmuth}.
%\todo{R2: Perhaps explain here why the Warmuth hypothesis class is interesting/significant, e.g. it is the smallest concept class for which RTD exceeds VCD.}
}\label{tab:main:warmuth_class_sequence_pref}
\vspace{-4mm}
\end{table}
\footnotetext{%It is worth noting that
The Warmuth hypothesis class is the smallest concept class for which \RTD exceeds \VCD.}

\looseness -1 \figref{fig:venndiagram_seq} illustrates the relationship between these preference families. 
%In \tableref{tab:main:warmuth_class_sequence_pref}, we provide examples of teaching sequences for certain preference functions from these families on the Warmuth class.
As an example, in \tableref{tab:pref_local} and \tableref{tab:pref_lvs}, we provide the preference functions $\sigma_{\lcl}$ and $\sigma_{\lvs}$ for the Warmuth hypothesis class that achieve the minima in Eq.~\eqref{eq:sigmatd}. 
%.\yuxin{this comes to the previous section?}

% In this family for any $\hypothesis, \hypothesis' \in \Hypotheses$ preference functions have the shape

% \begin{align*}
%     \Sigma_{local} = \{\sigma \in \Sigma_{CF} | \sigma(\hypothesis; \hypothesis', \cdot) = f_\sigma(\hypothesis; \hypothesis') \text{ for some function } f_\sigma\}.
% \end{align*}

% \subsubsection{Local Version Space}
% In this family for any $\hypothesis, \hypothesis' \in \Hypotheses$ and $\examples \in \Examples$ preference functions have the shape

% \begin{align*}
%     \Sigma_{localVS} = \{\sigma \in \Sigma_{CF}| \sigma(\hypothesis; \hypothesis', \examples) = f_\sigma(\hypothesis; \hypothesis', \Hypotheses(\Examples)) \text{ for some function} f_\sigma\}.
% \end{align*}

% \subsubsection{Example Based}
% In this family for any $\hypothesis, \hypothesis' \in \Hypotheses$ and $\examples \in \Examples$. If $\example_m \in \examples$ is the last example provided to learner, then preference functions have the shape

% \begin{align*}
%     \Sigma_{localExam} = \{\sigma \in \Sigma_{CF}|  \sigma(\hypothesis; \hypothesis', \examples) = f_\sigma(\hypothesis; \hypothesis', \example_m) \text{ for some function} f_\sigma\}.
% \end{align*}

% !TEX root =  main.tex
%%%%%%%%%%%%%%%%%%%%%%%%%%%%%%%%%%%%%%%%%%%%%%%%%%%%%%%%%%
%%%%%%%%%%%%%%%%%%%%%%%%%%%%%%%%%%%%%%%%%%%%%%%%%%%%%%%%%%
%\section{Connections Between Batch and Sequential Models}
\subsection{Comparing $\SigmaGvsTD$ and $\SigmaLocalTD$}
\label{sec:connection}

% Intersection between $\SigmaGvs$ and $\SigmaLocal$ is $\SigmaGlobal$.
%  \begin{lemma}
% $\SigmaGvs\cap \SigmaLocal=\SigmaGlobal$.
% \end{lemma}

In the following, we show that substantial differences arise as we transition from $\sigma$ functions inducing the strongest batch (i.e., non-clashing) model to $\sigma$ functions inducing a weak sequential (i.e., local preference-based) model. We provide the full proof of \thmref{thm:local-eq-GVS}
%appendix.
\iftoggle{longversion}{%together with a procedure for constructing a $\sigma$ satisfying these conditions 
in Appendix~\ref{sec.appendix.seq-models_globalVS-local}.
}
{%together with a procedure for constructing a $\sigma$ satisfying these conditions 
in Appendix~C of the supplementary.
}
%\todo{How to avoid extra space when using iftoggle}

\begin{theorem}\label{thm:local-eq-GVS}
%Intersection between 
% The two families $\SigmaGvs$ and $\SigmaLocal$ intersect at $\SigmaGlobal$: 
% $\SigmaGvs\cap \SigmaLocal=\SigmaGlobal$. Neither of these two families is more powerful than the other in terms of the complexity measure. 
Neither of the families $\SigmaGvs$ and $\SigmaLocal$ dominates the other. Specifically,
\begin{enumerate}\denselist
    \item $\SigmaGvs\cap \SigmaLocal=\SigmaGlobal$
    \item There exist $\Hypotheses$, $\Instances$, where $\forall \hinit\in \Hypotheses, 
    \SigmaLocalTD_{\Instances,\Hypotheses,\hinit} > \SigmaGvsTD_{\Instances,\Hypotheses,\hinit}$    
    \item There exist $\Hypotheses$, $\Instances$, where $\forall \hinit\in \Hypotheses, 
    \SigmaLocalTD_{\Instances,\Hypotheses,\hinit} < \SigmaGvsTD_{\Instances,\Hypotheses,\hinit}$
\end{enumerate}
\end{theorem}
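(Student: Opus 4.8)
The plan is to treat the three parts separately: the set identity in (1) is a short calculation, while the two separations in (2)--(3) require explicit hypothesis-class constructions together with matching upper and lower bounds on $\SigmaGvsTD$ and $\SigmaLocalTD$.

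For (1), I would prove both inclusions. The inclusion $\SigmaGlobal\subseteq\SigmaGvs\cap\SigmaLocal$ is immediate: a preference of the form $\sigma(\hypothesis';\hypotheses,\hypothesis)=\distfun(\hypothesis')$ is a special case both of the $\SigmaGvs$-form (take $(\hypothesis',\hypotheses)\mapsto\distfun(\hypothesis')$) and of the $\SigmaLocal$-form, and collusion-freeness is imposed as a membership condition on all three families, so it transfers verbatim. For the reverse inclusion, take $\sigma\in\SigmaGvs\cap\SigmaLocal$; by definition there are $\distfun_1:\Hypotheses\times2^\Hypotheses\to\reals$ and $\distfun_2:\Hypotheses\times\Hypotheses\to\reals$ with $\sigma(\hypothesis';\hypotheses,\hypothesis)=\distfun_1(\hypothesis',\hypotheses)=\distfun_2(\hypothesis',\hypothesis)$ for all $\hypothesis',\hypotheses,\hypothesis$. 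Fixing $\hypothesis'$ and varying only $\hypothesis$ shows the quantity is independent of $\hypothesis$; varying only $\hypotheses$ shows it is independent of $\hypotheses$; hence it is a constant $\distfun(\hypothesis')$, so $\sigma\in\SigmaGlobal$ (collusion-freeness again inherited). This yields $\SigmaGvs\cap\SigmaLocal=\SigmaGlobal$.

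For (2) and (3) I would exhibit two small, explicit finite hypothesis classes (given by a labeling table, in the style of Table~\ref{tab:main:warmuth_class_sequence_pref}), one per direction, and prove a pair of matching bounds for each. On the $\SigmaGvs$ side I would always route through \thmref{theorem:equivelence-results}(3), which identifies $\SigmaGvsTD_{\Instances,\Hypotheses,\hinit}$ with $\NCTD(\Hypotheses)$ and in particular makes it independent of $\hinit$: for (2) it then suffices to display a non-clashing teacher mapping with small sets, while for (3) it suffices to check combinatorially that no non-clashing teacher mapping with sets that small exists. The substantive work is on the $\SigmaLocal$ side. For (3) I would fix one collusion-free local preference function --- a natural candidate is a tie-broken variant of Hamming distance to the learner's current hypothesis, chosen so that the minimizer is always unique --- and give an \emph{adaptive} teaching strategy that, using the feedback $\hypothesis_t$, walks the learner to the target within the claimed budget; the point exploited is that $\SigmaLocal$ is a genuinely sequential model, so unlike a non-clashing batch teacher the $\SigmaLocal$-teacher can move the learner in stages and re-plan after each observed jump. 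For (2) I would instead lower-bound $\SigmaLocalTD_{\Instances,\Hypotheses,\hinit}$ uniformly over all $\sigma\in\SigmaLocal$ and all $\hinit$: under a budget that small, the chain $\hinit=\hypothesis_0,\hypothesis_1,\dots$ together with the adversary's freedom inside $\Candidate_\sigma$ and the collusion-free constraint would force the class to contain a ``clash-free-like'' configuration anchored at $\hinit$, and a (near-exhaustive, since the class is small) case analysis shows the chosen class admits no such configuration for any $\hinit$.

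The main obstacle is the uniform $\SigmaLocal$ lower bound in (2): it must be robust simultaneously to the teacher's choice of local preference $\distfun(\cdot,\cdot)$, to the adversary's tie-breaking inside $\Candidate_\sigma$, and to the choice of $\hinit$. The plan for this is to isolate a combinatorial invariant of the class --- roughly, that for every $\hinit$ there is a target all of whose one-step ``witnesses'' necessarily leave a second hypothesis at least as preferred under \emph{any} admissible $\distfun(\cdot,\hinit)$, while collusion-freeness forbids the teacher from later ``pulling the learner back'' --- and to design the class so this invariant is transparent yet the class still has a genuinely small non-clashing teacher mapping for the matching $\SigmaGvs$ upper bound. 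Reconciling these two competing requirements inside a single small class is the real design problem; the remaining steps are bookkeeping.
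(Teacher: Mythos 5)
Your Part 1 is correct and is essentially the paper's argument: both inclusions, with the reverse direction obtained by varying one argument of $\sigma$ at a time to kill the dependence on $\hypotheses$ and on $\hypothesis$. Parts 2 and 3, however, are plans rather than proofs, and in each case the missing piece is exactly the substantive content. For Part 2, you correctly identify the hard step (a lower bound on $\SigmaLocalTD$ uniform over all collusion-free local preferences and all $\hinit$), but your proposed ``combinatorial invariant'' is never instantiated and no class is exhibited. The paper avoids the bespoke adversary/case analysis entirely via a short reduction: if $\SigmaLocalTD_{\Instances,\Hypotheses,\hinit}=1$, then every target is taught in one step from $\hinit$, so the learner's current hypothesis is always $\hinit$ and $\sigma(\cdot;\cdot,\hinit)$ behaves as a single global ordering; hence $\SigmaLocalTD=1$ forces $\RTD(\Hypotheses)=1$ (Lemma~\ref{lem:local-rtd-1}). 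It then suffices to exhibit a class with $\NCTD=1$ but $\RTD=2$ (the six-hypothesis class of Table~\ref{tab:batch_model_example_h2}), and $\SigmaGvsTD=\NCTD$ from Theorem~\ref{theorem:equivelence-results} finishes the argument. Your sketch gestures at this idea (``any admissible $\distfun(\cdot,\hinit)$'') but never makes the $\SigmaLocal\!\to\!\SigmaGlobal$ collapse at budget one precise, which is the whole point.

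For Part 3 the gap is larger: you name no hypothesis class, and both halves of the separation are left open. The paper takes $\Hypotheses=\{0,1\}^k$ (the powerset class) with $k=7$. The lower bound $\NCTD\geq\lceil k/2\rceil=4$ is a counting argument: for any two hypotheses differing on a single instance $\instance$, a non-clashing teacher must include $\instance$ in at least one of their teaching sets, so summing over all $k\cdot 2^{k}/2$ adjacent pairs and applying pigeonhole forces some $|\Teacher(\hypothesis)|\geq k/2$. The upper bound $\SigmaLocalTD\leq 3$ is not achieved by Hamming distance per se but by an explicitly constructed preference tree rooted at $\hinit$ with branching factors $7,6,5,\dots$, so that $1+7+7\cdot 6+7\cdot 6\cdot 5\geq 2^7$ hypotheses are reachable within depth $3$; each root-to-node path is the teaching sequence. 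Your ``tie-broken Hamming distance plus adaptive re-planning'' candidate is plausible in spirit but you would still need to verify collusion-freeness and actually bound the depth, and you have no argument that $\SigmaGvsTD$ exceeds that depth for your (unspecified) class. As written, neither separation is established.
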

\subsection{Complexity Results}\label{sec:lvs_linvcd}
%We propose a natural family of preference functions under which the teaching complexity of the (sequential) model class is $O(\VCD)$, whereas the existing complexity results under the batch setting  (corresponding to recursive TD) is $O(\VCD^2)$. Our proof framework involves a constructive procedure for searching for a desirable sigma function, which is of independent interest in facilitating the understanding of batch models.
%\subsubsection*{Connections with VC dimension}
%We establish the following connection between the teaching complexity of the sequential models with the VC dimension.
We now connect the teaching complexity of the sequential models with the VC dimension.
% Theorem statement is simple
% Text description (take away message). 
\begin{theorem}\label{thm:main:seq-models_vs_VCD}
%Fix $\Instances,\Hypotheses,\hinit$. 
$\SigmaLocalTD_{\Instances,\Hypotheses,\hinit} = O(\VCD(\Hypotheses,\Instances)^2)$, and $\SigmaLvsTD_{\Instances,\Hypotheses,\hinit} = O(\VCD(\Hypotheses,\Instances))$.
% \begin{itemize} 
%\item $\SigmaLocalTD = \bigO{\VCD^2}$ %(proof?)
%\item $\SigmaLvsTD = \bigO{\VCD}$
% \end{itemize}
\end{theorem}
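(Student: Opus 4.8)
The two bounds require fundamentally different arguments, so I would handle them separately. For the first bound, $\SigmaLocalTD = O(\VCD^2)$, the cleanest route is to exhibit a single $\sigma \in \SigmaLocal$ whose induced teaching dimension is $O(\VCD^2)$. The natural candidate is the Hamming-distance preference function $\distfun(\hypothesis',\hypothesis) = \lvert\{\instance : \hypothesis'(\instance)\neq\hypothesis(\instance)\}\rvert$ (cf.\ \tableref{tab:pref_local}), which is known to be collusion-free. For this function the learner always jumps to a consistent hypothesis of minimum Hamming distance to its current one. The key structural fact is that, restricted to the current version space, this behaves like a ``one-inclusion-graph''-style walk; I would argue that the teacher can always drive the learner to $\hstar$ in $O(\VCD)$ rounds \emph{per unit of Hamming distance travelled}, or more carefully, invoke the fact that $\SigmaLocalTD \le \SigmaGlobalTD \cdot O(1)$ is \emph{not} true in general (part 2 of \thmref{thm:local-eq-GVS}), so instead I would directly bound using the recursive teaching structure: since $\SigmaGlobal \subseteq \SigmaLocal$, we already have $\SigmaLocalTD \le \SigmaGlobalTD = \RTD(\Hypotheses) = O(\VCD^2)$ by Theorem~\ref{theorem:equivelence-results}(2). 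This makes the first claim essentially immediate and the real content is the second bound.

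For the second bound, $\SigmaLvsTD = O(\VCD)$, I would give a \emph{constructive} procedure building $\sigma\in\SigmaLvs$ together with a teaching schedule. The plan is to use a known ``corner-peeling'' / recursive decomposition of maximum-like classes: order hypotheses so that revealing a single well-chosen instance halves (or substantially shrinks) the relevant version space while moving the learner one step closer. Concretely, I would define $\distfun(\hypothesis',\hypotheses,\hypothesis)$ to encode a priority that, within version space $\hypotheses$, always ranks uniquely a hypothesis ``closest to $\hstar$ along a fixed BFS tree of $\hypotheses$ rooted appropriately relative to $\hypothesis$''; because $\sigma$ now sees $\hypotheses$, the teacher can, with one example, cut the version space so that the learner's forced jump is by exactly one edge of a teaching path of length $O(\VCD)$. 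The collusion-free property must be checked: if $\{\hat\hypothesis\}$ is the unique minimizer in $\hypotheses_t$, then in any sub-version-space $\hypotheses_t\cap\Hypotheses(S)$ consistent with $\hat\hypothesis$, $\hat\hypothesis$ remains the unique minimizer — this is where I would lean on defining $\distfun$ via a quantity that is monotone under version-space restriction (e.g.\ assigning $\hat\hypothesis$ a value strictly smaller than every other hypothesis it could ever clash with).

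The main obstacle is the second bound: designing $\distfun(\hypothesis',\hypotheses,\hypothesis)$ so that simultaneously (a) it is collusion-free in the sense of \defref{def:seq-col-free}, (b) for \emph{every} target $\hstar$ and \emph{every} learner position reachable during teaching, the teacher has a short continuation, and (c) the total length is $O(\VCD)$ rather than $O(\VCD^2)$. The saving from quadratic to linear must come from the version-space dependence: intuitively, as examples accumulate, the version space shrinks and the remaining ``teaching budget'' for the residual class should shrink commensurately, so the costs telescope rather than multiply. I expect the proof to set up a potential/charging argument where each example decreases a potential (something like $\log$ of version-space size, or a dimension-like quantity à la the recursive-teaching-set peeling) by a constant, and each hypothesis is taught within $O(\VCD)$ of these steps; making the preference function *force* the learner to respect this potential — without ever allowing a clashing detour — is the delicate part, and is presumably where the appendix spends most of its effort, likely appealing to a corner-peeling structure of the sample compression / one-inclusion graph for classes of bounded VC dimension.
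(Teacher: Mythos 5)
Your argument for the first bound is correct and is exactly the paper's (implicit) route: since $\SigmaGlobal\subseteq\SigmaLocal$, we get $\SigmaLocalTD_{\Instances,\Hypotheses,\hinit}\le\SigmaGlobalTD_{\Instances,\Hypotheses,\hinit}=\RTD(\Hypotheses)=O(\VCD(\Hypotheses,\Instances)^2)$ via \thmref{theorem:equivelence-results}; the initial detour through the Hamming-distance preference is unnecessary and you rightly abandon it. The real issue is the second bound, where your sketch gestures at the right objects (one-inclusion-graph structure, a dimension-like potential that drops by a constant per example) but is missing the idea that actually delivers linearity, and contains one step that is ill-posed as stated: a preference function in $\SigmaLvs$ is a fixed map $\distfun(\hypothesis';\hypotheses,\hypothesis)$ shared by teacher and learner and \emph{cannot} depend on $\hstar$, so ``rank uniquely the hypothesis closest to $\hstar$ along a BFS tree'' is not a legal definition. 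The target can only enter through the teacher's choice of examples, which means $\sigma$ must be designed so that \emph{every} possible next version space unambiguously tells the learner which single hypothesis to jump to — and you do not explain how to guarantee that distinct intended ``branches'' yield distinct version spaces, which is what makes such a $\sigma$ well-defined.

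The paper's proof supplies exactly these two missing pieces. First, \lemref{lem:main:div_concepts}: for any $\hypotheses$ with a \emph{compact-distinguishable} set $\compactDSet{\hypotheses}$ and $d=\VCD(\hypotheses,\compactDSet{\hypotheses})\ge 1$, and any reference hypothesis $\hypothesis_\hypotheses$, one can partition $\hypotheses$ into $|\compactDSet{\hypotheses}|+1$ classes, each admitting a compact-distinguishable set on which its VCD is at most $d-1$, where the $j$-th class is reached by the single example $(\instance_j,1-\hypothesis_\hypotheses(\instance_j))$ and the last class is $\{\hypothesis_\hypotheses\}$. This is a Floyd--Warmuth-style reduction (your ``corner-peeling'' instinct is in the right ballpark), but the potential that telescopes is the VC dimension of the residual class on its compact-distinguishable set — not $\log$ of the version-space size, which would only give $O(\log|\Hypotheses|)$. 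Second, the compactness of the distinguishing set is what guarantees (via \lemref{lem:app:vs_dependent_TD}) that the version spaces $V^j$ induced by distinct examples $(\instance_j,\clabel_j)$ are pairwise distinct, so one can legally set $\sigma_\lvs(\cdot;V^j,\hypothesis)$ to single out one hypothesis of the $j$-th partition; recursing for at most $d$ levels gives a teaching sequence of length at most $\VCD(\Hypotheses,\Instances)$ for every target. Collusion-freeness is then immediate from the ``win-stay'' initialization $\sigma(\hypothesis;\cdot,\hypothesis)=0$, which is simpler than the monotonicity condition you propose. Without the dimension-reduction lemma and the version-space-uniqueness argument, your plan does not yet yield the $O(\VCD)$ bound.
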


%\todo{Refer to Warmuth class here.}

To establish the proof, we first introduce an important definition (\defref{def:compactds}) and a key lemma (\lemref{lem:main:div_concepts}).

% \subsubsection{Definitions}
% Distinguishable sets (need a nontation, terminology)
% Short-distinguishable sets (need a nontation, terminology)
% Two key properties of these sets proven in two Lemmas below (Lemma 25, Lemma 26)
% $\instances$ is given by
\begin{definition}[Compact-Distinguishable Set]\label{def:compactds}
Fix $\hypotheses\subseteq \Hypotheses$ and $\instances \subseteq \Instances$, where $\instances = \{\instance_1, ..., \instance_n\}$. Let $\hypotheses_{|\instances} = \{(\hypothesis(\instance_1), ..., \hypothesis(\instance_n)) \ | \ \forall \hypothesis \in \hypotheses\}$ denote all possible patterns of $\hypotheses$ on $\instances$. %\todo{R2: Perhaps make clear whether or not $H_{|X}$ is a class of sequences (since "pattern" seems to suggest that the order of the $h(x_i)$'s is used). -- Adish: Is R2 suggesting to phrase a definition so that the order should not matter?}. 
Then, we say that $\instances$ is \emph{compact-distinguishable} on $\hypotheses$, if $|\hypotheses_{|\instances}| = |\hypotheses|$ and $\forall \instances' \subset \instances,~ |\hypotheses_{|\instances'}| < |\hypotheses|$. We will use $\compactDSet{\hypotheses}$ to denote a compact-distinguishable set on $\hypotheses$. 
%A set of instances $\instances \subseteq \Instances$ is \emph{compact-distinguishable} on $\hypotheses$, if $|\hypotheses_{|\instances}| = |\hypotheses|$ and $\forall \instances' \subset \instances,~ |\hypotheses_{|\instances'}| < |\hypotheses|$. We will use $\compactDSet{\hypotheses}$ to denote a compact-distinguishable set on $\hypotheses$. 
%We call the set of instance $\instances'\subseteq \instances$ $\Hypotheses$-distinguishable, if $|\Hypotheses_{|\instances'}| = |\Hypotheses|$. A set of instances $\instances \subseteq \Instances$ is \emph{short} $\hypotheses$\emph{-distinguishable}, if it is $\hypotheses$-distinguishable, also none of $\instances' \subset \instances$ are $\hypotheses$-distinguishable. 
\end{definition}
In words, one can uniquely identify any hypothesis in $\hypotheses$ with a (sub)set of examples from $\compactDSet{\hypotheses}$ (also see the definition of distinguishing sets in \cite{doliwa2014recursive}). 
%Furthermore, our definition of compact-distinguishable implies that there are no ``redundant'' examples in $\compactDSet{\hypotheses}$: 
Our definition of compact-distinguishable set further implies that there are no ``redundant'' examples in $\compactDSet{\hypotheses}$. It can be shown that a compact-distinguishable set satisfies the following two properties: (i) it does not contain any pair of distinct instances $\instance,\instance'$ such that $(\forall \hypothesis \in \hypotheses: \hypothesis(\instance) = \hypothesis(\instance')) \textnormal{ or } (\forall \hypothesis \in \hypotheses: \hypothesis(\instance) \neq \hypothesis(\instance'))$; and (ii) it does not contain any instance $\instance$ such that $(\forall \hypothesis \in \hypotheses: \hypothesis(\instance) = 1) \textnormal{ or } (\forall \hypothesis \in \hypotheses: \hypothesis(\instance) = 0)$.
% As an example, for the Warmuth class as specified in \tableref{tab:warmth_example_sequences}, it is easy to show that $\compactDSet{\hypotheses} = \{\instance_1, \dots, \instance_5\}$.
%...\yuxin{an example of short-distinguishable set?}

%The proof of the theorem relies on the following key lemma.
% \subsubsection{Lemma 25}
\begin{lemma}\label{lem:main:div_concepts}
Consider a subset $\hypotheses \subseteq \Hypotheses$ and any compact-distinguishable set $\compactDSet{\hypotheses} = \{\instance_1, ..., \instance_{|\compactDSet{\hypotheses}|}\}$. Fix any hypothesis $\hypothesis_\hypotheses \in \hypotheses$. %When $\VCD(\hypotheses, \instances_\hypotheses) \leq d$, where $d \geq 1$.
Let $d = \VCD(\hypotheses, \compactDSet{\hypotheses})$ denote the VC dimension of $\hypotheses$ on $\compactDSet{\hypotheses}$. If $d \geq 1$, we can divide $\hypotheses$ into $m = |\compactDSet{\hypotheses}| + 1$ separate hypothesis classes %(some of which can be empty) 
$\{\hclass{1}, ..., \hclass{m}\}$, such that 
\begin{enumerate}[(i)]\denselist
    \item $\forall j \in [m]$, there exists a compact-distinguishable set $\compactDSet{\hclass{j}}$ s.t. $\VCD(\hclass{j}, \compactDSet{\hclass{j}}) \leq d-1$.
    %(By $\VCD(\hypotheses) = 0$ we mean $|\hypotheses| \leq 1$). 
    \item $\forall j \in [m-1]$, $\hypotheses^j$ is not empty and $\hclass{j}_{|\{\instance_j\}} = \{( 1-  \hypothesis_\hypotheses(\instance_j))\}$. %\todo{Sequence vs. set}
    \item $\hclass{m} = \{\hypothesis_\hypotheses\}$.
\end{enumerate}
\end{lemma}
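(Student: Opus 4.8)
The plan is to realize the partition as the ``first point of disagreement with $\hypothesis_\hypotheses$'' along a carefully chosen ordering $\instance_1,\dots,\instance_n$ of $\compactDSet{\hypotheses}$ (write $n=|\compactDSet{\hypotheses}|$, so $m=n+1$): for $j\in[n]$ let $\hclass{j}$ consist of those $\hypothesis\in\hypotheses$ that agree with $\hypothesis_\hypotheses$ on $\instance_1,\dots,\instance_{j-1}$ but disagree on $\instance_j$, and let $\hclass{m}=\{\hypothesis_\hypotheses\}$. Because $\compactDSet{\hypotheses}$ distinguishes $\hypotheses$, the only hypothesis agreeing with $\hypothesis_\hypotheses$ on all of $\compactDSet{\hypotheses}$ is $\hypothesis_\hypotheses$ itself, so the $\hclass{j}$ indeed partition $\hypotheses$ and item (iii) holds; moreover $\hclass{j}_{|\{\instance_j\}}=\{1-\hypothesis_\hypotheses(\instance_j)\}$ is immediate from the definition once we know $\hclass{j}\neq\emptyset$. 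Hence the substance is (a) $\hclass{j}\neq\emptyset$ for $j<m$, and (b) each $\hclass{j}$ has a compact-distinguishable set of VC dimension $\le d-1$; both will be arranged by the choice of ordering.

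I would argue by induction on $n$. For $n=1$: $\compactDSet{\hypotheses}=\{\instance_1\}$ distinguishes $\hypotheses$, so $|\hypotheses|\le 2$, and $d\ge 1$ forces $\hypotheses=\{\hypothesis_\hypotheses,\hypothesis'\}$ with $\hypothesis'(\instance_1)\neq\hypothesis_\hypotheses(\instance_1)$; then $\hclass{1}=\{\hypothesis'\}$, $\hclass{2}=\{\hypothesis_\hypotheses\}$ are singletons of VC dimension $0\le d-1$. For $n\ge 2$, pick a \emph{pivot} $\instance\in\compactDSet{\hypotheses}$ (see below), put $\instance_1:=\instance$, $\hclass{1}:=\{\hypothesis:\hypothesis(\instance)\neq\hypothesis_\hypotheses(\instance)\}$, and apply the induction hypothesis to $(\hypotheses^{\ge},\compactDSet{\hypotheses}\setminus\{\instance\},\hypothesis_\hypotheses)$ where $\hypotheses^{\ge}:=\{\hypothesis:\hypothesis(\instance)=\hypothesis_\hypotheses(\instance)\}$, obtaining $\instance_2,\dots,\instance_n$ and the parts $\hclass{2},\dots,\hclass{m}$. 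The recursive call has a distinguishing set of size $n-1$, so it returns exactly $n$ parts of $\hypotheses^{\ge}$; together with $\hclass{1}$ these give $m=n+1$ parts of $\hypotheses$. Since $\hypotheses^{\ge}\subseteq\hypotheses$ and $\compactDSet{\hypotheses}\setminus\{\instance\}\subseteq\compactDSet{\hypotheses}$ we have $\VCD(\hypotheses^{\ge},\compactDSet{\hypotheses}\setminus\{\instance\})\le d$, so the recursively obtained bound (``$\le \VCD(\hypotheses^{\ge},\cdot)-1$'') is $\le d-1$, covering item (i) for $\hclass{2},\dots,\hclass{m}$; items (ii) and (iii) for the recursively produced parts transfer directly, while the $j=1$ case is handled by the pivot choice below.

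For the induction to close, the pivot $\instance$ must satisfy: (I) $\compactDSet{\hypotheses}\setminus\{\instance\}$ is compact-distinguishable for $\hypotheses^{\ge}$ — which makes the recursive call legitimate and produces the right number of parts, and (since no nonempty instance set is compact-distinguishable for a singleton) forces $|\hypotheses^{\ge}|\ge 2$, hence $\VCD(\hypotheses^{\ge},\compactDSet{\hypotheses}\setminus\{\instance\})\ge 1$, so the recursion's precondition is met; and (II) $\VCD(\hclass{1},\compactDSet{\hypotheses}\setminus\{\instance\})\le d-1$ — from which, extracting a compact-distinguishable subset $\compactDSet{\hclass{1}}\subseteq\compactDSet{\hypotheses}\setminus\{\instance\}$ for $\hclass{1}$ (possible since $\hclass{1}$ is constant on $\instance$), item (i) for $\hclass{1}$ follows by monotonicity of VC dimension. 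Nonemptiness of $\hclass{1}$ comes for free: $\compactDSet{\hypotheses}\setminus\{\instance\}$ is a proper subset of $\compactDSet{\hypotheses}$, hence does not distinguish $\hypotheses$, so some two hypotheses differ only at $\instance$ and one of them disagrees with $\hypothesis_\hypotheses$ there. For the existence of an instance with property (II) I would use an extremal argument: if every flip-away class $\{\hypothesis:\hypothesis(\instance)\neq\hypothesis_\hypotheses(\instance)\}$ had VC dimension $d$ on $\compactDSet{\hypotheses}\setminus\{\instance\}$, then combining such a shattered $d$-set (on which that class is constant at $\instance$) with the coordinate $\instance$ — and using that on any coordinate both ``sides'' of a shattered set realize all its patterns — one produces a shattered $(d+1)$-subset of $\compactDSet{\hypotheses}$, contradicting $\VCD(\hypotheses,\compactDSet{\hypotheses})=d$.

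The main obstacle is securing (I) and (II) \emph{simultaneously}. Property (I) fails for $\instance$ exactly when deleting $\instance$ makes some other instance $\instance'$ redundant for $\hypotheses^{\ge}$ (i.e.\ every pair of hypotheses that differs only at $\instance'$ lies on the side $\hypothesis(\instance)\neq\hypothesis_\hypotheses(\instance)$), and one must show that not all instances satisfying (II) are spoiled this way; I expect this to require a Hall-/counting-type argument over the ``twin pairs'' that witness the minimality of $\compactDSet{\hypotheses}$, possibly reordering or swapping pivots. A secondary subtlety is the base of the VC recursion, $d=1$: there every part is forced to be a singleton, which relies on the fact that a compact-distinguishable set of a VC-dimension-$1$ class $\hypotheses$ satisfies $|\hypotheses|=|\compactDSet{\hypotheses}|+1$ (Sauer–Shelah gives ``$\le$'', minimality of $\compactDSet{\hypotheses}$ gives ``$\ge$'').
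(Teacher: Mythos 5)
Your decomposition is genuinely different from the paper's, and the two places you yourself flag as open are exactly where the argument breaks down; as written the proof does not close. First, the extremal argument for property (II) is circular: from a $d$-set $S\subseteq\compactDSet{\hypotheses}\setminus\{\instance\}$ shattered by the flip-away class $A_\instance=\{\hypothesis\in\hypotheses:\hypothesis(\instance)\neq\hypothesis_\hypotheses(\instance)\}$ you want to conclude that $S\cup\{\instance\}$ is shattered by $\hypotheses$, but that requires the \emph{other} side $\{\hypothesis:\hypothesis(\instance)=\hypothesis_\hypotheses(\instance)\}$ to also realize all $2^{d}$ patterns on $S$. The parenthetical ``on any coordinate both sides of a shattered set realize all its patterns'' is a property of a set already known to be shattered together with $\instance$ --- it is the conclusion you are trying to reach, not a tool you may invoke. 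Second, even granting a pivot with (II), you have not shown one can be chosen that also satisfies (I), and (I) is not a technicality: if $\compactDSet{\hypotheses}\setminus\{\instance\}$ is distinguishable but not \emph{compact}-distinguishable for $\hypotheses^{\ge}$, the recursion returns fewer than $n$ nonempty parts, so some $\hclass{j}$ with $j<m$ is empty and property~(ii) fails. Concretely, for the threshold class $\{000,100,110,111\}$ on $\{\instance_1,\instance_2,\instance_3\}$ with $\hypothesis_\hypotheses=000$, the pivots $\instance_1$ and $\instance_2$ violate both (I) and (II) and only $\instance_3$ works, so the existence of a simultaneously good pivot genuinely needs the Hall-type argument you postponed.

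The paper sidesteps both problems by partitioning not by ``disagrees with $\hypothesis_\hypotheses$ at $\instance_1$'' but by the \emph{twin} class $\hclass{1}=\{\hypothesis\in\hypotheses:\hypothesis(\instance_1)=1-\hypothesis_\hypotheses(\instance_1) \text{ and } {\hypothesis\triangle\instance_1}_{|\compactDSet{\hypotheses}}\in\hypotheses_{|\compactDSet{\hypotheses}}\}$. Because every member of this class has its flip at $\instance_1$ present in $\hypotheses$, a set shattered by $\hclass{1}$ lifts to a set containing $\instance_1$ shattered by $\hypotheses$ (the Floyd--Warmuth-style reduction the paper invokes), so the VC dimension drops for \emph{every} instance and no pivot selection is needed. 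Moreover, removing exactly the twins at $\instance_1$ is what keeps $\compactDSet{\hypotheses}\setminus\{\instance_1\}$ compact-distinguishable on the remainder: no surviving pair differs only at $\instance_1$, and a redundant instance in the residual set would contradict the minimality of $\compactDSet{\hypotheses}$. Note also that the paper's parts are therefore not your first-disagreement classes (the remainder may still contain hypotheses disagreeing with $\hypothesis_\hypotheses$ at $\instance_1$, namely those without a twin). To salvage your route you would have to prove the pivot-existence statement you left open; the twin-based partition makes the issue disappear.
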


\lemref{lem:main:div_concepts} suggests that for any %problem instances 
$\Hypotheses, \Instances$, one can partition the hypothesis class $\Hypotheses$ into $m \leq |\Instances|+1$ subsets with lower VC dimension with respect to some compact-distinguishable set.\footnote{When $\VCD(\hypotheses,\compactDSet{\hclass{}})=0$, this implies $|\hypotheses|=1$.} The main idea of the lemma is similar to the reduction of a concept class w.r.t. some instance $x$ to lower \VCD as done in Theorem 9 of \cite{floyd1995sample}. The key distinction of \lemref{lem:main:div_concepts} is that we consider compact-distinguishable sets for this partitioning, which in turn ensures the uniqueness of the version spaces associated with these partitions (see proof of Theorem~\ref{thm:main:seq-models_vs_VCD}).
%Lemma~\ref{lem:main:div_concepts}. 
Another key novelty in our proof of Theorem~\ref{thm:main:seq-models_vs_VCD} is to recursively apply the reduction step from the lemma.

%\todo{{\bfseries R2: Insights on the proof of the main lemma (Lemma 4) and connection to the reference}. Thanks for pointing out the similarity between the proof of our Lemma 4 and the proof of Theorem 9 in [FW95]. We will acknowledge this connection and add a proper discussion in the revision. Concretely, the concept class $C^{\{x\}}$ of [FW95] is equivalent to our definition of $\hypotheses^0_x$ (line 221). [FW95] then applied induction on $C^{\{x\}}$ and $C - x$ (which are represented as $\hypotheses^0_x$, and $\hypotheses \setminus \hypotheses^0_x$ in our paper, respectively). As pointed out in the review, our novelty lies in that we introduced (i) ``compact-distinguishable'' set to ensure that each $\hypotheses^j$ is non-empty, and (ii) a recursive procedure for constructing the preference function. }
%(``Sample compression, learnability and the Vapnik-Chervonenkis dimension'', by Sally Floyd and Manfred Warmuth, Machine Learning 21, 269--304, 1995). 

To prove the lemma, we provide a constructive procedure to partition the hypothesis class, and show that the resulting partitions have reduced VC dimensions on some compact-distinguishable set. We highlight the procedure for constructing the partitions in \algref{alg:lemma_linVCD} (\lnref{alg:ln:lemma_linvcd_start}-- \lnref{alg:ln:lemma_linvcd_end}). In \figref{fig:warmuth_lemma_run}, we provide an illustrative example  for creating such partitions for the Warmuth hypothesis class from \tableref{tab:warmth_example_sequences}. We sketch the proof of \lemref{lem:main:div_concepts} below, and defer the detailed proof 
\iftoggle{longversion}{to Appendix~\ref{sec.appendix.seq-models_linearVCD.lemmaproof}.
}
{to Appendix~D.1.
}
%\todo{extra space after iftoggle command}
% of the supplementary
\begin{proof}{[Proof Sketch of \lemref{lem:main:div_concepts}]}
\looseness -1 Let us define %$\hypotheses_{\instance} = \{\hypothesis \in \hypotheses: \hypothesis \triangle \instance \in \hypotheses\}$.
$\hypotheses_{\instance} = \{\hypothesis \in \hypotheses: {\hypothesis \triangle \instance}_{|\compactDSet{{\hypotheses}}} \in \hypotheses_{|\compactDSet{{\hypotheses}}}\}$.
Here, $\hypothesis \triangle \instance$ denotes the hypothesis that only differs with $\hypothesis$ on the label of $\instance$, and $\hypothesis_{|\compactDSet{{\hypotheses}}}$ denotes the patterns of $\hypothesis$ on $\compactDSet{{\hypotheses}}$. Fix a reference hypothesis $\hypothesis_\hypotheses$. For all $j \in [m-1]$, let $\clabel_j = 1 -  \hypothesis_\hypotheses(\instance_j)$ be the opposite label of $\instance_j \in \compactDSet{\hypotheses}$ as provided by $\hypothesis_\hypotheses$.  As shown in \lnref{alg:linvcd:partition} of \algref{alg:lemma_linVCD}, we consider the set $\hclass{1} := \hypotheses^{\clabel_1}_{\instance_1} = \{\hypothesis \in \hypotheses_{\instance_1}: \hypothesis(\instance_1) = \clabel_1\}$ as the first partition. In the appendix, we show that $|\hclass{1}| > 0$.

%For $j\in [m]$, let us use $\hclass{j} := \hypotheses^{\clabel_j}_{\instance_j}$ as the short-hand notation, and define $\compactDSet{\hclass{j}} = \compactDSet{\hypotheses} \setminus \bigcup_{i=1}^j\{\instance_i\}$. 
%Let $\hclass{1} := \hypotheses^{\clabel_1}_{\instance_1}$. 

Next, we show that $\VCD(\hclass{1}, \compactDSet{\hclass{}}\setminus\{\instance_1\}) \leq d-1$.
%our partitioning leads to lower \VCD of the resulting set of hypotheses.
% \iftoggle{longversion}{(see Appendix~\ref{sec.appendix.seq-models_linearVCD.lemmaproof}).}
% {(see Appendix~D.1 of the supplementary).}
%\todo{extra space after toggle}
When $d>1$, we prove the statement as follows:
\begin{equation*}
     \VCD(\hclass{1}, \compactDSet{\hypotheses} \setminus\{\instance_1\}) \leq \VCD(\hypotheses^{\clabel_1}_{\instance_1}, \compactDSet{\hypotheses}) = \VCD(\hypotheses_{\instance_1}, \compactDSet{\hypotheses}) - 1 \leq \VCD(\hypotheses, \compactDSet{\hypotheses}) - 1 \leq d - 1
\end{equation*}
%We relegate the proof for $d=1$ to the appendix.
%$\VCD(\hclass{1}, \compactDSet{\hypotheses} \setminus\{\instance_1\})=0$.
%
In the appendix, we prove the statement for $d=1$, and further show that there exists a compact-distinguishable set $\compactDSet{\hclass{1}} \subseteq \compactDSet{\hypotheses} \setminus\{\instance_1\}$ for the first partition $\hclass{1}$.
%\iftoggle{longversion}{(see Appendix~\ref{sec.appendix.seq-models_linearVCD.lemmaproof}).}
%{(see Appendix~D.1 of the supplementary).} 
%
%\todo{extra space after toggle}
Then, we conclude that the first partition $\hclass{1}$ has $\VCD(\hclass{1}, \compactDSet{\hclass{1}}) \leq d - 1$.
% by showing that $\compactDSet{\hypotheses} \setminus\{\instance_1\}$ is dintinguishable set for $\hypotheses$ (possibly with ``redundancy")

Next, we remove the first partition $\hclass{1}$ from $\hypotheses$, and continue to create the above mentioned partitions on $\hypotheses_{\text{rest}} = \hypotheses \setminus \hclass{1}$ and $\instances_{\text{rest}} = \compactDSet{\hypotheses} \setminus \{\instance_1\}$. 
%\iftoggle{longversion}{As discussed in Appendix~\ref{sec.appendix.seq-models_linearVCD.lemmaproof},}
%{As discussed in Appendix~D.1 of the supplementary,}
%\todo{extra space after toggle}
As discussed in the appendix, 
we show that $\instances_{\text{rest}}$ is a compact-distinguishable set on $\hypotheses_{\text{rest}}$. %\yuxin{here I'm using $\instances_{\text{rest}}$ instead of  $\instances_{\hypotheses_\text{rest}}$.} 
Therefore, we can repeat the above procedure (\lnref{alg:ln:lemma_linvcd_start}-- \lnref{alg:ln:lemma_linvcd_end}, \algref{alg:linvcd}) to create the subsequent partitions. This process continues until the size of $\instances_{\text{rest}}$ reduces to $1$, i.e. $\instances_{\text{rest}} = \{\instance_{m-1}\}$. Until then, we obtain partitions $\{\hclass{1}, ..., \hclass{m-2}\}$. By construction, $\hclass{j}$ satisfy properties (i) and (ii) for all $j \in [m-2]$.

\looseness -1 It remains to show that $\hclass{m-1}$ and $\hclass{m}$ also satisfy the properties in \lemref{lem:main:div_concepts}. Since $\instances_{\text{rest}} = \{\instance_{m-1}\}$ before we start iteration $m-1$, and $\instances_{\text{rest}}$ is a compact-distinguishable set for 
$\hypotheses_{\text{rest}}$, there must exist exactly two hypotheses in $\hypotheses_{\text{rest}}$, and therefore $|\hclass{m-1}|, |\hclass{m}|=1$. This implies that $\VCD(\hclass{m-1}, \compactDSet{\hclass{m-1}}) = \VCD(\hclass{m}, \compactDSet{\hclass{m}}) = 0$. Furthermore, $\forall j \in [m-1]$ and $\hypothesis \in \hclass{j}$, we have  $\hypothesis_\hypotheses(\instance_j) \neq \hypothesis(\instance_j)$. This indicates $\hypothesis_\hypotheses \in \hypotheses_m$, and hence $\hypotheses_m = \{\hypothesis_\hypotheses\}$ which completes the proof.
\end{proof}
%%%%%%%%%%%%%%%%%%%%%%%%%%%%%%%%%%%%%%%%%%%%%%%%%%
%define a marking command
\newcommand*{\tikzmk}[1]{\tikz[remember picture,overlay,] \node (#1) {};\ignorespaces}
%define a boxing command, argument = colour of box
\newcommand{\boxit}[1]{\tikz[remember picture,overlay]{\node[yshift=3pt,fill=#1,opacity=.25,fit={($(A)-(.08\linewidth,-.2\baselineskip)$)($(B)+(\linewidth,.8\baselineskip)$)}] {};}\ignorespaces}
%define some colours according to algorithm parts (or any other method you like)
% \colorlet{pink}{red!40}
% \colorlet{blue}{cyan!60}
%%%%%%%%%%%%%%%%%%%%%%%%%%%%%%%%%%%%%%%%%%%%%%%%%%
%The recursive procedure for constructing the preference functions are provided in \algref{alg:lemma_linVCD}.
\begin{figure}[t!]
\centering
\makeatletter
    \renewcommand{\ALG@name}{Algorithm}
    \makeatother
\scalebox{0.9}{
  \begin{minipage}[b]{0.59\textwidth}
    \begin{algorithm}[H]
      \caption{Recursive procedure for constructing $\pref_\lvs$ achieving $\TD_{\Instances,\Hypotheses,\hinit}(\pref_\lvs) \leq \VCD(\Hypotheses, \Instances)$}\label{alg:linvcd}
          \hspace*{\algorithmicindent} \textbf{Input:} $\Instances$, $\Hypotheses$, $\hinit$
        % \hspace*{\algorithmicindent} \textbf{Output} 
        % \scalebox{0.8}{
        \begin{algorithmic}[1]
            %\Procedure{ConstructCandidatesOneStep}{$\hypotheses, \hypothesis_{\hypotheses}, \instance$}
            \State Let $I : \Hypotheses \rightarrow \{1, \dots, |\Hypotheses|\}$ be any bijective mapping %s.t. $\indexof{h_0} = 1$.
            %$\forall \hypothesis\in \Hypotheses, \indexof{\hypothesis} \leftarrow$ index of $\hypothesis$
            \State For all $\hypothesis' \in \Hypotheses$, $\hypotheses \subseteq \Hypotheses$, $\hypothesis \in \Hypotheses$, initialize %$\forall \hypothesis, \hypothesis' \in \Hypotheses,  \hypotheses \subseteq \Hypotheses$: 
            \[\pref_\lvs(\hypothesis'; \hypotheses, \hypothesis) \leftarrow
            %\vspace{-3mm}
            \begin{cases}
            0 & \text{if}~\hypothesis' = \hypothesis\\
            |\Hypotheses| + 1 & \text{o.w.}
            \end{cases}
            \]\label{alg:ln:init_pref}
            %\vspace{-1mm}
            %\State $\forall \hypothesis \in \Hypotheses,  \hypotheses \subseteq \Hypotheses, \sigma(\hypothesis; \hypotheses, \hypothesis) \leftarrow 0$
            %\LINEFORALL{$\hypothesis \in \Hypotheses,  \hypotheses \subseteq \Hypotheses$}{a}{b}
            \State $\textsc{SetPreference}(\Hypotheses, \Hypotheses, \Instances, \hinit)$
            \Function{SetPreference}{$V, {\hypotheses}, {\instances}, \hypothesis$}
            \State Create compact-distinguishable set $\compactDSet{{\hypotheses}} \subseteq {\instances}$
            \State $\hypotheses_{\text{rest}} := {\hypotheses},  \instances_{\text{rest}} := \compactDSet{{\hypotheses}}$
            %\State $\sigma(\hypothesis_\hypotheses; V, \hypothesis_\hypotheses) \leftarrow 0$
            %\State $V' \leftarrow V$
            %\State $\hypotheses' \leftarrow \hypotheses$
            %\State $m = |\compactDSet{\hypotheses}|+1$
            \For{\tikzmk{A} $\instance \in \compactDSet{{\hypotheses}}$
            %\{\instance_1, \dots, \instance_{m-1}\}$
            }
            \label{alg:ln:lemma_linvcd_start}
                % \State $\hypotheses^{\clabel}_{\instance} \leftarrow \textsc{ConstructFirstPartition}(\hypotheses', \hypothesis_{\hypotheses}, \instance)$
                \State $\clabel = 1 - \hypothesis(\instance)$
                %\State ${\hypotheses^{\clabel}_{\instance} \leftarrow \{\hypothesis \in \hypotheses: \hypothesis_{|\compactDSet{\hypotheses}} \triangle \instance \in \hypotheses_{|\compactDSet{\hypotheses}}, \hypothesis(\instance) = \clabel\}}$\label{alg:linvcd:partition}
                %
                %\State ${\hypotheses^{\clabel}_{\instance} \leftarrow \{\hypothesis' \in {\hypotheses}: {\hypothesis' \triangle \instance}_{|{\instances}}  \in {\hypotheses}_{|{\instances}}, \hypothesis'(\instance) = \clabel\}}$\label{alg:linvcd:partition}
                \State ${\hypotheses^{\clabel}_{\instance} \leftarrow \{\hypothesis' \in {\hypotheses_{\text{rest}}} : {\hypothesis' \triangle \instance}_{|{\instances_{\text{rest}}}}  \in {\hypotheses_{\text{rest}}}_{|{\instances_{\text{rest}}}}, \hypothesis'(\instance) = \clabel\}}$\label{alg:linvcd:partition}
                % \State $\forall \hypothesis \in \hypotheses^{\clabel}_{\instance}, 
                \State ${{\hypotheses_{\text{rest}}} \leftarrow {\hypotheses_{\text{rest}}} \setminus \hypotheses^{\clabel}_{\instance} }$,
                \label{alg:ln:lemma_linvcd_end}
                %\State 
                ${\instances_{\text{rest}}} \leftarrow {\instances_{\text{rest}}} \setminus\{x\}$
                %\compactDSet{\hypotheses} \leftarrow \compactDSet{\hypotheses} \setminus\{x\}$
                \State 
                \tikzmk{B}\boxit{black!30} 
                $V_{\text{next}} \leftarrow V \cap \Hypotheses(\{(\instance,\clabel)\})$ %\todo{YC -- updated, R2-15; AS -- Needs additional curly bracket?}
                \State \textbf{for} {$\hypothesis' \in \hypotheses^{\clabel}_{\instance}$}
                    \textbf{ do } $\pref_\lvs(\hypothesis'; V_{\text{next}}, \hypothesis) \leftarrow \indexof{\hypothesis'}$\label{alg:ln:set_pref_value}
                % \For{$\hypothesis' \in \hypotheses^{\clabel}_{\instance}$}
                %     \State $\pref_\lvs(\hypothesis'; V', \hypothesis) \leftarrow \indexof{\hypothesis'}$\label{alg:ln:set_pref_value}
                % \EndFor
                \State $\hnext \leftarrow \argmin_{{\hypothesis'} \in \hypotheses^{\clabel}_{\instance}} \indexof{\hypothesis'}$ \label{alg:ln:assign_h_next}
                \State
                %$\textsc{SetPreference}(V', \hypotheses^{\clabel}_{\instance}, {\instances}, \hnext)$\label{alg:ln:recursion}
                $\textsc{SetPreference}(V_{\text{next}}, \hypotheses^{\clabel}_{\instance}, \compactDSet{{\hypotheses}} \setminus\{x\}, \hnext)$\label{alg:ln:recursion}
            \EndFor
            \EndFunction
            %\State $\hypotheses \leftarrow \Hypotheses$, $\instances \leftarrow \Instances$, $\hypothesis_{\hypotheses} \leftarrow \hinit$;
            %\State $\textsc{SetPreferenceOneLevel}(\hypotheses, \instances, \hypothesis_{\hypotheses})$
    	\end{algorithmic}\label{alg:lemma_linVCD}
    	%}
    \end{algorithm}
    \end{minipage}}
    \hfill
    %\ \ \ 
    \scalebox{0.9}{
    \begin{minipage}[b]{0.4\textwidth}
    \begin{figure}[H]
    % %\captionbox{Caption for first subfigure}[\textwidth]{
    \scalebox{0.70}{
    \hbox{\hspace{-1cm}
    \begin{tikzpicture}
  [every matrix/.append style={ampersand replacement=\&,matrix of nodes},
  level 1/.style={sibling distance=3.5cm},
  % row sep = 0.3cm,
  % every even row/.style={nodes=draw},
  % every odd row/.style={nodes=draw},
  nodes={anchor=west}]
  \node [matrix,draw] at (-1,0) (m0) {1 \& 1 \& 0 \& 0 \& 0 \\}
  child {node[matrix,draw] (m1) at (3,0) {0 \& 0 \& 1 \& 1 \& 0 \\} edge from parent node[left,yshift=1mm] {$(x_1, 0)$}}
  child {node[matrix,draw] (m2) at (1.1,-2)
    {0 \& 0 \& 0 \& 1 \& 1 \\
      1 \& 0 \& 1 \& 0 \& 1 \\
    }
    edge from parent node[left,yshift=-2mm] {$(x_2, 0)$}
  }
  child {node[matrix,draw] (m3) at (-1,-4.5)
    {0 \& 1 \& 1 \& 0 \& 0 \\
      1 \& 0 \& 1 \& 1 \& 0 \\
      0 \& 1 \& 1 \& 0 \& 1 \\
    }
    edge from parent node[right,yshift=3mm] {$(x_3, 1)$};
  }
  child {node[matrix,draw] (m4) at (-2.5,-2)
    {1 \& 1 \& 0 \& 1 \& 0 \\
      0 \& 1 \& 0 \& 1 \& 1 \\}
    edge from parent node[above,yshift=2mm,xshift=3mm] {$(x_4, 1)$}
  }
  child {node[matrix,draw] (m5) at (-4.6, 0) {1 \& 0 \& 0 \& 0 \& 1 \\} edge from parent node[above] {$(x_5, 1)$}};

  \node[left,xshift=-3mm] at (m0-1-1) {$h_1$};
  \node[left,xshift=-3mm] at (m1-1-1) {$h_3$};
  \node[left,xshift=-3mm] at (m2-1-1) {$h_4$};
  \node[left,xshift=-3mm] at (m2-2-1) {$h_{10}$};
  \node[left,xshift=-3mm] at (m3-1-1) {$h_2$};
  \node[left,xshift=-3mm] at (m3-2-1) {$h_8$};
  \node[left,xshift=-3mm] at (m3-3-1) {$h_7$};
  \node[left,xshift=-3mm] at (m4-1-1) {$h_6$};
  \node[left,xshift=-3mm] at (m4-2-1) {$h_9$};
  \node[left,xshift=-3mm] at (m5-1-1) {$h_5$};

  \node[above,yshift=3mm] at (m0) {$H^{6}$};
  \node[above left,yshift=3mm] at (m1) {$H_{x_1}^0$};
  \node[above left,yshift=5mm] at (m2) {$H_{x_2}^0$};
  \node[above left,yshift=8mm] at (m3) {$H_{x_3}^1$};
  \node[above,yshift=5mm] at (m4) {$H_{x_4}^1$};
  \node[above,yshift=3mm] at (m5) {$H_{x_5}^1$};
  
  \scoped[on background layer]
  {
    \node[fill=black!10, fit=(m1-1-1)(m1-1-1) ]   {};
    \node[fill=black!10, fit=(m2-1-2)(m2-2-2) ]   {};
    \node[fill=black!10, fit=(m3-1-3)(m3-3-3) ]   {};
    \node[fill=black!10, fit=(m4-1-4)(m4-2-4) ]   {};
    \node[fill=black!10, fit=(m5-1-5)(m5-1-5) ]   {};
  }
\end{tikzpicture}}}
\caption{Illustration of \lemref{lem:main:div_concepts} on the Warmuth class. The grouped hypotheses in the leaf clusters correspond to the sets $\hypotheses^{\clabel}_{\instance}$ created in \lnref{alg:linvcd:partition} of \algref{alg:linvcd}.}\label{fig:warmuth_lemma_run}
\end{figure}
\end{minipage}
}
\end{figure}

\begin{figure}[t!]
    \centering
    \scalebox{0.66}{
    \begin{tikzpicture}[
  every matrix/.append style={ampersand replacement=\&,matrix of nodes},
  % level 1/.style={sibling distance=5.5cm},
  % supervisor/.style={%
  % text centered, text width=12em,
  % text=black
  % },
  %   teammate/.style={%
  %   text centered, text width=12em,
  %   text=black
  % },
  subordinate/.style={%
    grow=down,
    xshift=-3.2em, % Horizontal position of the child node
    text centered, text width=12em,
    edge from parent path={(\tikzparentnode.205) |- (\tikzchildnode.west)}
  },
  level1/.style ={level distance=4em,anchor=north},
  level2/.style ={level distance=8em,anchor=north},
  level 1/.style={edge from parent fork down,sibling distance=10em,level distance=5em},
  level 2/.style={edge from parent fork down,sibling distance=10em},  % row sep = 3cm,
  % every even row/.style={nodes=draw},
  % every odd row/.style={nodes=draw},
  nodes={anchor=west}]
  \node [matrix,draw] (m0) {1 \& 1 \& 0 \& 0 \& 0 \\}
  child[level 1] {node[matrix,draw] (m1) %at (3,0)
    {0 \& 0 \& 1 \& 1 \& 0 \\} edge from parent node[above] {$(x_1, 0)$}}
  child[level 1] {node[matrix,draw] (m2) %at (5,0)
    {0 \& 0 \& 0 \& 1 \& 1 \\
    }
    child[level 2] {node[matrix,draw] (m6) at (-3,0)
      {
        1 \& 0 \& 1 \& 0 \& 1 \\
      }
      edge from parent node[above] {$(x_3, 1)$}
    }
    edge from parent node[above] {$(x_2, 0)$}
  }
  child[level 1] {node[matrix,draw] (m3) %at (-1,-2)
    {
      0 \& 1 \& 1 \& 0 \& 0 \\
    }
    child[level 2]{node[matrix,draw] (m7) at (-1,0)
      {1 \& 0 \& 1 \& 1 \& 0\\}
      edge from parent node[above] {$(x_4, 1)$}
    }
    child[level 2]{node[matrix,draw] (m8) at (-1.5,0)
      {
        0 \& 1 \& 1 \& 0 \& 1 \\
      }
      edge from parent node[above] {$(x_5, 1)$}
    }
    edge from parent node[above] {$(x_3, 1)$}
  }
  child[level 1] {node[matrix,draw] (m4) %at (-2.5,-2)
    {1 \& 1 \& 0 \& 1 \& 0 \\}
    child[level 2] {node[matrix,draw] (m9) %at (-2.5,-2)
      { 0 \& 1 \& 0 \& 1 \& 1 \\}
      edge from parent node[above] {$(x_5, 1)$}
    }
    edge from parent node[above] {$(x_4, 1)$}
  }
  child[level 1] {node[matrix,draw] (m5) %at (-4.6, 0)
    {1 \& 0 \& 0 \& 0 \& 1 \\}
    edge from parent node[above] {$(x_5, 1)$}
  };
  \node[left,xshift=-3mm] at (m0-1-1) {$h_1$};
  \node[left,xshift=-3mm] at (m1-1-1) {$h_3$};
  \node[left,xshift=-3mm] at (m2-1-1) {$h_4$};
  \node[left,xshift=-3mm] at (m6-1-1) {$h_{10}$};
  \node[left,xshift=-3mm] at (m3-1-1) {$h_2$};
  \node[left,xshift=-3mm] at (m7-1-1) {$h_8$};
  \node[left,xshift=-3mm] at (m8-1-1) {$h_7$};
  \node[left,xshift=-3mm] at (m4-1-1) {$h_6$};
  \node[left,xshift=-3mm] at (m9-1-1) {$h_9$};
  \node[left,xshift=-3mm] at (m5-1-1) {$h_5$};
  %
  % \node[above,yshift=3mm] at (m0) {$H^{(6)}$};
  % \node[above left,yshift=3mm] at (m1) {$H_{x_1}^0$};
  % \node[above left,yshift=5mm] at (m2) {$H_{x_2}^0$};
  % \node[above left,yshift=8mm] at (m3) {$H_{x_3}^1$};
  % \node[above,yshift=5mm] at (m4) {$H_{x_4}^1$};
  % \node[above,yshift=3mm] at (m5) {$H_{x_5}^1$};
  %
  % \scoped[on background layer]
  % {
  % \node[fill=black!10, fit=(m1-1-1)(m1-1-1) ]   {};
  % \node[fill=black!10, fit=(m2-1-2)(m2-2-2) ]   {};
  % \node[fill=black!10, fit=(m3-1-3)(m3-3-3) ]   {};
  % \node[fill=black!10, fit=(m4-1-4)(m4-2-4) ]   {};
  % \node[fill=black!10, fit=(m5-1-5)(m5-1-5) ]   {};
  % }
\end{tikzpicture}}
    \caption{Illustration of \thmref{thm:main:seq-models_vs_VCD} proof -- constructing a  $\sigma_\lvs \in \SigmaLvs$ for the Warmuth class.}
    \vspace{-8mm}
    \label{fig:warmuth-lvs-theorem-construction}
\end{figure}

\paragraph{Recursive construction of $\pref_\lvs$.} As a part of the \thmref{thm:main:seq-models_vs_VCD} proof, we provide a recursive procedure for constructing a $\sigma_\lvs \in \SigmaLvs$ achieving $\TD_{\Instances,\Hypotheses,\hinit}(\pref_\lvs) = \bigO{\VCD(\Hypotheses,\Instances)}$.
%that for any given $\Instances,\Hypotheses,\hinit$, one can recursively construct a preference function using the teaching examples, where the resulting teaching complexity is at most linear in the VC dimension. This leads to the following proof for \thmref{thm:main:seq-models_vs_VCD}.

\begin{proof}[Proof of \thmref{thm:main:seq-models_vs_VCD}]
In a nutshell, the proof consists of three steps: (i) initialization of $\pref_\lvs$, (ii) setting the preferences by recursively invoking the constructive procedure for \lemref{lem:main:div_concepts}, and (iii) showing that there exists a teaching sequence of length up to $d$ for any target hypothesis $\hstar$. %using the constructed preference function $\pref_\lvs$.
We summarize the recursive procedure in \algref{alg:lemma_linVCD}. 

\textit{\underline{Step (i).}}
%\paragraph{Step (i).} 
To begin with, we initialize $\pref_\lvs$ with default values which induce high $\sigma$ values (i.e., low preference), except for $\sigma(\hypothesis';\hypotheses,\hypothesis)=0$ where $\hypothesis'=\hypothesis$ (c.f. \lnref{alg:ln:init_pref} of \algref{alg:lemma_linVCD}). The self-preference guarantees that $\sigma_\lvs$ is collusion-free as per Definition~\ref{def:seq-col-free}. 

\textit{\underline{Step (ii).}}
%\paragraph{Step (ii).} 
The recursion begins at the top level with $\hypotheses = \Hypotheses$, current version space $V = \Hypotheses$, and initial hypothesis $\hypothesis=\hinit$. \lemref{lem:main:div_concepts} suggests that we can partition $\hypotheses$ into $m=|\compactDSet{\hypotheses}|+1$ groups $\{\hclass{1}, ..., \hclass{m}\}$, where for all $j \in [m]$, there exists a compact-distinguishable set $\compactDSet{\hclass{j}}$ that satisfies the properties in \lemref{lem:main:div_concepts}.

%%%%%%%%%%%%%%%%%%%%%%%%%%%%%%%%%%%%%%%%%%%%%%%%%%%%%%%%%%%%%
%%%%%%%%%%%%%%%%%%%%%%%%%%%%%%%%%%%%%%%%%%%%%%%%%%%%%%%%%%%%%
Now consider the hypothesis $\hypothesis := \hinit$. 
%Now consider some fixed hypothesis $h\in H$. 
We show that for $j \in [m - 1]$, every $(\instance_j, \clabel_j)$, where $\instance_j\in \compactDSet{\hypotheses}$ and $\clabel_j=1-\hypothesis(x_j)$, corresponds to a unique version space $V^j:= \{h \in V : h(\instance_j) =  \clabel_j\}$. To prove this statement, we consider $R^j := {V}^j \cap \hypotheses = \{\hypothesis \in \hypotheses : \hypothesis(\instance_j) = \clabel_j\} $.
\iftoggle{longversion}{According to Lemma~\ref{lem:app:vs_dependent_TD} of Appendix~\ref{sec.appendix.seq-models_linearVCD.theoremproof},}
{As is discussed in Appendix~D.2 of the supplementary,} we know that none of $R^j$ for $j \in [m-1]$ are equal. This indicates that none of $V^j$ for $j \in [m-1]$ are equal.  

We then set the values of the preference function $\pref_\lvs(\cdot; V^j, h)$ for all $j \in [m - 1]$ and $\clabel_j = 1 - \hypothesis(\instance_j)$ (\lnref{alg:ln:set_pref_value}). Upon receiving $(\instance_j, \clabel_j)$, the learner will be steered to the next ``search space'' $\hclass{j}$, with version space $V^j$. By \lemref{lem:main:div_concepts} we have $\VCD(\hclass{j}, \compactDSet{\hclass{j}})\leq \VCD(\hypotheses, \compactDSet{\hypotheses}) - 1$.

We will build the preference function $\pref_\lvs$ recursively $m - 1$ times for each %$(\hclass{j}; \compactDSet{\hclass{j}}; V^j)$ 
$( V^j, \hclass{j}, \compactDSet{\hclass{j}}, \hypothesis_{\text{next}})$, where $\hypothesis_{\text{next}}$ corresponds to the unique hypothesis identified by function $I$ (\lnref{alg:ln:assign_h_next}--\lnref{alg:ln:recursion}).
%corresponding to $j \in [m - 1]$ and $\clabel \in \{0,1\}$. 
At each level of recursion, \VCD reduces by 1. We stop the recursion when $\VCD(\hclass{j}; \compactDSet{\hclass{j}})=0$, %As \lemref{lem:main:div_concepts} suggests, this 
which corresponds to the scenario $|\hclass{j}|=1$. %there is only one hypothesis left.

\textit{\underline{Step (iii).}}
%\paragraph{Step (iii).} 
Given the preference function constructed in \algref{alg:linvcd}, we can build up the set of (labeled) teaching examples recursively. 
Consider the beginning of the teaching process, where the learner's current hypothesis is $\hinit$ and version space is $\Hypotheses$, and the goal of the teacher is to teach $\hstar$. Consider the first level of the recursion in \algref{alg:lemma_linVCD}, where we divide $\Hypotheses$ into $m=|\compactDSet{\Hypotheses}|+1$ groups $\{\hclass{1}, ..., \hclass{m}\}$. Let us consider the case where $\hstar \in \hclass{j^\star}$ with $j^\star \in [m-1]$. 
The teacher provides an example given by $(\instance=\instance_{j^\star}, \clabel=\hstar(\instance_{j^\star}))$. After receiving the teaching example, the resulting partition $\hclass{j^\star}$ will stay in the version space; meanwhile, $\hinit$ will be removed from the version space. The new version space will be $V^{j^\star}$. The learner's new hypothesis induced by the preference function is given by $\hypothesis_{\text{next}} \in \hclass{j^\star}$. By repeating this teaching process for a maximum of $d$ steps, the learner reaches a partition of size 1 (see \emph{{Step (ii)}} for details). At this step $\hstar$ must be the only hypothesis left in the search space. Therefore, $\hypothesis_{\text{next}} = \hstar$, and the learner has reached $\hstar$. 
% Let us consider the case where $\hstar$ belongs to the partition (of search spaces) corresponding to $j=j^\star$. Then, teacher provides an example given by $(\instance=\instance_{j^\star}, \clabel=\hstar(\instance_{j^\star}))$.  
% %As lemma~\ref{lem:main:div_concepts} indicates, $\forall \hypothesis \in \hypotheses_j: \hypothesis(\instance_{j^*}) = \hstar(\instance_{j^*})$,
% After receiving the teaching example, the resulting partition $\hclass{j^\star}$ will stay in the version space; meanwhile, $\hinit$ will be removed from version space. The new version space will be $V^{j^\star}$, and the new search space will be given by $\hclass{j^\star}$. Learner's new hypothesis induced by the preference function is given by $\hypothesis = \hypothesis_{\text{next}}$. 
%
% The basic idea is to use ex to 
%
% By repeating this teaching process for a maximum of $d$ examples, we reach the level where VC-dimension of the search space is 0. At this step $\hstar$ must be the only hypothesis left in the search space. So $\hypothesis_{\text{next}} = \hstar$, and the learner has reached $\hstar$.
%\todo{R4 comment: The paper emphasizes the critical role of the appropriate definition of collusion-freeness for sequential models. The corresponding part of the proof of Theorem 3, however, is relegated to the Appendix. Some discussion of this should be given in the main text.}
%\todo{The lemma from Appendix D.2 is not referred here.}
\end{proof}
%\todo{Table from Appendix D.3 is not referred here.}
\figref{fig:warmuth-lvs-theorem-construction} illustrates the recursive construction of a $\pref_\lvs \in \SigmaLvs$ for the Warmuth class, with $\TD_{\Instances,\Hypotheses,\hinit}(\pref_\lvs) = 2$.

% \begin{remark}
% The following statements ensure that preference function we introduced is collusion-free. % according to \defref{def:seq-col-free},
% \begin{enumerate}\denselist
%     \item For every set of teaching examples, the learner always outputs a hypothesis in version space.
%     \item If the current hypothesis is $\hypothesis$, our procedure of updating always ensured us that $\hypothesis$ will be most preferred in sigma function $\sigma(. ; \hypothesis, .)$. i.e. if we provide a example which is consistent with $\hypothesis$, next hypothesis will still be $\hypothesis$. This indicates that, until $\hypothesis$ get removed from version space, the next hypothesis will be $\hypothesis$.
% \end{enumerate}
% \end{remark}

%The full proof for provided in Appendix~\ref{sec.appendix.seq-models_linearVCD}.

% !TEX root =  main.tex
%%%%%%%%%%%%%%%%%%%%%%%%%%%%%%%%%%%%%%%%%%%%%%%%%%%%%%%%%%
%%%%%%%%%%%%%%%%%%%%%%%%%%%%%%%%%%%%%%%%%%%%%%%%%%%%%%%%%%
\vspace{-1mm}
\section{Discussion and Conclusion}\label{sec:discussion}
\vspace{-1mm}
%%%%%%%%%%%%%%%%%%%%%%%%%%%%%%%%%%%%%%%%%%%%%%%%%%
%%%%%%%%%%%%%%%%%%%%%%%%%%%%%%%%%%%%%%%%%%%%%%%%%%
We now discuss a few thoughts related to different families of preference functions. First of all, the size of the families grows exponentially as we change our model from $\SigmaConst$, $\SigmaGlobal$ to $\SigmaGvs$/$\SigmaLocal$ and finally to $\SigmaLvs$, thus resulting in more powerful models with lower teaching complexity. While run time has not been the focus of this paper, it would be interesting to characterize the presumably increased run time complexity of sequential learners and teachers  with complex preference functions. Furthermore, as the size of the families grow, the problem of finding the best preference function $\sigma$ in a given family $\Sigma$ that achieve the minima in Eq.~\eqref{eq:sigmatd} becomes more computationally challenging.

The recursive procedure in \algref{alg:lemma_linVCD} creates a preference function $\pref_\lvs\in\SigmaLvs$ that has teaching complexity at most $\VCD$. It is interesting to note that the resulting preference function $\pref_\lvs$ has the characteristic of ``win-stay, loose shift" \cite{bonawitz2014win,chen2018understanding}: Given that for any hypothesis we have $\sigma(h;\cdot,h) = 0$, the learner prefers her current hypothesis as long as it remains consistent. Preference functions with this characteristic naturally exhibit the collusion-free property in \defref{def:seq-col-free}. For some problems, one can achieve lower teaching complexity for a
$\sigma \in \SigmaLvs$. %which does not have the ``win-stay, loose shift'' characteristic but still satisfies the property in \defref{def:seq-col-free}. 
In fact, the preference function $\sigma_{\lvs}$ we provided for the Warmuth class in \tableref{tab:pref_lvs} has teaching complexity $1$, while the preference function constructed in \figref{fig:warmuth-lvs-theorem-construction} has teaching complexity $2$.

One fundamental aspect of modeling teacher-learner interactions is the notion of collusion-free teaching. Collusion-freeness for the batched setting is well established in the research community and \NCTD characterizes the complexity of the strongest collusion-free batch model. In this paper, we are introducing a new notion of collusion-freeness for the sequential setting (\defref{def:seq-col-free}). As discussed above, a stricter condition is the ``win-stay lose-shift'' model, which is easier to validate without running the teaching algorithm. In contrast, the condition of \defref{def:seq-col-free} is more involved in terms of validation and is a joint property of the teacher-learner pair. One intriguing question for future work is defining notions of collusion-free teaching in sequential models and understanding their implications on teaching complexity.
%The implication of collusion-freeness is less understood in the sequential setting. 
%more subtle and 

%to be addressed in 
%It remains an open question for the research community to agree on a well-accepted notion of collusion-freeness for the sequential setting.  
%Notions of collusion-freeness in sequential models. 
% and it remains an open question for the research community to agree on a well-accepted notion of collusion-freeness for the sequential setting

Another interesting direction of future work is to better understand the properties of the teaching parameter $\Sigmatd{}$. One question of particular interest is showing that the teaching parameter is not upper bounded by any constant independent of the hypothesis class, which would suggest a strong collusion in our model. We can show that for certain hypothesis classes, $\Sigmatd{}$ is lower bounded by a function of $\VCD$. In particular, for the power set class of size $d$ (which has $\VCD=d$),  $\Sigmatd{}$ is lower bounded by $\bigOmega{\frac{d}{\log d}}$. Another direction of future work is to understand whether this parameter is additive or subadditive over disjoint domains. Also, we consider a generalization of our results to the infinite VC classes as a very interesting direction for future work.

Our framework provides novel tools for reasoning about teaching complexity by constructing preference functions. This opens up an interesting direction of research to tackle important open problems, such as proving whether \NCTD or \RTD is linear in \VCD~\cite{simon2015open,chen2016recursive,DBLP:journals/corr/HuWLW17,pmlr-v98-kirkpatrick19a}. In this paper, we showed that neither of the families $\SigmaGvs$ and $\SigmaLocal$ dominates the other (Theorem~\ref{thm:local-eq-GVS}). As a direction for future work, it would be important to further quantify the complexity of $\SigmaLocal$ family.%, and understand it's relation to \VCD.

\subsubsection*{Acknowledgements}
This work was done in part when Yuxin Chen was at Caltech. Xiaojin Zhu is supported by NSF 1545481, 1561512, 1623605, 1704117, 1836978 and the MADLab AF CoE FA9550-18-1-0166.

%%%%%%%%%%%%%%%%%%%%%%%%%%%%%%%%%%%%%%%%%%%%%%%%%%%%%%%%%%
% \bibliographystyle{alpha}
\bibliography{references}

\newcommand{\etalchar}[1]{$^{#1}$}
\begin{thebibliography}{HCMA{\etalchar{+}}19}

\bibitem[AK97]{angluin1997teachers}
Dana Angluin and M{\=a}rti{\c{n}}{\v{s}} Kri{\c{k}}is.
\newblock Teachers, learners and black boxes.
\newblock In {\em Proceedings of the tenth annual conference on Computational
  learning theory}, pages 285--297. ACM, 1997.

\bibitem[BDGG14]{bonawitz2014win}
Elizabeth Bonawitz, Stephanie Denison, Alison Gopnik, and Thomas~L Griffiths.
\newblock Win-stay, lose-sample: A simple sequential algorithm for
  approximating bayesian inference.
\newblock {\em Cognitive psychology}, 74:35--65, 2014.

\bibitem[CCT16]{chen2016recursive}
Xi~Chen, Yu~Cheng, and Bo~Tang.
\newblock On the recursive teaching dimension of vc classes.
\newblock In {\em Advances in Neural Information Processing Systems}, pages
  2164--2171, 2016.

\bibitem[CL12]{cakmak2012algorithmic}
Maya Cakmak and Manuel Lopes.
\newblock Algorithmic and human teaching of sequential decision tasks.
\newblock In {\em AAAI}, 2012.

\bibitem[CSMA{\etalchar{+}}18]{chen2018understanding}
Yuxin Chen, Adish Singla, Oisin Mac~Aodha, Pietro Perona, and Yisong Yue.
\newblock Understanding the role of adaptivity in machine teaching: The case of
  version space learners.
\newblock In {\em Advances in Neural Information Processing Systems}, pages
  1476--1486, 2018.

\bibitem[DFSZ14]{doliwa2014recursive}
Thorsten Doliwa, Gaojian Fan, Hans~Ulrich Simon, and Sandra Zilles.
\newblock Recursive teaching dimension, vc-dimension and sample compression.
\newblock {\em JMLR}, 15(1):3107--3131, 2014.

\bibitem[FW95]{floyd1995sample}
Sally Floyd and Manfred Warmuth.
\newblock Sample compression, learnability, and the vapnik-chervonenkis
  dimension.
\newblock {\em Machine learning}, 21(3):269--304, 1995.

\bibitem[GK95]{goldman1995complexity}
Sally~A Goldman and Michael~J Kearns.
\newblock On the complexity of teaching.
\newblock {\em Journal of Computer and System Sciences}, 50(1):20--31, 1995.

\bibitem[GM96]{goldman1996teaching}
Sally~A Goldman and H~David Mathias.
\newblock Teaching a smarter learner.
\newblock {\em Journal of Computer and System Sciences}, 52(2):255--267, 1996.

\bibitem[GRSZ17]{gao2017preference}
Ziyuan Gao, Christoph Ries, Hans~U Simon, and Sandra Zilles.
\newblock Preference-based teaching.
\newblock {\em JMLR}, 18(31):1--32, 2017.

\bibitem[HCMA{\etalchar{+}}19]{hunziker2018teaching}
Anette Hunziker, Yuxin Chen, Oisin Mac~Aodha, Manuel~Gomez Rodriguez, Andreas
  Krause, Pietro Perona, Yisong Yue, and Adish Singla.
\newblock Teaching multiple concepts to a forgetful learner.
\newblock In {\em Advances in Neural Information Processing Systems}, 2019.

\bibitem[HTS18]{haug2018teaching}
Luis Haug, Sebastian Tschiatschek, and Adish Singla.
\newblock Teaching inverse reinforcement learners via features and
  demonstrations.
\newblock In {\em Advances in Neural Information Processing Systems}, pages
  8464--8473, 2018.

\bibitem[HWLW17]{DBLP:journals/corr/HuWLW17}
Lunjia Hu, Ruihan Wu, Tianhong Li, and Liwei Wang.
\newblock Quadratic upper bound for recursive teaching dimension of finite {VC}
  classes.
\newblock In {\em Proceedings of the 30th Conference on Learning Theory,
  {COLT}}, pages 1147--1156, 2017.

\bibitem[KDCS19]{DBLP:conf/ijcai/KamalarubanDCS19}
Parameswaran Kamalaruban, Rati Devidze, Volkan Cevher, and Adish Singla.
\newblock Interactive teaching algorithms for inverse reinforcement learning.
\newblock In {\em IJCAI}, pages 2692--2700, 2019.

\bibitem[KKW07]{KKW07-nonclashing}
Dima Kuzmin and Manfred K.~Warmuth.
\newblock Unlabeled compression schemes for maximum classes.
\newblock {\em Journal of Machine Learning Research}, 8:2047--2081, 09 2007.

\bibitem[KSZ19]{pmlr-v98-kirkpatrick19a}
David Kirkpatrick, Hans~U. Simon, and Sandra Zilles.
\newblock Optimal collusion-free teaching.
\newblock In {\em Proceedings of the 30th International Conference on
  Algorithmic Learning Theory}, volume~98, pages 506--528, 2019.

\bibitem[Kuh99]{kuhlmann1999teaching}
Christian Kuhlmann.
\newblock On teaching and learning intersection-closed concept classes.
\newblock In {\em European Conference on Computational Learning Theory}, pages
  168--182. Springer, 1999.

\bibitem[LDH{\etalchar{+}}17]{liu2017iterative}
Weiyang Liu, Bo~Dai, Ahmad Humayun, Charlene Tay, Chen Yu, Linda~B. Smith,
  James~M. Rehg, and Le~Song.
\newblock Iterative machine teaching.
\newblock In {\em ICML}, pages 2149--2158, 2017.

\bibitem[LDL{\etalchar{+}}18]{DBLP:conf/icml/LiuDLLRS18}
Weiyang Liu, Bo~Dai, Xingguo Li, Zhen Liu, James~M. Rehg, and Le~Song.
\newblock Towards black-box iterative machine teaching.
\newblock In {\em ICML}, pages 3147--3155, 2018.

\bibitem[LZZ19]{DBLP:conf/aistats/LessardZ019}
Laurent Lessard, Xuezhou Zhang, and Xiaojin Zhu.
\newblock An optimal control approach to sequential machine teaching.
\newblock In {\em AISTATS}, pages 2495--2503, 2019.

\bibitem[OS99]{ott1999avoiding}
Matthias Ott and Frank Stephan.
\newblock Avoiding coding tricks by hyperrobust learning.
\newblock In {\em European Conference on Computational Learning Theory}, pages
  183--197. Springer, 1999.

\bibitem[SBB{\etalchar{+}}13]{singla2013actively}
Adish Singla, Ilija Bogunovic, G~Bart{\'o}k, A~Karbasi, and A~Krause.
\newblock On actively teaching the crowd to classify.
\newblock In {\em NIPS Workshop on Data Driven Education}, 2013.

\bibitem[SBB{\etalchar{+}}14]{singla2014near}
Adish Singla, Ilija Bogunovic, G{\'a}bor Bart{\'o}k, Amin Karbasi, and Andreas
  Krause.
\newblock Near-optimally teaching the crowd to classify.
\newblock In {\em ICML}, pages 154--162, 2014.

\bibitem[SZ15]{simon2015open}
Hans~U Simon and Sandra Zilles.
\newblock Open problem: Recursive teaching dimension versus vc dimension.
\newblock In {\em Conference on Learning Theory}, pages 1770--1772, 2015.

\bibitem[TGH{\etalchar{+}}19]{tschiatschek2019learner}
Sebastian Tschiatschek, Ahana Ghosh, Luis Haug, Rati Devidze, and Adish Singla.
\newblock Learner-aware teaching: Inverse reinforcement learning with
  preferences and constraints.
\newblock In {\em Advances in Neural Information Processing Systems}, 2019.

\bibitem[VC71]{vapnik1971uniform}
VN~Vapnik and A~Ya Chervonenkis.
\newblock On the uniform convergence of relative frequencies of events to their
  probabilities.
\newblock {\em Theory of Probability and its Applications}, 16(2):264, 1971.

\bibitem[Zhu13]{zhu2013machine}
Xiaojin Zhu.
\newblock Machine teaching for bayesian learners in the exponential family.
\newblock In {\em Advances in Neural Information Processing Systems}, pages
  1905--1913, 2013.

\bibitem[Zhu15]{zhu2015machine}
Xiaojin Zhu.
\newblock Machine teaching: An inverse problem to machine learning and an
  approach toward optimal education.
\newblock In {\em AAAI}, pages 4083--4087, 2015.

\bibitem[Zhu18]{zhu2018optimal}
Xiaojin Zhu.
\newblock An optimal control view of adversarial machine learning.
\newblock {\em arXiv preprint arXiv:1811.04422}, 2018.

\bibitem[ZLHZ08]{zilles2008teaching}
Sandra Zilles, Steffen Lange, Robert Holte, and Martin Zinkevich.
\newblock Teaching dimensions based on cooperative learning.
\newblock In {\em COLT}, pages 135--146, 2008.

\bibitem[ZLHZ11]{zilles2011models}
Sandra Zilles, Steffen Lange, Robert Holte, and Martin Zinkevich.
\newblock Models of cooperative teaching and learning.
\newblock {\em JMLR}, 12(Feb):349--384, 2011.

\bibitem[ZSZR18]{DBLP:journals/corr/ZhuSingla18}
Xiaojin Zhu, Adish Singla, Sandra Zilles, and Anna~N. Rafferty.
\newblock An overview of machine teaching.
\newblock {\em CoRR}, abs/1801.05927, 2018.

\end{thebibliography}

%%%%%%%%%%%%%%%%%%%%%%%%%%%%%%%%%%%%%%%%%%%%%%%%%%%%%%%%%%
\iftoggle{longversion}{
\clearpage
\onecolumn
\appendix 
{\allowdisplaybreaks
% !TEX root =  main.tex
%%%%%%%%%%%%%%%%%%%%%%%%%%%%%%%%%%%%%%%%%%%%%%%%%%%%%%%%%%
%%%%%%%%%%%%%%%%%%%%%%%%%%%%%%%%%%%%%%%%%%%%%%%%%%%%%%%%%%
\section{Supplementary Materials for \secref{sec:non-seq-family-pref}} \label{sec.appendix.batch-models}
%Proof of Theorem~\ref{theorem:seq-vs-bat_coll-free}

\subsection{An Example Hypothesis Class and the Teaching Sequences for the Batch Models}\label{sec.appendix.batch_models.example}
In this section, we provide an example hypothesis class where $\SigmaConstTD=\TD=3$, $\SigmaGlobalTD=\RTD=2$, and $\SigmaGvsTD=\NCTD=1$. The hypothesis class is specified in \tableref{tab:h2}. The preference functions inducing the optimal teaching sets for the examples are specified in Table~\ref{tab:h2_pref_const}, \ref{tab:h2_pref_glbl}, and \ref{tab:h2_pref_gvs}.

\begin{table}[h!]
    \centering
    \begin{subtable}[t]{\textwidth}
        \centering
        \begin{tabular}{c|cccccc||c|c|c}
        \backslashbox{$\Hypotheses$}{$\Instances$}
        %$\Hypotheses \backslash \Instances$ 
        & $\instance_1$ & $\instance_2$ & $\instance_3$ & $\instance_4$ & $\instance_5$ & $\instance_6$ & $\TeachingSeq_{\textsf{const}}$ & $\TeachingSeq_{\textsf{\glbl}}$ & $\TeachingSeq_{\gvs}$\\ %$\Teacher_{\textsf{const}}(\hypothesis)$ & %$\Teacher_{\textsf{global}}(\hypothesis)$ & $\Teacher_{\gvs}(\hypothesis)$\\
        \hline
        $\hypothesis_1$ & 1 & 0 & 0 & 0 & 0 & 1 & $\paren{\instance_1, \instance_6}$ & $\paren{\instance_1, \instance_6}$  & $\paren{\instance_1}$\\
        $\hypothesis_2$ & 0 & 1 & 0 & 0 & 0 & 1 & $\paren{\instance_2, \instance_6}$ & $\paren{\instance_2, \instance_6}$ & $\paren{\instance_2}$\\
        $\hypothesis_3$ & 1 & 1 & 1 & 0 & 0 & 0 & $\paren{\instance_3, \instance_4, \instance_5}$ & $\paren{\instance_1}$ & $\paren{\instance_3}$\\
        $\hypothesis_4$ & 1 & 1 & 1 & 1 & 0 & 0 & $\paren{\instance_4, \instance_5}$ & $\paren{\instance_4, \instance_5}$ & $\paren{\instance_4}$\\
        $\hypothesis_5$ & 1 & 1 & 1 & 0 & 1 & 0 & $\paren{\instance_4, \instance_5}$ & $\paren{\instance_4, \instance_5}$ & $\paren{\instance_5}$\\
        $\hypothesis_6$ & 0 & 0 & 0 & 1 & 1 & 1 & $\paren{\instance_4, \instance_5}$ & $\paren{\instance_4, \instance_5}$ & $\paren{\instance_6}$
        \end{tabular}
        \caption{An example hypothesis class with the optimal teaching sets under different families of preference functions.}\label{tab:h2}
        %as follows: (1) For $\Teacher_{\textsf{const}}$, $\sigma(\cdot; \cdot, \cdot)=1$; (2) For $\Teacher_{\glbl}, (\sigma(h_i; \cdot, \cdot))_{i\in\paren{1,\dots,6\}}=(1,1,0,1,1,1)$; (3) The preference function for $\Teacher_{\gvs}$ is specified in Table~\ref{tab:bat_gvs} .
        %}  \label{tab:bat-sigma-global}
    \end{subtable}
    \begin{subtable}[t]{.45\textwidth}
        \begin{tabular}{c|cccccc}
    $\hypothesis'$ & $\hypothesis_1$ & $\hypothesis_2$ & $\hypothesis_3$ & $\hypothesis_4$ & $\hypothesis_5$ & $\hypothesis_6$ \\\hline
        %  $\sigma_{\textsf{global}}(\hypothesis', \cdot, \cdot)$ & 1 & 1 & 0 & 1 & 1 & 1\\
        $\sigma_{\textsf{const}}(\hypothesis'; \cdot, \cdot)$ & 0 & 0 & 0 & 0 & 0 & 0\\
    \end{tabular}
    \caption{Preference function $\sigma_{\textsf{const}}$} \label{tab:h2_pref_const}
    \end{subtable}
    \quad \qquad
    \begin{subtable}[t]{.45\textwidth}
        \begin{tabular}{c|cccccc}
    $\hypothesis'$ & $\hypothesis_1$ & $\hypothesis_2$ & $\hypothesis_3$ & $\hypothesis_4$ & $\hypothesis_5$ & $\hypothesis_6$ \\\hline
         $\sigma_{\textsf{global}}(\hypothesis'; \cdot, \cdot)$ & 1 & 1 & 0 & 1 & 1 & 1\\
        % $\sigma_{\textsf{const}}(\cdot, \cdot, \cdot)$ & 0 & 0 & 0 & 0 & 0 & 0\\
    \end{tabular}
    \caption{Preference function $\sigma_{\glbl}$}\label{tab:h2_pref_glbl}
    \end{subtable}
    \begin{subtable}[t]{\textwidth}
    \begin{tabular}{c|cccccc}
    $\hypothesis'$ & $\hypothesis_1$ & $\hypothesis_2$ & $\hypothesis_3$ & $\hypothesis_4$ & $\hypothesis_5$ & $\hypothesis_6$ \\\hline
         \multirow{5}{*}{$\hypotheses$} & 
         $\{\hypothesis_1, \hypothesis_3, \hypothesis_4, \hypothesis_5\}$ & 
         $\{\hypothesis_2, \hypothesis_3, \hypothesis_4, \hypothesis_5\}$ &
         $\{\hypothesis_3, \hypothesis_4, \hypothesis_5\}$ & 
         $\{\hypothesis_4, \hypothesis_6\}$ & 
         $\{\hypothesis_5, \hypothesis_6\}$ & 
         $\{\hypothesis_1, \hypothesis_2, \hypothesis_6\}$ 
         \\ 
        & 
        $\{\hypothesis_1, \hypothesis_3, \hypothesis_4\}$ & 
        $\{\hypothesis_2, \hypothesis_3, \hypothesis_4\}$ &
         $\{\hypothesis_3, \hypothesis_4\}$ & 
         $\{\hypothesis_4\}$ & 
         $\{\hypothesis_5\}$ & 
         $\{\hypothesis_1, \hypothesis_6\}$ 
         \\
         & 
        $\{\hypothesis_1, \hypothesis_3, \hypothesis_5\}$ & 
        $\{\hypothesis_2, \hypothesis_3, \hypothesis_5\}$ &
         $\{\hypothesis_3, \hypothesis_5\}$ & 
          & 
          & 
         $\{\hypothesis_2, \hypothesis_6\}$ 
         \\
          & 
        $\{\hypothesis_1\}$ & 
        $\{\hypothesis_2\}$ &
         $\{\hypothesis_3\}$ & 
          & 
          & 
         $\{\hypothesis_6\}$ 
         \\\hline
        %  $\sigma$ & 
        %  $\sigma(\hypothesis_1, \cdot, \hypotheses) = 0$ & $\sigma(\hypothesis_2, \cdot, \hypotheses) = 0$ &
        %  $\sigma(\hypothesis_3, \cdot, \hypotheses) = 0$ &
        %  $\sigma(\hypothesis_4, \cdot, \hypotheses) = 0$ &
        %  $\sigma(\hypothesis_5, \cdot, \hypotheses) = 0$ &
        %  $\sigma(\hypothesis_6, \cdot, \hypotheses) = 0$
         $\sigma_\gvs(\hypothesis'; \hypotheses, \cdot)$ & 0 & 0 & 0 & 0 & 0 & 0\\
        %  $\sigma(h, \cdot, \cdot)$ & 1 & 1 & 0 & 1 & 1 & 1\\
        % $\sigma(\cdot, \cdot, \cdot)$ & 0 & 0 & 0 & 0 & 0 & 0\\
    \end{tabular}
    \caption{Preference function $\sigma_\gvs$. For all other $\hypothesis', \hypotheses$ pairs not specified in the table, $\sigma(\hypothesis', \hypotheses, \cdot) = 1$.}\label{tab:h2_pref_gvs}
    \end{subtable}
    \caption{An example hypothesis class where $\SigmaConstTD=3$, $\SigmaGlobalTD=2$, and $\SigmaGvsTD=1$. 
    %An example hypothesis class where $\SigmaConstTD=\TD=3$, $\SigmaGlobalTD=\RTD=2$, and $\SigmaGvsTD=\NCTD=1$. The preference functions inducing the optimal teaching sets for the above examples are specified in Table~\ref{tab:pref_const}, \ref{tab:pref_glbl}, and \ref{tab:h2_pref_gvs}.
    }
    \label{tab:batch_model_example_h2}
\end{table}

\subsection{Proof of \thmref{theorem:equivelence-results}}\label{sec.appendix.batch-models.theorem-proof}
%Below, we use a notion of ``non-clashing" teacher as introduced by \cite{pmlr-v98-kirkpatrick19a}
%For a batch teacher mapping $T$ non clashing was first defined in \cite{KKW07-nonclashing}. (Note that our teacher is sequential teacher mapping) 
%We got the idea from theorem 8 of \cite{pmlr-v98-kirkpatrick19a}, according to theorem 8, if pair of T and L are collision free then T must be non clashing
%\subsection{Proof of \lemref{lm:seq-vs-bat_coll-free}}

Before we prove our main results for the batch models, we first establish the following results on the non-clashing teaching. The notion of a non-clashing teacher was first introduced by \cite{KKW07-nonclashing}. Our proof is inspired by \cite{pmlr-v98-kirkpatrick19a} which shows the non-clashing property for collusion-free teacher-learner pair, under the batch setting. 
\begin{lemma}\label{lm:seq-vs-bat_coll-free} 
 %preference function $\sigma$, There is a fixed learner $L_\sigma$. Let $T$ be a any teacher such for all $\hypothesis \in \Hypotheses$, $\Teacher(\hypothesis)$ is consistent with $\hypothesis$. 
 Assume $\sigma \in \SigmaGvs$ is collusion-free. Then teacher $\Teacher$ must be non-clashing on $\Hypotheses$. i.e., for any two distinct $\hypothesis, \hypothesis' \in \Hypotheses$ such that $\Teacher(\hypothesis)$ is consistent with $\hypothesis'$, $\Teacher(\hypothesis')$ cannot be consistent with $\hypothesis$.
\end{lemma}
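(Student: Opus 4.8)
The plan is to argue by contradiction, mirroring the collusion-freeness argument of \cite{pmlr-v98-kirkpatrick19a} but carried out inside our sequential protocol. Suppose the teacher $\Teacher$ induced by $\sigma$ clashes: there are distinct $\hypothesis,\hypothesis'\in\Hypotheses$ with $\Teacher(\hypothesis)$ consistent with $\hypothesis'$ and $\Teacher(\hypothesis')$ consistent with $\hypothesis$. Write $\examples:=\Teacher(\hypothesis)$ and $\examples':=\Teacher(\hypothesis')$, viewed as sets of labeled examples (each labeled by its own target). Since the labels in $\examples$ come from $\hypothesis$ we get $\hypothesis\in\Hypotheses(\examples)$, and $\hypothesis'\in\Hypotheses(\examples)$ by the clashing hypothesis; symmetrically $\hypothesis,\hypothesis'\in\Hypotheses(\examples')$. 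Hence $\hypothesis,\hypothesis'\in\Hypotheses(\examples)\cap\Hypotheses(\examples')=\Hypotheses(\examples\cup\examples')$.

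Next I would record the two facts we get from $\sigma\in\SigmaGvs$: writing $\sigma(\hypothesis'';\hypotheses)$ for the hypothesis-independent preference value, the learner's chosen hypothesis after receiving any set of examples is a function of the resulting version space alone, in particular insensitive to the learner's current hypothesis and to the presentation order. Consequently, since $\examples=\Teacher(\hypothesis)$ is a teaching set whose worst-case use must terminate with the learner uniquely at $\hypothesis$, we have $\argmin_{\hypothesis''\in\Hypotheses(\examples)}\sigma(\hypothesis'';\Hypotheses(\examples))=\{\hypothesis\}$, and symmetrically $\argmin_{\hypothesis''\in\Hypotheses(\examples')}\sigma(\hypothesis'';\Hypotheses(\examples'))=\{\hypothesis'\}$.

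The core step is to invoke \defref{def:seq-col-free} twice, once from each side, and to observe that the same intersection $\Hypotheses(\examples)\cap\Hypotheses(\examples')=\Hypotheses(\examples\cup\examples')$ appears both times. Consider a run of Protocol~\ref{alg:interaction} in which the teacher feeds exactly the examples of $\examples$: the learner's version space is then $\Hypotheses(\examples)$ and its unique most-preferred hypothesis is $\hat{\hypothesis}=\hypothesis$ by the previous paragraph. Now let the adversary continue with the examples of $\examples'$, which are consistent with $\hypothesis$ by the clashing hypothesis; collusion-freeness yields $\argmin_{\hypothesis''\in\Hypotheses(\examples\cup\examples')}\sigma(\hypothesis'';\Hypotheses(\examples\cup\examples'))=\{\hypothesis\}$. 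Running the mirror-image argument (start from $\examples'$, whose version space uniquely prefers $\hypothesis'$, then append $\examples$, which is consistent with $\hypothesis'$) gives $\argmin_{\hypothesis''\in\Hypotheses(\examples\cup\examples')}\sigma(\hypothesis'';\Hypotheses(\examples\cup\examples'))=\{\hypothesis'\}$. The left-hand sides coincide, so $\{\hypothesis\}=\{\hypothesis'\}$, contradicting $\hypothesis\neq\hypothesis'$; therefore $\Teacher$ is non-clashing.

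The step I expect to need the most care is the bookkeeping that makes each application of \defref{def:seq-col-free} legitimate: one must check that ``version space $=\Hypotheses(\examples)$ with $\hypothesis$ as the unique minimizer'' is an admissible state of Protocol~\ref{alg:interaction} — here is exactly where $\sigma\in\SigmaGvs$ is used, since the learner's current hypothesis is irrelevant and so the history that produced that version space does not matter — and that the preferred hypothesis there is genuinely \emph{unique} (which is where one uses that $\Teacher(\hypothesis)$ is a bona fide teaching set, not one that merely keeps $\hypothesis$ among the minimizers). Once these are pinned down, the remainder is routine manipulation of version spaces and of the definition of consistency.
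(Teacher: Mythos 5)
Your proof is correct and follows essentially the same route as the paper's: both argue by contradiction, use the fact that a teaching set leaves the target as the unique $\sigma$-minimizer of its version space, apply the collusion-free property symmetrically to transfer that uniqueness to the common version space $\Hypotheses(\Teacher(\hypothesis)\cup\Teacher(\hypothesis'))$, and conclude $\hypothesis=\hypothesis'$. Your version is somewhat more explicit about why the $\SigmaGvs$ assumption makes the learner's history irrelevant and why the minimizer must be a singleton, both of which the paper handles more tersely.
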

%We will provide the proof in appendix~\ref{sec.appendix.seq-vs-bat_coll-free}.
\begin{proof}[Proof of \lemref{lm:seq-vs-bat_coll-free}] % \begin{definition}
%For any sequence of labeled examples $\examples \subseteq \Examples$, let us use $S(\examples)$ to denote the set of those examples.
% \end{definition}
%For all families $\Sigma$, let $\sigma \in \Sigma$ and $\examples' \in \Examples$, it is easy to see we have $\sigma(\hypothesis; \Hypotheses(S(\examples')), \cdot) = g_\sigma(\hypothesis, \Hypotheses(S(\examples')))$ for some function $g_\sigma$.
%For the sake of contradiction, 
By definition of the preference function, we have  $\forall \sigma\in \SigmaGvs, \hypothesis'\in \Hypotheses$, $\sigma(\hypothesis'; \Hypotheses(\examples'), \cdot) = g_\sigma(\hypothesis', \Hypotheses(\examples'))$ for some function $g_\sigma$.

We then prove the lemma by contradiction. Assume that the teacher mapping $\Teacher$ isn't non-clashing. There exists $\hypothesis \neq \hypothesis' \in \Hypotheses$, where $\examples = \Teacher(\hypothesis)$, and $\examples' = \Teacher(\hypothesis')$ are consistent with both, $\hypothesis$ and $\hypothesis'$.

Assume that the last current hypothesis before the teacher provides the last example of $\examples$ is $\hypothesis_1$. Then,
\begin{equation*}
    \hypothesis = \argmin_{\hypothesis'' \in \Hypotheses(\examples)} \sigma(\hypothesis'';\Hypotheses(\examples),\hypothesis_1) = \argmin_{\hypothesis'' \in \Hypotheses(\examples\cup\examples')} \sigma(\hypothesis''; \Hypotheses(\examples \cup \examples'), \hypothesis_1) = \argmin_{\hypothesis'' \in \Hypotheses(\examples \cup \examples')} g_\sigma(\hypothesis'', \Hypotheses(\examples \cup \examples')).
\end{equation*}
Where first equality is the definition of a teaching sequence. The second equality is by the definition of collusion-free preference (\defref{def:seq-col-free}). %since $\sigma$ is collision free. 
%The third equality is definition of $\sigma$. 
Similarly we have
\begin{equation*}
    \hypothesis' =  \argmin_{\hypothesis'' \in \Hypotheses(\examples' \cup \examples)} g_\sigma(\hypothesis'', \Hypotheses(\examples' \cup \examples)).
\end{equation*}
%But $S(\examples \cdot \examples') = S(\examples' \cdot \examples)$, and $\Hypotheses(\examples \cdot \examples') = \Hypotheses(\examples' \cdot \examples)$. 
Consequently, $\hypothesis = \hypothesis'$, which is a contradiction. This indicates that $\Teacher$ is non-clashing.
\end{proof}

Now we are ready to provide the proof for \thmref{theorem:equivelence-results}. We divide the proof of the \thmref{theorem:equivelence-results} into three parts, each corresponding to the equivalence results for a different preference function family.
\begin{proof}[Proof of \thmref{theorem:equivelence-results}] 
Part 1 (reduction to \TD) and Part 2 (reduction to \RTD) of the proof are included in the main paper.
% \paragraph{Part 1 and Part 2.} We note that our teaching model strictly generalizes the local-preference based model of \cite{chen2018understanding}, which reduces to  the ``worst-case'' model when $\sigma \in \SigmaConst$ (corresponding to $\TD$) \cite{goldman1995complexity} and the global ``preference-based'' model when $\pref \in \SigmaGlobal$. Hence we get $\SigmaConstTD_{\Instances,\Hypotheses,\hinit} = \TD(\Hypotheses)$ and $\SigmaGlobalTD_{\Instances,\Hypotheses,\hinit} = \RTD(\Hypotheses)$. %(corresponding to $\RTD$) \cite{doliwa2014recursive}. Therefore, 
% %To show the equivalence between $\SigmaConstTD$ and $\TD$, as well as $\SigmaGlobalTD$ with $\RTD$,

% %In the worst case, in order to teach $\hypothesis \in \Hypotheses$ we should have only $\hypothesis$ in the version space. This indicates that $\TD(\Hypotheses, \sigma_{\textsf{const}}, \hypothesis)$
% %$\TD_{\Instances,\Hypotheses,\hinit}(\sigma_{\textsf{const}})$ corresponds to original $\TD(\Hypotheses, \hypothesis)$. Subsequently, $\TD_{\Sigma_{classic}}(\Hypotheses) := \TD(\Hypotheses)$.

% %\paragraph{Part 2.} In this we have fixed ordering for $\Hypotheses$, this definition corresponds to definition of $\RTD$ \cite{} \yuxin{fix notations}.
%\paragraph{Part 3.} 
To establish the equivalence between $\SigmaGvsTD$ and $\NCTD$, we aim to show that for any hypotheses space $\Hypotheses$, it holds (i) $\SigmaGvsTD_{\Instances,\Hypotheses,\hinit} \geq \NCTD(\Hypotheses)$, and (ii) $\SigmaGvsTD_{\Instances,\Hypotheses,\hinit} \leq \NCTD(\Hypotheses)$.

We first prove (i). %According to Equation~\ref{eq:td-def}, 
%By definition, $\SigmaGvsTD_{\Instances,\Hypotheses,\hinit} = \min_{\text{Successful Teacher $\Teacher$}} \max_{\hypothesis \in \Hypotheses}  |\Teacher(\hypothesis)|$. 
According to \lemref{lm:seq-vs-bat_coll-free}, for any $\pref \in \Sigma_{\gvs}$, a successful teacher $\Teacher$ with $\pref$  is non-clashing on $\Hypotheses$. Therefore, $\SigmaGvsTD_{\Instances,\Hypotheses,\hinit} =   \min_{\text{Successful Teacher $\Teacher$}}\max_{\hypothesis \in \Hypotheses}  |\Teacher(\hypothesis)| \geq \min_{\text{Non-clashing teacher~} \Teacher} \max_{h\in \Hypotheses}|\Teacher(\hypothesis)| = \NCTD(\Hypotheses)$.

%%%%%%%%%%%%%%%%%%%%%%%%%%%%%%%%%%%%%%%%%%%%%%%%%
%%%%%%%%%%%%%%%%%%%%%%%%%%%%%%%%%%%%%%%%%%%%%%%%%
 
We now proceed to prove (ii). Consider any non-clashing teacher mapping $\Teacher$. 
%Let us use $\Learner_\pref$ to denote a learner with preference function $\pref$. 
First we will prove that there exists $\pref \in \Sigma_\gvs$ such that $(\Teacher, L_\sigma)$ is successful on $\Hypotheses$. Here $L_\sigma$ is a learner corresponded to $\sigma$ as described in \secref{sec:model}, and by ``successful'' we mean that the learner successfully outputs the target hypothesis when teaching terminates. %We will construct preference function $\sigma$ ($f_\sigma$). 
In the following, we construct a preference function $\sigma$. First initialize $\sigma(\cdot;\cdot,\cdot) = 1$. Then, for every $\hypothesis \in \Hypotheses$, and every $S'$ which $\Teacher(h) \subseteq S'$ and $S'$ is consistent with $\hypothesis$ assign $\sigma(\hypothesis; \Hypotheses(S'),\cdot) = 0$.

We then prove (ii) by contradiction. Consider any set of examples $S$, and assume there exists two $\hypothesis' \neq \hypothesis' \in \Hypotheses$ where $\sigma(\hypothesis; \Hypotheses(S),\cdot) = \sigma(\hypothesis'; \Hypotheses(S),\cdot) = 0$. Then $\Teacher(\hypothesis) \subseteq \Hypotheses(S)$ and $\Teacher(\hypothesis') \subseteq \Hypotheses(S)$, also $S$ is consistent with both $\hypothesis$ and $\hypothesis'$. This indicates that, both $\Teacher(\hypothesis)$ and $\Teacher(\hypothesis')$ must be consistent with both $\hypothesis$, and $\hypothesis'$. This contradicts with $T$ being non-clashing. Therefore, for every $\hypothesis$, and $S'$ where $S'$ is consistent with $\hypothesis$ and $\Teacher(\hypothesis) \subseteq S'$, and $\hypothesis' \neq \hypothesis$, we have $\sigma(\hypothesis; \Hypotheses(S'),\cdot) < \sigma(\hypothesis'; \Hypotheses(S'),\cdot)$. Consequently, after providing the examples $\Teacher(\hypothesis)$ to the learner $L_\sigma$, the learner will stay on $\hypothesis$ even if she receives more consistent labeled examples. Therefore, $(\Teacher,L_\sigma)$ is both collusion-free and successful on $\Hypotheses$. %according to definition~\ref{def:batch-col-free}.

Therefore, we conclude that for any teacher mapping $T$ induced by $\sigma\in \SigmaGvs$, $\max_{\hypothesis\in\Hypotheses}|\Teacher(h)| \geq \TD_{\Instances,\Hypotheses,\hinit}(\pref)$. Consequently, $\SigmaGvsTD_{\Instances,\Hypotheses,\hinit} \leq \NCTD(\Hypotheses)$. Combining this results with (i) hence completes the proof. \end{proof}

\clearpage
\section{Supplementary Materials for \secref{sec:seq-family-pref}: 
Extension of \tableref{tab:main:warmuth_class_sequence_pref}} \label{sec.appendix.seq-models_warmuth}\label{sec.appendix.extension-of-table2}

This section provides the details of preference functions for the Warmuth class.

\begin{table}[h!]\label{tab:warmth_example_families}
\centering
    \begin{subtable}[t]{\textwidth}
        \centering
        \begin{tabular}{l|lllll||l|l|l|l}
        \backslashbox{$\Hypotheses$}{$\Instances$} & $\instance_1$ & 
        $\instance_2$ & 
        $\instance_3$ & 
        $\instance_4$ & 
        $\instance_5$ & 
        $\TeachingSeq_{\textsf{const}}=\TeachingSeq_{\textsf{\glbl}}$ &
        $\TeachingSeq_{\gvs}$ &
        $\TeachingSeq_{\lcl}$ &
        $\TeachingSeq_{\lvs}$ 
        \\ \hline
        $\hypothesis_1$ 
        & 1 & 1 & 0  & 0 & 0 
        & $\paren{\instance_1, \instance_2, \instance_4}$ % const
        & $\paren{\instance_1, \instance_2}$% gvs
        & $\paren{\instance_1}$ % local
        & $\paren{\instance_1}$ % lvs
        \\
        $\hypothesis_2$
        & 0 & 1 & 1 & 0 & 0
        & $\paren{\instance_2, \instance_3, \instance_5}$ % const
        & $\paren{\instance_2, \instance_3}$% gvs
        & $\paren{\instance_3}$ % local
        & $\paren{\instance_2}$ % lvs
        \\
        $\hypothesis_3$
        & 0 & 0 & 1 & 1 & 0
        & $\paren{\instance_1, \instance_3, \instance_4}$ % const
        & $\paren{\instance_3, \instance_4}$% gvs
        & $\paren{\instance_3, \instance_4}$  % local
        & $\paren{\instance_3}$ % lvs
        \\
        $\hypothesis_4$
        & 0 & 0 & 0 & 1 & 1
        & $\paren{\instance_2, \instance_4, \instance_5}$ % const
        & $\paren{\instance_4, \instance_5}$% gvs
        & $\paren{\instance_5, \instance_4}$  % local
        & $\paren{\instance_4}$ % lvs
        \\
        $\hypothesis_5$
        & 1 & 0 & 0 & 0 & 1
        & $\paren{\instance_1, \instance_3, \instance_5}$ % const
        & $\paren{\instance_1, \instance_5}$% gvs
        & $\paren{\instance_5}$ % local
        & $\paren{\instance_5}$ % lvs
        \\
        $\hypothesis_6$
        & 1 & 1 & 0 & 1 & 0
        & $\paren{\instance_1, \instance_2, \instance_4}$ % const
        & $\paren{\instance_2, \instance_4}$% gvs
        & $\paren{\instance_4}$ % local
        & $\paren{\instance_3}$ % lvs
        \\
        $\hypothesis_7$
        & 0 & 1 & 1 & 0 & 1
        & $\paren{\instance_2, \instance_3, \instance_5}$ % const
        & $\paren{\instance_3, \instance_5}$% gvs
        & $\paren{\instance_3, \instance_5}$  % local
        & $\paren{\instance_4}$ % lvs
        \\
        $\hypothesis_8$
        & 1 & 0 & 1 & 1 & 0
        & $\paren{\instance_1, \instance_3, \instance_4}$ % const
        & $\paren{\instance_1, \instance_4}$% gvs
        & $\paren{\instance_4, \instance_3}$  % local
        & $\paren{\instance_5}$ % lvs
        \\
        $\hypothesis_9$
        & 0 & 1 & 0 & 1 & 1
        & $\paren{\instance_2, \instance_4, \instance_5}$ % const
        & $\paren{\instance_2, \instance_5}$% gvs
        & $\paren{\instance_4, \instance_5}$  % local
        & $\paren{\instance_1}$ % lvs
        \\
        $\hypothesis_{10}$
        & 1 & 0 & 1 & 0 & 1 
        & $\paren{\instance_1, \instance_3, \instance_5}$ % const
        & $\paren{\instance_1, \instance_3}$% gvs
        & $\paren{\instance_5, \instance_3}$ % local
        & $\paren{\instance_2}$ % lvs
        \end{tabular}
        \caption{The Warmuth hypothesis class and the corresponding teaching sequences (denoted by $\TeachingSeq$).}\label{tab:app:warmth_example_sequences}
    \end{subtable}
    \begin{subtable}[t]{.3\textwidth}
        \centering
        \begin{tabular}{c|c}
        $\hypothesis'$ & $\forall \hypothesis' \in \hypotheses$  \\\hline
            %  $\sigma_{\textsf{global}}(h, \cdot, \cdot)$ & 1 & 1 & 0 & 1 & 1 & 1\\
        $\sigma_{\textsf{const}}(\hypothesis', \cdot, \cdot)$ & \multirow{2}{*}{0}\\
        $\sigma_{\glbl}(\hypothesis', \cdot, \cdot)$ \\
    \end{tabular}
    %     \begin{tabular}{c|cccccccccc}
    %     $\hypothesis$ & $\hypothesis_1$ & $\hypothesis_2$ & $\hypothesis_3$ & $\hypothesis_4$ & $\hypothesis_5$ & $\hypothesis_6$ & $\hypothesis_7$ & $\hypothesis_8$ & $\hypothesis_9$ & $\hypothesis_{10}$ \\\hline
    %         %  $\sigma_{\textsf{global}}(h, \cdot, \cdot)$ & 1 & 1 & 0 & 1 & 1 & 1\\
    %     $\sigma_{\textsf{const}}(\hypothesis, \cdot, \cdot)$ & \multirow{2}{*}{0} & 
    %     \multirow{2}{*}{0} & 
    %     \multirow{2}{*}{0} & 
    %     \multirow{2}{*}{0} & 
    %     \multirow{2}{*}{0} & 
    %     \multirow{2}{*}{0} & 
    %     \multirow{2}{*}{0} & 
    %     \multirow{2}{*}{0} & 
    %     \multirow{2}{*}{0} & 
    %     \multirow{2}{*}{0} \\
    %     $\sigma_{\glbl}(\hypothesis, \cdot, \cdot)$ \\
    % \end{tabular}
    \caption{$\sigma_{\textsf{const}}$ and $\sigma_{\glbl}$} \label{tab:app:pref_const_global}
    \end{subtable}
    \quad
    \begin{subtable}[t]{.6\textwidth}
        \centering
        
        \begin{tabular}{c|p{.1cm}p{.1cm}p{.1cm}p{.1cm}p{.1cm}p{.1cm}p{.1cm}p{.1cm}p{.1cm}p{.1cm}}
        % \backslashbox{$\hypothesis'$}{$\hypothesis$} &
        {$\hypothesis\backslash \hypothesis'$} & 
        $\hypothesis_1$ & $\hypothesis_2$ & $\hypothesis_3$ & $\hypothesis_4$ & $\hypothesis_5$ & $\hypothesis_6$ & $\hypothesis_7$ & $\hypothesis_8$ & $\hypothesis_9$ & $\hypothesis_{10}$ \\\hline
            %  $\sigma_{\textsf{global}}(h, \cdot, \cdot)$ & 1 & 1 & 0 & 1 & 1 & 1\\
        $\sigma_{\lcl}(\hypothesis'; \cdot, \hypothesis=\hypothesis_1)$ & 
        0 & 2 & 4 & 4 & 2 & 1 & 3 & 3 & 3 & 3 \\
        %\vspace{-3mm}{\footnotesize $\dots$} 
        {\footnotesize $\dots$}&  
    \end{tabular}
    \caption{$\sigma_{\lcl}$ %$\sigma_{\lcl}(\hypothesis; \hypothesis'=\hypothesis_1, \cdot) = \hypothesis \Delta \hypothesis'$ 
    representing the Hamming distance between $\hypothesis'$ and $\hypothesis$. } \label{tab:app:pref_local}
    \end{subtable}
        \begin{subtable}[t]{\textwidth}
    % \begin{tabular}{c|p{.9cm}p{.9cm}p{.9cm}p{.9cm}p{.9cm}p{.9cm}p{.9cm}p{.9cm}p{.9cm}p{.9cm}}
    \scalebox{0.8}{
    \begin{tabular}{c|cccccccccc}
    $\hypothesis'$ & $\hypothesis_1$ & $\hypothesis_2$ & $\hypothesis_3$ & $\hypothesis_4$ & $\hypothesis_5$ & $\hypothesis_6$ & $\hypothesis_7$ & $\hypothesis_8$ &  $\hypothesis_9$ & $\hypothesis_{10}$\\\hline
         \multirow{2}{*}{$\hypotheses$} & 
         $\{\hypothesis_1, \hypothesis_6\}$ & 
         $\{\hypothesis_2, \hypothesis_7\}$ &
         $\{\hypothesis_3, \hypothesis_8\}$ & 
         $\{\hypothesis_4, \hypothesis_9\}$ & 
         $\{\hypothesis_5, \hypothesis_{10}\}$ &
         $\{\hypothesis_6, \hypothesis_9\}$ & 
         $\{\hypothesis_7, \hypothesis_{10}\}$ & 
         $\{\hypothesis_8, \hypothesis_6\}$ & 
         $\{\hypothesis_9, \hypothesis_7\}$ & 
         $\{\hypothesis_{10}, \hypothesis_8\}$ \\
        & 
         $\{\hypothesis_1\}$ & 
         $\{\hypothesis_2\}$ &
         $\{\hypothesis_3\}$ & 
         $\{\hypothesis_4\}$ & 
         $\{\hypothesis_5\}$ &
         $\{\hypothesis_6\}$ & 
         $\{\hypothesis_7\}$ & 
         $\{\hypothesis_8\}$ & 
         $\{\hypothesis_9\}$ & 
         $\{\hypothesis_{10}\}$ \\\hline
        %  $\sigma$ & 
        %  $\sigma(\hypothesis_1, \cdot, \hypotheses) = 0$ & $\sigma(\hypothesis_2, \cdot, \hypotheses) = 0$ &
        %  $\sigma(\hypothesis_3, \cdot, \hypotheses) = 0$ &
        %  $\sigma(\hypothesis_4, \cdot, \hypotheses) = 0$ &
        %  $\sigma(\hypothesis_5, \cdot, \hypotheses) = 0$ &
        %  $\sigma(\hypothesis_6, \cdot, \hypotheses) = 0$
         $\sigma_\gvs$ & 
         0 & 0 & 0 & 0 & 0 & 0 & 0 & 0 & 0 & 0 \\
        %  $\sigma(h, \cdot, \cdot)$ & 1 & 1 & 0 & 1 & 1 & 1\\
        % $\sigma(\cdot, \cdot, \cdot)$ & 0 & 0 & 0 & 0 & 0 & 0\\
    \end{tabular}}
    \caption{$\sigma_\gvs(\hypothesis'; \hypotheses, \cdot)$}\label{tab:app:pref_gvs}
    \end{subtable}
     \begin{subtable}[t]{\textwidth}
        \centering
        % \begin{tabular}{c|p{1cm}p{1cm}p{1cm}p{1cm}p{1cm}p{1cm}p{1cm}p{1cm}p{1cm}p{1cm}}
        \scalebox{0.85}{
        \begin{tabular}{c|cc|cc|cc|cc|cc}
        % \backslashbox{$\hypothesis'$}{$\hypothesis$} &
        $\hypothesis'$ & 
         \multicolumn{2}{c|}{$\hypothesis_1$} & 
         \multicolumn{2}{c|}{$\hypothesis_2$} & 
         \multicolumn{2}{c|}{$\hypothesis_3$} & 
         \multicolumn{2}{c|}{$\hypothesis_4$} & 
         \multicolumn{2}{c}{$\hypothesis_5$}
         \\\hline
        \multirow{2}{*}{$\hypotheses$} & \multicolumn{2}{c|}{$\{\hypothesis_1\} \cup$} & \multicolumn{2}{c|}{$\{\hypothesis_2\} \cup$} &
         \multicolumn{2}{c|}{$\{\hypothesis_3\} \cup$} &
         \multicolumn{2}{c|}{$\{\hypothesis_4\} \cup$} &
         \multicolumn{2}{c}{$\{\hypothesis_5\} \cup$}
         \\
        &\multicolumn{2}{c|}{$\{\hypothesis_5, \hypothesis_6,\hypothesis_8,\hypothesis_{10}\}^*$}
        &\multicolumn{2}{c|}{$\{\hypothesis_1, \hypothesis_7,\hypothesis_6,\hypothesis_{9}\}^*$}
        &\multicolumn{2}{c|}{$\{\hypothesis_2, \hypothesis_7,\hypothesis_8,\hypothesis_{10}\}^*$}
        &\multicolumn{2}{c|}{$\{\hypothesis_3, \hypothesis_6,\hypothesis_8,\hypothesis_{9}\}^*$}
        &\multicolumn{2}{c}{$\{\hypothesis_4, \hypothesis_7,\hypothesis_9,\hypothesis_{10}\}^*$}
         \\\hline
         $\hypothesis$ & 
         \multicolumn{2}{c|}{$\hypothesis_1$} & 
         $\hypothesis_1$ & $\hypothesis_2$ &
         $\hypothesis_1$ & $\hypothesis_3$ &
         $\hypothesis_1$ & $\hypothesis_4$ &
         $\hypothesis_1$ & $\hypothesis_5$
         \\\hline
        %  $\sigma$ & 
        %  $\sigma(\hypothesis_1, \cdot, \hypotheses) = 0$ & $\sigma(\hypothesis_2, \cdot, \hypotheses) = 0$ &
        %  $\sigma(\hypothesis_3, \cdot, \hypotheses) = 0$ &
        %  $\sigma(\hypothesis_4, \cdot, \hypotheses) = 0$ &
        %  $\sigma(\hypothesis_5, \cdot, \hypotheses) = 0$ &
        %  $\sigma(\hypothesis_6, \cdot, \hypotheses) = 0$
        $\sigma_\lvs$ & 
        \multicolumn{2}{c|}{0} & 
        0 & 0 & 0 & 0 & 0 & 0 & 0 & 0\\%\hline
        %  $\sigma(h, \cdot, \cdot)$ & 1 & 1 & 0 & 1 & 1 & 1\\
        % $\sigma(\cdot, \cdot, \cdot)$ & 0 & 0 & 0 & 0 & 0 & 0\\
        \end{tabular}
        }\\
        $\vdots$\\
        \scalebox{0.85}{
        \begin{tabular}{c|cc|cc|cc|cc|cc}
        % \backslashbox{$\hypothesis'$}{$\hypothesis$} &
        %\hline
         $\hypothesis'$ & 
         \multicolumn{2}{c|}{$\hypothesis_6$} & 
         \multicolumn{2}{c|}{$\hypothesis_7$} & 
         \multicolumn{2}{c|}{$\hypothesis_8$} & 
         \multicolumn{2}{c|}{$\hypothesis_9$} & 
         \multicolumn{2}{c}{$\hypothesis_{10}$}
         \\\hline
                 \multirow{2}{*}{$\hypotheses$} & \multicolumn{2}{c|}{$\{\hypothesis_6\} \cup$} & \multicolumn{2}{c|}{$\{\hypothesis_7\} \cup$} &
         \multicolumn{2}{c|}{$\{\hypothesis_8\} \cup$} &
         \multicolumn{2}{c|}{$\{\hypothesis_9\} \cup$} &
         \multicolumn{2}{c}{$\{\hypothesis_{10}\} \cup$}
         \\ 
          &
         \multicolumn{2}{c|}{$\{\hypothesis_1, \hypothesis_4,\hypothesis_5,\hypothesis_{9}\}^*$}
        &\multicolumn{2}{c|}{$\{\hypothesis_1, \hypothesis_2,\hypothesis_5,\hypothesis_{10}\}^*$}
        &\multicolumn{2}{c|}{$\{\hypothesis_1, \hypothesis_2,\hypothesis_3,\hypothesis_{6}\}^*$}
        &\multicolumn{2}{c|}{$\{\hypothesis_2, \hypothesis_3,\hypothesis_4,\hypothesis_{7}\}^*$}
        &\multicolumn{2}{c}{$\{\hypothesis_3, \hypothesis_4,\hypothesis_5,\hypothesis_{8}\}^*$}
         \\\hline
         $\hypothesis$ & 
         $\hypothesis_1$ & $\hypothesis_6$ &
         $\hypothesis_1$ & $\hypothesis_7$ &
         $\hypothesis_1$ & $\hypothesis_8$ &
         $\hypothesis_1$ & $\hypothesis_9$ &
         $\hypothesis_1$ & $\hypothesis_{10}$ \\\hline
        %  $\sigma$ & 
        %  $\sigma(\hypothesis_1, \cdot, \hypotheses) = 0$ & $\sigma(\hypothesis_2, \cdot, \hypotheses) = 0$ &
        %  $\sigma(\hypothesis_3, \cdot, \hypotheses) = 0$ &
        %  $\sigma(\hypothesis_4, \cdot, \hypotheses) = 0$ &
        %  $\sigma(\hypothesis_5, \cdot, \hypotheses) = 0$ &
        %  $\sigma(\hypothesis_6, \cdot, \hypotheses) = 0$
        $\sigma_\lvs$ & 
        0 & 0 & 
        0 & 0 & 0 & 0 & 0 & 0 & 0 & 0\\
        %  $\sigma(h, \cdot, \cdot)$ & 1 & 1 & 0 & 1 & 1 & 1\\
        % $\sigma(\cdot, \cdot, \cdot)$ & 0 & 0 & 0 & 0 & 0 & 0\\
    \end{tabular}}
    \caption{$\sigma_\lvs(\hypothesis'; \hypotheses, \hypothesis)$. %Example $\sigma_\lvs(\hypothesis'; \hypotheses, \hypothesis)$ values for $\Teacher_{\lvs}$.
    %The preference function for     $\Teacher_{\lvs}$. For all other $\hypothesis', \hypotheses, \hypothesis$ tuples not specified in the table, $\sigma_\lvs(\hypothesis'; \hypotheses, \hypothesis) = 1$. 
    Here, $\{\cdot \}^*$ denotes all subsets.}\label{tab:app:pref_lvs}
    \end{subtable}
\caption{Teaching sequences with different preference functions for the Warmuth hypothesis class
\cite{doliwa2014recursive}}\label{tab:warmuth_class_sequence_pref}
\end{table}

\clearpage
% !TEX root =  main.tex
%%%%%%%%%%%%%%%%%%%%%%%%%%%%%%%%%%%%%%%%%%%%%%%%%%%%%%%%%%
%%%%%%%%%%%%%%%%%%%%%%%%%%%%%%%%%%%%%%%%%%%%%%%%%%%%%%%%%%
\section{Supplementary Materials for \secref{sec:seq-family-pref}: Proof for \thmref{thm:local-eq-GVS}
%Preference-based Sequential Models -- Batch vs. Sequence
} \label{sec.appendix.seq-models_globalVS-local}

% \begin{proof}(Sketch)
We divide the proof into three parts. The first part shows that the interactions of the two families is $\SigmaGlobal$. In part 2 and part 3 of the proof, we show that there exist examples of hypothesis classes, such that $\SigmaLocalTD_{\Instances,\Hypotheses,\hinit} > \SigmaGvsTD_{\Instances,\Hypotheses,\hinit}$, or $\SigmaLocalTD_{\Instances,\Hypotheses,\hinit} < \SigmaGvsTD_{\Instances,\Hypotheses,\hinit}$.

% \end{proof}

% \item $\SigmaGvs\cap \SigmaLocal=\SigmaGlobal$
% \item There exist $\Hypotheses$, $\Instances$, where $\forall \hinit\in \Hypotheses, 
% \SigmaLocalTD_{\Instances,\Hypotheses,\hinit} > \SigmaGvsTD_{\Instances,\Hypotheses,\hinit}$    
% \item There exist $\Hypotheses$, $\Instances$, where $\forall \hinit\in \Hypotheses, 
% \SigmaLocalTD_{\Instances,\Hypotheses,\hinit} < \SigmaGvsTD_{\Instances,\Hypotheses,\hinit}$

%: $\SigmaGvs\cap \SigmaLocal=\SigmaGlobal$
\subsection{Part 1}
In this subsection, we provide the full proof for part 1 of \thmref{thm:local-eq-GVS}, i.e., $\SigmaGvs\cap \SigmaLocal=\SigmaGlobal$. 

Intuitively, observe that the input domains between $\sigma_\lcl \in \SigmaGlobal$ and $\sigma_\gvs \in \SigmaGvs$ overlaps at the domain of the first argument, which is the one taken by $\sigma_\glbl$. Therefore, $\forall \sigma \in \SigmaGlobal, \sigma \in \SigmaGvs \cap \SigmaLocal$. We formalize such idea in the proof below. %We defer the formal argument of this proof in Appendix~\ref{sec.appendix.seq-models_globalVS-local}.

%\begin{lemma}\label{lem:intersect-VS-local}
%Intersection of $\Sigma_{globalVS}$ and $\Sigma_{local}$ is $\Sigma_{global}$.
%\end{lemma}
\begin{proof}
Assume $\sigma \in \SigmaLocal \cap \SigmaGvs$. Then, by the definitions of $\SigmaLocal$ and $\SigmaGvs$, we get 
\begin{enumerate}[(i)]\denselist
    \item $\exists g^1$, s.t. $\forall \hypothesis, \hypothesis' \in \Hypotheses: \sigma( \hypothesis' ; \cdot, \hypothesis)  = g^1(\hypothesis', \hypothesis)$, and 
    \item $\exists g^2$, s.t. $\forall \hypothesis' \in \Hypotheses, \hypotheses \subseteq \Hypotheses: \sigma( \hypothesis' ; \hypotheses, \cdot)  = g^2(\hypothesis', \hypotheses)$ 
\end{enumerate}
Now consider $\hypothesis', \hypothesis^1, \hypothesis^2 \in \Hypotheses$, and $\hypotheses^1, \hypotheses^2 \subseteq \Hypotheses$. According to (i), $\sigma(\hypothesis'; \hypotheses^1, \hypothesis^1) = \sigma(\hypothesis'; \hypotheses^2, \hypothesis^1)$. Also, according to (ii) $\sigma(\hypothesis'; \hypotheses^2, \hypothesis^1) = \sigma(\hypothesis'; \hypotheses^2, \hypothesis^2)$. This indicates that, $\forall \hypothesis', \hypothesis^1, \hypothesis^2 \in \Hypotheses; \hypotheses^1, \hypotheses^2 \subseteq \Hypotheses: \sigma(\hypothesis'; \hypotheses^1, \hypothesis^1) = \sigma(\hypothesis'; \hypotheses^2, \hypothesis^2)$. In other words, there exist $g^3: \Hypotheses \to \reals$, such that $\forall \hypothesis' \in \Hypotheses: \sigma(\hypothesis';\cdot, \cdot) = g^3(\hypothesis')$. Thus, $\sigma \in \SigmaGlobal$.
\end{proof}
   
%%From lemma~\ref{lem:intersect-VS-local} we can drive that $|\Sigma_{local} \cap \sigma_{globalVS}| = |\Sigma_{global}| = g(m)$.

%%%%%%%%%%%%%%%%%%%%%%%%%%%%%%%%%%%%%%%%%%%%%%%%%%%%%%%%%%
%%%%%%%%%%%%%%%%%%%%%%%%%%%%%%%%%%%%%%%%%%%%%%%%%%%%%%%%%%
\subsection{Part 2}

\paragraph{Part 2.} Next, we show that there exists $(\Hypotheses, \Instances)$, such that $\forall \hinit\in \Hypotheses$, $\SigmaLocalTD_{\Instances,\Hypotheses,\hinit} > \SigmaGvsTD_{\Instances,\Hypotheses,\hinit}$. To prove this statement, we first establish the following lemma.
%as detailed in the appendix, for any hypothesis class $\Hypotheses$, $\Instances$, and $\hinit\in \Hypotheses$, if $\SigmaLocalTD_{\Instances,\Hypotheses,\hinit} = 1$, then $\SigmaGlobalTD_{\Instances,\Hypotheses,\hinit} = 1$. %The prove of the above results is provided in the Appendix. 
%%%%%%%%%%%%%%%%%%%%%%%%%%%%%%%%%%%%%%%%
%%%%%%%%%%%%%%%%%%%%%%%%%%%%%%%%%%%%%%%% We can reuse example from Appendix A.1
% \begin{table}[h!]
% \centering
% \begin{tabular}{|c| c c c c |c|c|}
% \toprule
% $\Hypotheses \backslash \Instances$ & $\instance_0$ & $\instance_1$ & $\instance_2$ & $\instance_3$ & $T(\hypothesis)$ & $\Hypotheses(T(\hypothesis))$\\
% \midrule
% $\hypothesis_0$ & 1 & 0 & 0 & 1 & $\{(\instance_0, 1)\}$ & 
% $\{\hypothesis_0, \hypothesis_1\}$\\
% $\hypothesis_1$ & 1 & 1 & 0 & 0 & $\{(\instance_1, 1)\}$ & 
% $\{\hypothesis_1, \hypothesis_2\}$\\
% $\hypothesis_2$ & 0 & 1 & 1 & 0 & $\{(\instance_2, 1)\}$ & 
% $\{\hypothesis_2, \hypothesis_3\}$\\
% $\hypothesis_3$ & 0 & 0 & 1 & 1 & $\{(\instance_3, 1)\}$ & 
% $\{\hypothesis_3, \hypothesis_0\}$\\
% \bottomrule
% \end{tabular}
% \caption{Example of $\TD_{\Sigma_{local}}(\Hypotheses) > \NCTD(\Hypotheses) $}\label{tab:Local>NCTD}
% \end{table}
%%%%%%%%%%%%%%%%%%%%%%%%%%%%%%%%%%%%%%%%
%%%%%%%%%%%%%%%%%%%%%%%%%%%%%%%%%%%%%%%%
%\adish{We can reuse example from Appendix A.1 instead of new example}.

\begin{lemma}\label{lem:local-rtd-1}
%If $\TD_{\Sigma_{local}}(\Hypotheses, \Instances, \hypothesis_0) = 1$. Then $\PBTD(\Hypotheses) = 1$.
For any $\Hypotheses$, $\Instances$, and $\hinit\in \Hypotheses$, if $\SigmaLocalTD_{\Instances,\Hypotheses,\hinit} = 1$, then $\SigmaGlobalTD_{\Instances,\Hypotheses,\hinit} = 1$.
\end{lemma}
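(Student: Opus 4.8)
The plan is to turn the given local preference function into a \emph{global} one by freezing its ``current hypothesis'' slot at the fixed initial hypothesis $\hinit$, and to observe that at a one-step teaching interaction only this first step matters, so the two functions behave identically. Concretely: assume $\SigmaLocalTD_{\Instances,\Hypotheses,\hinit}=1$ and let $\sigma_\lcl\in\SigmaLocal$ attain it, i.e. $\TD_{\Instances,\Hypotheses,\hinit}(\sigma_\lcl)=1$; write $\sigma_\lcl(\hypothesis';\hypotheses,\hypothesis)=\distfun(\hypothesis',\hypothesis)$ for its underlying two-argument function $\distfun:\Hypotheses\times\Hypotheses\to\reals$. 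I would then define $\sigma_\glbl(\hypothesis';\hypotheses,\hypothesis):=\distfun(\hypothesis',\hinit)$. This depends only on its first argument, so it has the syntactic form required by $\SigmaGlobal$; the one point needing a line of justification is $\sigma_\glbl\in\Sigma_{\CF}$, which follows from the generic fact that any preference function of the form $\sigma(\hypothesis';\hypotheses,\hypothesis)=\distfun'(\hypothesis')$ is collusion-free: if $\hat{\hypothesis}$ is the unique minimizer of $\distfun'$ over a version space $\hypotheses_t$, it remains the unique minimizer over every subset $\hypotheses_t\cap\Hypotheses(S)$ containing it, which is exactly the condition of \defref{def:seq-col-free}. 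Hence $\sigma_\glbl\in\SigmaGlobal$.

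Next I would show $\TD_{\Instances,\Hypotheses,\hinit}(\sigma_\glbl)=1$. Fix any target $\hstar$. Since $\TD_{\Instances,\Hypotheses,\hinit}(\sigma_\lcl)=1$ we have $\Val_{\sigma_\lcl}(\Hypotheses,\hinit,\hstar)=1$, so by the definition of $\Val$ there is a labeled example $\example$ with $\Candidate_{\sigma_\lcl}(\Hypotheses,\hinit,\example)=\{\hstar\}$. The key observation is that at this top-level call the relevant version space is $\Hypotheses\cap\Hypotheses(\{\example\})=\Hypotheses(\{\example\})$, and since $\sigma_\lcl$ ignores its version-space argument, $\Candidate_{\sigma_\lcl}(\Hypotheses,\hinit,\example)=\argmin_{\hypothesis'\in\Hypotheses(\{\example\})}\distfun(\hypothesis',\hinit)$. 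By definition of $\sigma_\glbl$ this is literally the same set as $\Candidate_{\sigma_\glbl}(\Hypotheses,\hinit,\example)$, so $\Candidate_{\sigma_\glbl}(\Hypotheses,\hinit,\example)=\{\hstar\}$ as well, giving $\Val_{\sigma_\glbl}(\Hypotheses,\hinit,\hstar)=1$. Taking the maximum over $\hstar$ yields $\TD_{\Instances,\Hypotheses,\hinit}(\sigma_\glbl)=1$, hence $\SigmaGlobalTD_{\Instances,\Hypotheses,\hinit}\le 1$.

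For the matching lower bound I would use $\SigmaGlobal\subseteq\SigmaLocal$ (immediate from the definitions, or from Part~1 of \thmref{thm:local-eq-GVS}): minimizing $\TD(\cdot)$ over the smaller family can only yield a larger value, so $\SigmaGlobalTD_{\Instances,\Hypotheses,\hinit}\ge\SigmaLocalTD_{\Instances,\Hypotheses,\hinit}=1$. Combining the two bounds gives $\SigmaGlobalTD_{\Instances,\Hypotheses,\hinit}=1$. I do not expect a genuine obstacle here; the only subtle points are (a) verifying membership $\sigma_\glbl\in\Sigma_{\CF}$, dispatched by the generic collusion-freeness of global functions, and (b) noticing that the candidate sets of $\sigma_\lcl$ and $\sigma_\glbl$ coincide at the first teaching step precisely because the initial version space is the whole class $\Hypotheses$ and the local function discards the version-space input, so ``current hypothesis $=\hinit$'' is the only case that ever arises in a one-step interaction.
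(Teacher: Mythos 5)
Your proposal is correct and follows essentially the same route as the paper's proof: freeze the current-hypothesis slot of $\sigma_\lcl$ at $\hinit$ to obtain $\sigma_\glbl$, and observe that in a one-step interaction only the transition from $\hinit$ ever occurs, so the candidate sets (equivalently, the size-one teaching sets) coincide. Your explicit check that $\sigma_\glbl\in\Sigma_{\CF}$ and the lower bound via $\SigmaGlobal\subseteq\SigmaLocal$ are details the paper leaves implicit, but they do not change the argument.
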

\begin{proof}[Proof of \lemref{lem:local-rtd-1}]
If $\SigmaLocalTD_{\Instances,\Hypotheses,\hinit} = 1$, there should be some $\sigma_{\lcl} \in \SigmaLocal$, such that $\TD_{\Instances,\Hypotheses,\hinit}(\sigma_\lcl) = 1$. Now consider $\sigma_{\glbl}$ such that $\forall \hypothesis', \sigma_\glbl(\hypothesis';\cdot,\cdot) = \sigma_{\lcl}(\hypothesis';\cdot,\hinit)$. If $\Teacher_{\sigma_{\lcl}}$ is the best teacher for $\sigma_{\lcl}$, then $\forall \hypothesis \in \Hypotheses: |T_{\sigma_{\lcl}}(\hypothesis)| = 1$, this indicates that $\hypothesis = \argmin_{\hypothesis' \in \Hypotheses(T_{\sigma_{\lcl}}(\hypothesis))} \sigma_{\lcl}(\hypothesis'; \cdot,\hinit)$ and $|\argmin_{\hypothesis' \in \Hypotheses(T_{\sigma_{\lcl}}(\hypothesis))} \sigma_{\lcl}(\hypothesis';\cdot, \hypothesis_0)| = 1$. Subsequently, $\hypothesis = \argmin_{\hypothesis' \in \Hypotheses(T_{\sigma_{\lcl}}(\hypothesis))} \sigma_{\glbl}(\hypothesis';\cdot,\cdot)$ and $|\argmin_{\hypothesis' \in \Hypotheses(T_{\sigma_{local}})} \sigma_{\glbl}(\hypothesis',\cdot,\cdot)| = 1$. In other words, $\Teacher_{\sigma_{\lcl}}$ is also a teacher for $\sigma_{\lcl}$. This indicates that, $\RTD(\Hypotheses) = \SigmaGlobalTD_{\Instances,\Hypotheses,\hinit} = \TD_{\Instances,\Hypotheses,\hinit}(\sigma_\glbl) = 1$.
\end{proof}

Now we are ready to provide the proof for Part 2.
\begin{proof}[Proof of Part 2 of \thmref{thm:local-eq-GVS}]
We identify $\Hypotheses$, $\Instances$, $\hinit$, where $\SigmaGvsTD_{\Instances,\Hypotheses,\hinit} = 1$ and $\SigmaGlobalTD_{\Instances,\Hypotheses,\hinit} = \RTD = 2$. %The details of the example is provided in 
\tableref{tab:batch_model_example_h2} illustrates such an example.
%Appendix~\ref{sec.appendix.seq-models_globalVS-local}. 
In the example, since $\RTD = 2$,
then by \lemref{lem:local-rtd-1}, it must hold that $\SigmaLocalTD_{\Instances,\Hypotheses,\hinit} > 1 = \SigmaGvsTD_{\Instances,\Hypotheses,\hinit}=\NCTD$.
%Table~\ref{tab:Local>NCTD}, is illustrating hypotheses space $\Hypotheses$, and a non clashing teacher $T$, for $\Hypotheses$. Last column of the table is the version space after teaching $T(\hypothesis)$. By looking at last column, it is clear that $T$ is non-clashing. This indicates that $\NCTD(\Hypotheses) = 1$. But $\TD_{best}(\Hypotheses) > 1$, since there is no column which has only one 1 or only one 0. Thus, $\PBTD(\Hypotheses) \geq \TD_{best}(\Hypotheses) \geq 2$. Assume for the sake of contradiction that $\TD_{\Sigma_{local}}(\Hypotheses) = 1$, then According lemma~\ref{lem:local-rtd-1}, $\PBTD(\Hypotheses) = 1$. This indicates that $\TD_{\Sigma_{local}}(\Hypotheses) \geq 2$. Subsequently, $\TD_{\Sigma_{local}}(\Hypotheses) > \NCTD(\Hypotheses)$.
\end{proof}

%%%%%%%%%%%%%%%%%%%%%%%%%%%%%%%%%%%%%%%%%%%%%%%%%%%%%%%%%%
%%%%%%%%%%%%%%%%%%%%%%%%%%%%%%%%%%%%%%%%%%%%%%%%%%%%%%%%%%
\subsection{Part 3}
Here, we show that there exists a problem instance $(\Hypotheses, \Instances)$, such that $\forall \hinit\in \Hypotheses$, $\SigmaLocalTD_{\Instances,\Hypotheses,\hinit} < \SigmaGvsTD_{\Instances,\Hypotheses,\hinit}$.
Consider the hypothesis class which consists of the powerset $\Hypotheses=\{0,1\}^k$. First, as proven in Lemma~\ref{lem:nctd-powerset-kby2} below, we show that $\forall \hinit\in \Hypotheses$, $\SigmaGvsTD_{\Instances,\Hypotheses,\hinit} = \NCTD \geq \ceil{k/2}$. 

%First, we prove a general result that for any problem instance of power set of size $k$, NCTD is at least $\ceil{\frac{k}{2}}$
%for any problem instance of power set of size $k$, NCTD is at least $\ceil{\frac{k}{2}}$
 
% \begin{lemma}\label{lem:nctd-powerset-kby2}
% If $T$ is a non clashing teacher and $\hypothesis, \hypothesis' \in \Hypotheses$. Where $\hypothesis = \hypothesis' \triangle \instance$, we have $(\instance, \hypothesis(\instance)) \in T(\hypothesis)$, or $(\instance, \hypothesis'(\instance)) \in T(\hypothesis')$
% \end{lemma}
% \begin{proof}
% Assume that none of $(\instance, \hypothesis(\instance)) \in T(\hypothesis)$, and $(\instance, \hypothesis'(\instance)) \in T(\hypothesis')$, holds. Since $\hypothesis$, and $\hypothesis'$ are only different on $\instance$, then both $T(\hypothesis)$, and $T(\hypothesis')$ must be consistent with both $\hypothesis$, and $\hypothesis'$.
% \end{proof}

\begin{lemma}[Based on Theorem 23 of \cite{pmlr-v98-kirkpatrick19a}]\label{lem:nctd-powerset-kby2}
Consider the hypothesis class which consists of the powerset $\Hypotheses=\{0,1\}^k$. Then, $\NCTD \geq \ceil{k/2}$.
\end{lemma}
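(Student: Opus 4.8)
The plan is to reduce the non-clashing condition to a clean combinatorial statement about the $k$ coordinates and then prove the bound by a double-counting argument over coordinate subsets; this is essentially the argument underlying Theorem~23 of \cite{pmlr-v98-kirkpatrick19a}. First I would identify $\Hypotheses=\{0,1\}^k$ with $2^{[k]}$, so that a teacher mapping is $\Teacher:\{0,1\}^k\to 2^{[k]}$ and, since the class is the full cube, ``$\Teacher(\hypothesis)$ is consistent with $\hypothesis'$'' simply means that $\hypothesis$ and $\hypothesis'$ agree on every coordinate in $\Teacher(\hypothesis)$. Hence $\Teacher$ is non-clashing iff for all distinct $\hypothesis,\hypothesis'$ they disagree on at least one coordinate of $\Teacher(\hypothesis)\cup\Teacher(\hypothesis')$. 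I would then suppose, toward a contradiction, that $|\Teacher(\hypothesis)|\le d:=\lceil k/2\rceil-1$ for every $\hypothesis$.

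The key step is the following observation: for any coordinate set $S\subseteq[k]$, the restriction map $\hypothesis\mapsto \hypothesis|_S$ is injective on $\{\hypothesis:\Teacher(\hypothesis)\subseteq S\}$, because if two such $\hypothesis,\hypothesis'$ agreed on $S$ then they would agree on $\Teacher(\hypothesis)\cup\Teacher(\hypothesis')\subseteq S$, forcing $\hypothesis=\hypothesis'$ by non-clashing; consequently $|\{\hypothesis:\Teacher(\hypothesis)\subseteq S\}|\le 2^{|S|}$. Next I would fix the window size $s:=2d$ (note $s\le k$ when $d=\lceil k/2\rceil-1$) and double count the pairs $(\hypothesis,S)$ with $|S|=s$ and $\Teacher(\hypothesis)\subseteq S$. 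Summing over $S$, this count is at most $\binom{k}{s}2^{s}$ by the observation; summing over $\hypothesis$, it equals $\sum_{\hypothesis}\binom{k-|\Teacher(\hypothesis)|}{s-|\Teacher(\hypothesis)|}\ge 2^{k}\binom{k-d}{s-d}$, using that $\binom{k-t}{s-t}$ is nonincreasing in $t$ for $0\le t\le d\le s\le k$. Combining the two bounds gives $2^{k-2d}\binom{k-d}{d}\le \binom{k}{2d}$.

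Finally I would check that this inequality is false at $d=\lceil k/2\rceil-1$, which is a short elementary computation: for $k=2m$ it reduces to $4\binom{m+1}{2}\le\binom{2m}{2}$, i.e.\ $2m+2\le 2m-1$, and for $k=2m+1$ it reduces to $2(m+1)\le 2m+1$, i.e.\ $2m+2\le 2m+1$; in both cases the inequality fails. This contradiction yields $\NCTD(\{0,1\}^k)\ge\lceil k/2\rceil$. The only genuinely delicate point I expect is the choice of the window size $s$: taking $s=d$ or $s=d+1$ makes the resulting binomial inequality too weak to fail for large $k$, while $s=2d$ makes it fail by a clean constant-factor margin, after which all remaining estimates are routine.
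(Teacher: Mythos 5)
Your proof is correct, but it takes a genuinely different route from the paper's. The paper argues locally along the edges of the hypercube: for any pair $\hypothesis,\hypothesis\triangle\instance$ differing in a single coordinate, non-clashing forces $\instance$ to appear in $\Teacher(\hypothesis)$ or in $\Teacher(\hypothesis\triangle\instance)$; since there are $k\cdot 2^k/2$ such pairs and each charges a distinct (hypothesis, instance) incidence, $\sum_{\hypothesis}|\Teacher(\hypothesis)|\geq k\cdot 2^k/2$, and pigeonhole finishes. Your argument is instead a global double count: the injectivity of $\hypothesis\mapsto\hypothesis|_S$ on $\{\hypothesis:\Teacher(\hypothesis)\subseteq S\}$ (which is just the non-clashing condition restated) caps that set at $2^{|S|}$, and counting pairs $(\hypothesis,S)$ with $|S|=2d$ both ways yields $2^{k-2d}\binom{k-d}{d}\leq\binom{k}{2d}$, which indeed fails at $d=\ceil{k/2}-1$ in both parities (I checked the two reductions, including the small cases $d=0$). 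The paper's edge argument is shorter and gives the slightly stronger average statement $\frac{1}{2^k}\sum_{\hypothesis}|\Teacher(\hypothesis)|\geq k/2$ rather than only a bound on the maximum; your window argument is more machinery for this particular class, but its key ingredient --- at most $|\Hypotheses_{|S}|$ hypotheses can have teaching sets contained in any fixed instance set $S$ --- holds verbatim for an arbitrary class and an arbitrary non-clashing teacher, so it is the more portable of the two techniques. Your choice of window size $s=2d$ is indeed the one delicate point, and you handle it correctly.
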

\begin{proof}
First we make the following observation: If $T$ is a non clashing teacher and $\hypothesis, \hypothesis' \in \Hypotheses$ where $\hypothesis = \hypothesis' \triangle \instance$ (i.e., these two hypotheses only differ in their label on one instance), it must be the case that $(\instance, \hypothesis(\instance)) \in T(\hypothesis)$, or $(\instance, \hypothesis'(\instance)) \in T(\hypothesis')$. This holds by nothing that since $\hypothesis$, and $\hypothesis'$ are only different on $\instance$, if $\instance$ is absent in their teaching sequences, this would lead to violation of the non-clashing property of the teacher.

%Assume that none of $(\instance, \hypothesis(\instance)) \in T(\hypothesis)$, and $(\instance, \hypothesis'(\instance)) \in T(\hypothesis')$, holds. Since $\hypothesis$, and $\hypothesis'$ are only different on $\instance$, then both $T(\hypothesis)$, and $T(\hypothesis')$ must be consistent with both $\hypothesis$, and $\hypothesis'$.

Next we apply this observation on the powerset $k$ hypotheses class where $\Hypotheses$ consists of all hypotheses which have length $k$. This indicates that for every $\hypothesis \in \Hypotheses$, and $0 \leq j \leq (k-1)$ all k variants $\hypothesis \triangle \instance_j \in \Hypotheses$. For all $0 \leq j \leq (k-1)$ by using the above observation, for a pair $\hypothesis$ and $\hypothesis \triangle \instance_j$, we drive $\sum_{i = 0}^{2^k-1} |T(\hypothesis_i)| \geq \frac{k \cdot 2^k}{2}$. By applying the pigeon-hole principle, this indicates that there exist an $\hypothesis \in \Hypotheses$, where $|T(\hypothesis)| \geq \frac{k}{2}$. In other words $\NCTD(\Hypotheses) \geq \ceil{\frac{k}{2}}$.
\end{proof}
%, for any non clashing teacher $T$ 

%we use lemma~\ref{lem:col-triangle} for $\hypothesis$ and $\hypothesis \triangle \instance_j$, we'll drive $\sum_{i = 0}^{2^k} |T(\hypothesis_i)| \geq k * 2^k / 2$. This indicates that, for any non clashing teacher $T$ there exist an $\hypothesis \in \Hypotheses$, where $|T(\hypothesis)| \geq \frac{k}{2}$. In other words $\NCTD(\Hypotheses) \geq \ceil{\frac{k}{2}}$.
% > 128 * 3
%%%%%%%%%%%%%%%%%%%%%%%%%%%%%%%%%%%%%%%%%%%%%%
%%%%%%%%%%%%%%%%%%%%%%%%%%%%%%%%%%%%%%%%%%%%%%

Fix $k=7$, we get $\SigmaGvsTD_{\Instances,\Hypotheses,\hinit} = \NCTD(\Hypotheses) \geq 4$. On the other hand, we construct a preference function $\sigma\in \SigmaLocal$, where $\SigmaLocalTD_{\Instances,\Hypotheses,\hinit} \leq \TD_{\Instances,\Hypotheses,\hinit}(\pref) = 3$ for $k=7$. 

The example is detailed in Figure~\ref{fig:Local_is_less_than_NCTD_7}.  Intuitively, for any $\hinit\in \Hypotheses$, we construct a tree of hypotheses with branching factor $7$ at the top level, branching factor of $6$ at the next level, and so on. Here, each branch corresponds to one teaching example, and each path from $\hinit$ to $\hypothesis\in\Hypotheses$ corresponds to a teaching sequence $\Teacher_\lcl(h)$. We need a tree of depth at most $3$ to include all the $2^7=128$ hypotheses to be taught as nodes in the tree. This gives us a constructive procedure of $\sigma$ functions achieving $\TD_{\Instances,\Hypotheses,\hinit}(\pref) = 3 < \SigmaGvsTD_{\Instances,\Hypotheses,\hinit}$, which completes the proof.

% \begin{definition}[Power set of size $k$]
% We call hypotheses space $\Hypotheses$ with instances space $\Instances = \{ \instance_0, ..., \instance_{k-1}\}$ power set of size k, If $\Hypotheses$ has all possible binary functions on $\Instances$.
% \end{definition}

% Next, we consider $\Hypotheses$ as power set of size 7. 
% %table~\ref{tab:Local<NCTD} and 
%  We will conclude that $\TD_{\Sigma_{local}}(\Hypotheses, \Instances, \hypothesis_0) < \NCTD(\Hypotheses)$ by constructing a $\sigma$ local function with teaching complexity $3$. See Figure~\ref{fig:Local<NCTD-5}.
 
\begin{figure}[h!]
    \centering
    \begin{subfigure}[b]{\textwidth}
        \centering
        \includegraphics[width = 0.9\linewidth]{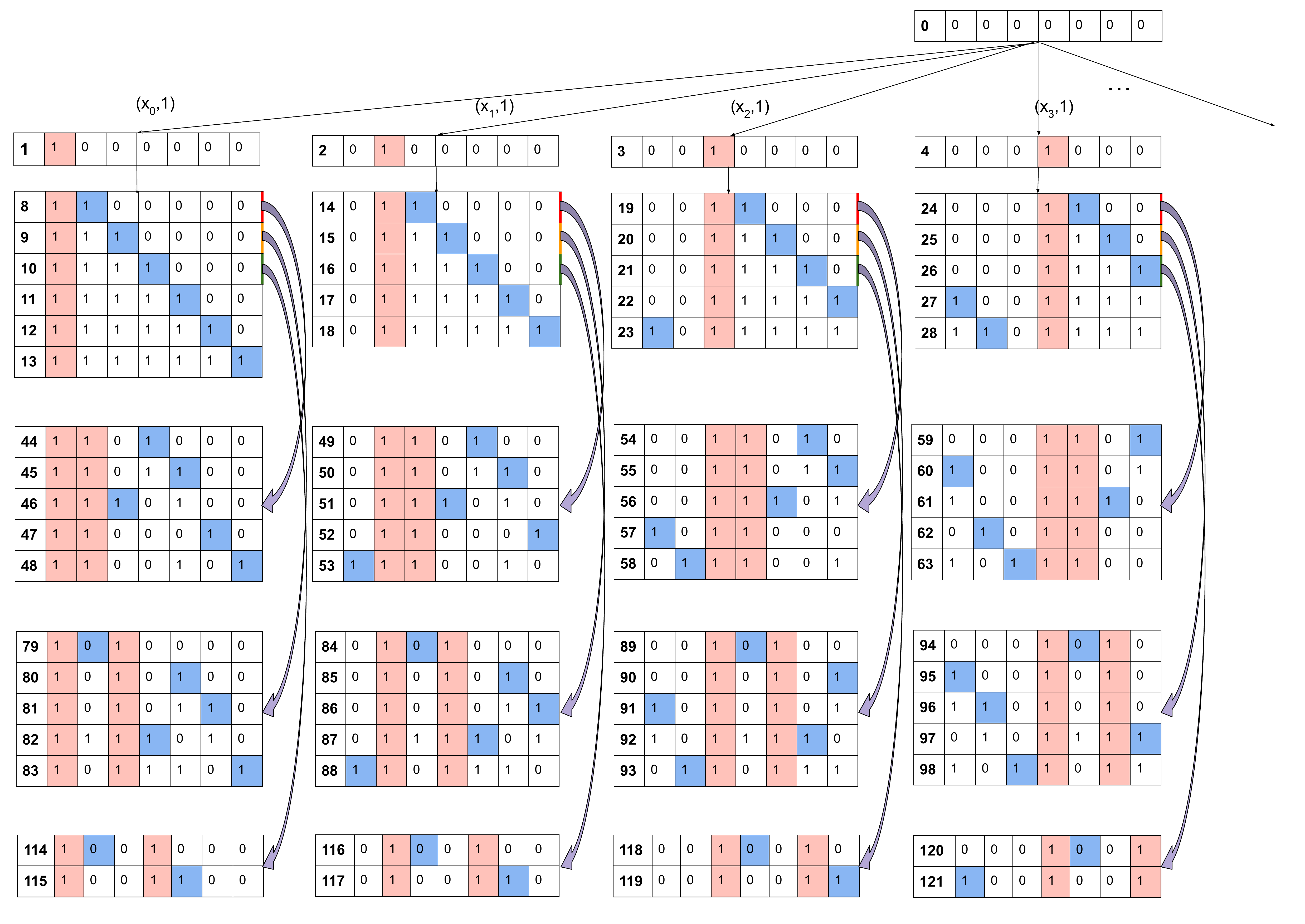}
        \caption{This figure is representing the teaching sequence for first four for direct children of $\hinit$ (top four most preferred hypothesis of $\hinit$ after $\hinit$) and all of their children.}
    \end{subfigure}
    %\vspace*{2mm}
    \centering
    \begin{subfigure}[b]{\textwidth}
        \centering
        \includegraphics[width = 0.9\linewidth]{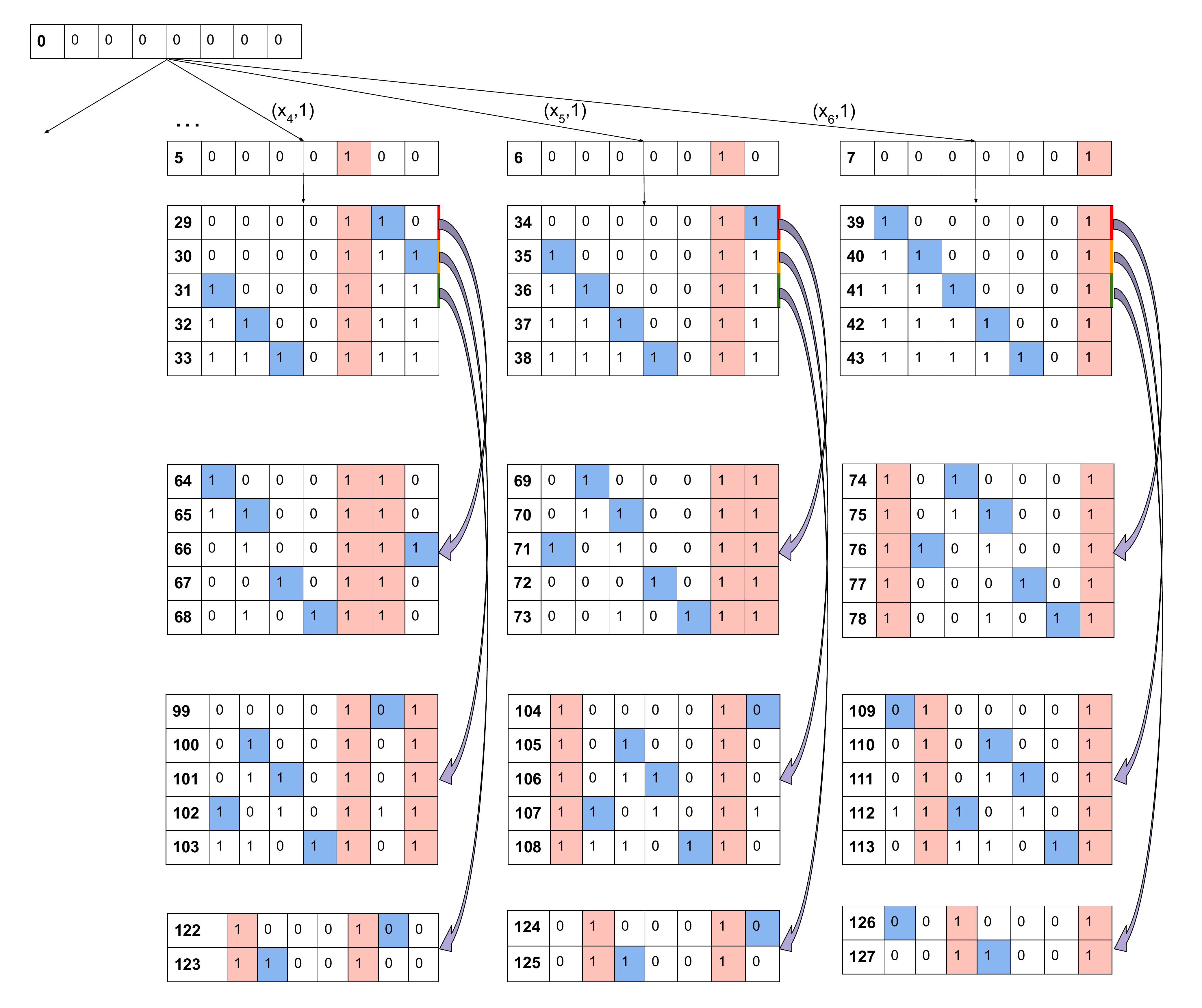}
        \caption{This figure is representing the teaching sequence for next three direct children of $\hinit$ (next three most preferred hypothesis of $\hinit$) and all of their children.}
    \end{subfigure}
    \caption{Details of teaching sequences, for a preference function $\sigma\in \SigmaLocal$, where $\TD_{\Instances,\Hypotheses,\hinit}(\pref) = 3$ for powerset $k=7$ class. For any hypothesis the cell with blue color is representing last teaching example in teaching sequence, and the cells with red color are representing rest of teaching sequence. Also see Table~\ref{tab:Local_is_less_than_NCTD_7} that lists down details for all the hypotheses in the left branch of the tree.}
    \label{fig:Local_is_less_than_NCTD_7}
\end{figure}

\begin{table}[h!]
\centering
\scalebox{0.95}{
\begin{tabular}{ c | c  ||  c | c }

$\hypothesis $ & $\Instances$ & \textbf{Preference Function $\sigma(. ; \hypothesis)$} & \textbf{Teaching Sequence}\\
\hline
\underline{$\hypothesis_0$} & 0 0 0 0 0 0 0 &
%%%%%%%%%%%
$\hypothesis_0 > \hypothesis_1 > \hypothesis_2 > \hypothesis_3 > \hypothesis_4 >$\newline$ \hypothesis_5  > \hypothesis_6 > \hypothesis_7 >$ others &
%%%%%%%%%%%
$\paren{(\instance_0, 0)}$
\\
\hline
%%%%%%%%%%%%%%%%%%%%%%%%%%%%%%%%%%%%%%%%%%%%%%%%%%%%%%%
$\hypothesis_1$ & 1 0 0 0 0 0 0 &
%%%%%%%%%%%
$\hypothesis_1 > \hypothesis_8 > \hypothesis_9 > \hypothesis_{10} > \hypothesis_{11} > \hypothesis_{12} > \hypothesis_{13} $ others &
%%%%%%%%%%%
$\paren{(\instance_0, 1)}$
\\
\hline
%%%%%%%%%%%%%%%%%%%%%%%%%%%%%%%%%%%%%%%%%%%%%%%%%%%%%%%
$\hypothesis_8$ & 1 1 0 0 0 0 0 &
%%%%%%%%%%%
$\hypothesis_{8} > \hypothesis_{44} > \hypothesis_{45} > \hypothesis_{46} > \hypothesis_{47} > \hypothesis_{48} >$ others &
%%%%%%%%%%%
$\paren{(\instance_0, 1), (\instance_1, 1)}$
\\
$\hypothesis_{9}$ & 1 1 1 0 0 0 0 &
%%%%%%%%%%%
$\hypothesis_{9} > \hypothesis_{79} > \hypothesis_{80} > \hypothesis_{81} > \hypothesis_{82} > \hypothesis_{83} >$ others &
%%%%%%%%%%%
$\paren{(\instance_0, 1), (\instance_2, 1)}$
\\
$\hypothesis_{10}$ & 1 1 1 1 0 0 0 &
%%%%%%%%%%%
$\hypothesis_{10} > \hypothesis_{114} > \hypothesis_{115} >$ others &
%%%%%%%%%%%
$\paren{(\instance_0, 1), (\instance_3, 1)}$
\\
$\hypothesis_{11}$ & 1 1 1 1 1 0 0 &
%%%%%%%%%%%
$\hypothesis_{11} >$ others &
%%%%%%%%%%%
$\paren{(\instance_0, 1), (\instance_4, 1)}$
\\
$\hypothesis_{12}$ & 1 1 1 1 1 1 0 &
%%%%%%%%%%%
$\hypothesis_{12} >$ others &
%%%%%%%%%%%
$\paren{(\instance_0, 1), (\instance_5, 1)}$
\\
$\hypothesis_{13}$ & 1 1 1 1 1 1 1 &
%%%%%%%%%%%
$\hypothesis_{13} >$ others &
%%%%%%%%%%%
$\paren{(\instance_0, 1), (\instance_6, 1)}$
\\
\hline
%%%%%%%%%%%%%%%%%%%%%%%%%%%%%%%%%%%%%%%%%%%%%%%%%%%%%%%
$\hypothesis_{44}$ & 1 1 0 1 0 0 0 &
%%%%%%%%%%%
$\hypothesis_{44} >$ others &
%%%%%%%%%%%
$\paren{(\instance_0, 1), (\instance_1, 1), (\instance_3, 1)}$
\\
$\hypothesis_{45}$ & 1 1 0 1 1 0 0 &
%%%%%%%%%%%
$\hypothesis_{45}>$ others &
%%%%%%%%%%%
$\paren{(\instance_0, 1), (\instance_1, 1), (\instance_4, 1)}$
\\
$\hypothesis_{46}$ & 1 1 1 0 1 0 0 &
%%%%%%%%%%%
$\hypothesis_{46} >$ others &
%%%%%%%%%%%
$\paren{(\instance_0, 1), (\instance_1, 1), (\instance_2, 1)}$
\\
$\hypothesis_{47}$ & 1 1 0 0 0 1 0 &
%%%%%%%%%%%
$\hypothesis_{47} >$ others &
%%%%%%%%%%%
$\paren{(\instance_0, 1), (\instance_1, 1), (\instance_5, 1)}$
\\
$\hypothesis_{48}$ & 1 1 0 0 1 0 1 &
%%%%%%%%%%%
$\hypothesis_{48} >$ others &
%%%%%%%%%%%
$\paren{(\instance_0, 1), (\instance_1, 1), (\instance_6, 1)}$
\\
\hline
%%%%%%%%%%%%%%%%%%%%%%%%%%%%%%%%%%%%%%%%%%%%%%%%%%%%%%%
$\hypothesis_{79}$ & 1 0 1 0 0 0 0 &
%%%%%%%%%%%
$\hypothesis_{79} >$ others &
%%%%%%%%%%%
$\paren{(\instance_0, 1), (\instance_2, 1), (\instance_1, 0)}$
\\
$\hypothesis_{80}$ & 1 0 1 0 1 0 0 &
%%%%%%%%%%%
$\hypothesis_{80}>$ others &
%%%%%%%%%%%
$\paren{(\instance_0, 1), (\instance_2, 1), (\instance_4, 1)}$
\\
$\hypothesis_{81}$ & 1 0 1 0 1 1 0 &
%%%%%%%%%%%
$\hypothesis_{81} >$ others &
%%%%%%%%%%%
$\paren{(\instance_0, 1), (\instance_2, 1), (\instance_5, 1)}$
\\
$\hypothesis_{82}$ & 1 1 1 1 0 1 0 &
%%%%%%%%%%%
$\hypothesis_{47} >$ others &
%%%%%%%%%%%
$\paren{(\instance_0, 1), (\instance_2, 1), (\instance_3, 1)}$
\\
$\hypothesis_{83}$ & 1 0 1 1 1 0 1 &
%%%%%%%%%%%
$\hypothesis_{83} >$ others &
%%%%%%%%%%%
$\paren{(\instance_0, 1), (\instance_2, 1), (\instance_6, 1)}$
\\
\hline
%%%%%%%%%%%%%%%%%%%%%%%%%%%%%%%%%%%%%%%%%%%%%%%%%%%%%%%
$\hypothesis_{114}$ & 1 0 0 1 0 0 0 &
%%%%%%%%%%%
$\hypothesis_{114} >$ others &
%%%%%%%%%%%
$\paren{(\instance_0, 1), (\instance_3, 1), (\instance_1, 0)}$
\\
$\hypothesis_{115}$ & 1 0 0 1 1 0 0 &
%%%%%%%%%%%
$\hypothesis_{115}>$ others &
%%%%%%%%%%%
$\paren{(\instance_0, 1), (\instance_3, 1), (\instance_4, 1)}$
\\

\end{tabular}}
\vspace*{2mm}
\caption{More details about Figure~\ref{fig:Local_is_less_than_NCTD_7}: This table lists down all the hypotheses in the left branch of the tree. For each of these hypotheses, it shows the preference function from the hypothesis, as well as the teaching sequence to teach the hypothesis. Consider $\hypothesis_9$: We have $\sigma(., \hypothesis_9) = \{\hypothesis_{9} > \hypothesis_{79} > \hypothesis_{80} > \hypothesis_{81} > \hypothesis_{82} > \hypothesis_{83} > \textnormal{others}\}$. Also, we have teaching sequence for $\hypothesis_9$ as $\{(\instance_0, 1), (\instance_2, 1)\}$.}
\label{tab:Local_is_less_than_NCTD_7}
%\caption{Example of $\TD(\Hypotheses, \Instances, \sigma, \hypothesis, \hypothesis_0) < |T(\hypothesis)|$}\label{tab:Local<NCTD-7}
\end{table}

\clearpage
% !TEX root =  main.tex
%%%%%%%%%%%%%%%%%%%%%%%%%%%%%%%%%%%%%%%%%%%%%%%%%%%%%%%%%%
%%%%%%%%%%%%%%%%%%%%%%%%%%%%%%%%%%%%%%%%%%%%%%%%%%%%%%%%%%
% \subsection{Proof of \lemref{lem:main:div_concepts}}
% ..\\
% ..\\
% ..\\
\section{Supplementary Materials for \secref{sec:lvs_linvcd}} \label{sec.appendix.seq-models_linearVCD}
\subsection{Proof of \lemref{lem:main:div_concepts}}\label{sec.appendix.seq-models_linearVCD.lemmaproof}
%\section{Supplementary Materials for \secref{sec:lvs_linvcd}: Proof of \lemref{lem:main:div_concepts}} \label{sec.appendix.seq-models_linearVCD.lemmaproof}
In this section, we extend the proof sketch of \lemref{lem:main:div_concepts} in the main paper into the full proof.  A useful notion for this proof is the notion of \emph{$\hypotheses$-distinguishable set}: 
\begin{definition}[Based on \cite{doliwa2014recursive}]\label{def:hdistinguishable}
A set of instances $\instances \subseteq \Instances$ is $\hypotheses$-distinguishable, if $|\hypotheses_{|\instances}| = |\hypotheses|$.
\end{definition}
For completeness, we also incorporate part of the proof sketch from \secref{sec:lvs_linvcd} into the extended proof below.
% \begin{lemma}\label{lem:div_concepts}
% Consider a subset $\hypotheses \subseteq \Hypotheses$ and any short $\hypotheses$-distinguishable set $\instances_\hypotheses = \{\instance_1, ..., \instance_{|\instances_\hypotheses|}\}$ and a fixed hypothesis $\hypothesis_\hypotheses \in \hypotheses$. When $\VCD(\hypotheses, \instances_\hypotheses) \leq d$, where $d \geq 1$. Then we can divide $\hypotheses$ to $m = |\instances_\hypotheses| + 1$ separate \farnam{changed to non empty} non empty concept classes $\{\hypotheses_1, ..., \hypotheses_m\}$, such that (i) for all $j \in [m]$, there exists a short $\hypotheses_j$-distinguishable set $\instances_{\hypotheses_j}$ where $\VCD(\hypotheses_j, \instances_{\hypotheses_j}) \leq d-1$ (By $\VCD(\hypotheses) = 0$ we mean $|\hypotheses| = 1$). (ii) For all $j \in [m-1]$,  ${\hypotheses_j}_{|\{\instance_j\}} = \{ 1-  \hypothesis_\hypotheses(\instance_j)\}$. (iii) $\hypotheses_m = \{\hypothesis_\hypotheses\}$.
% \end{lemma}
\begin{proof}[(Extended) Proof of \lemref{lem:main:div_concepts}]
Let us define $\hypotheses_{\instance} = \{\hypothesis \in \hypotheses: {\hypothesis \triangle \instance}_{|\compactDSet{{\hypotheses}}} \in \hypotheses_{|\compactDSet{{\hypotheses}}}\}$. Here, $\hypothesis \triangle \instance$ denotes the hypothesis that only differs from $\hypothesis$ on the label of $\instance$. Fix a reference hypothesis $\hypothesis_\hypotheses$. $\forall \instance_j \in \compactDSet{\hypotheses}$, let $\clabel_j = 1 -  \hypothesis_\hypotheses(\instance_j)$ be the opposite label of $\instance_j$ as provided by $\hypothesis_\hypotheses$.  As highlighted in \lnref{alg:linvcd:partition} of \algref{alg:lemma_linVCD}, we consider the set $\hypotheses^{\clabel_1}_{\instance_1} = \{\hypothesis \in \hypotheses_{\instance_1}: \hypothesis(\instance_1) = \clabel_1\}$ as the first partition. 

%%%%%%%%%%%%%%%%%%%%%%%%%%%%%%%%%%%%%%%%%%%%%%%%%%
\begin{table}[h!]
\begin{subtable}{.32\linewidth}
\scalebox{0.78}{
\begin{tabular}{|l|l|l|l|l|}
\hline
\backslashbox{$\hypotheses$}{$\compactDSet{\hypotheses}$} & $\instance_1$ & $\instance_2$ & ... & $\instance_{m-1}$ \\ \hline
$\hypothesis_0$                     & 0             & \multicolumn{3}{c|}{00 ... 0}              \\ 
$\hypothesis_1$                     & 0             & \multicolumn{3}{c|}{a}              \\ 
$\hypothesis_2$                     & 1             & \multicolumn{3}{c|}{b}              \\ 
$\hypothesis_3$                     & 0             & \multicolumn{3}{c|}{b}              \\ 
$\hypothesis_4$                     & 1             & \multicolumn{3}{c|}{c}              \\ 
$\hypothesis_5$                     & 1             & \multicolumn{3}{c|}{d}              \\ 
$\hypothesis_6$                     & 1             & \multicolumn{3}{c|}{e}              \\ 
$\hypothesis_7$                     & 1             & \multicolumn{3}{c|}{a}              \\ \hline
\end{tabular}
}
\caption{$\hypotheses$}\label{tab:H-1.1}
\end{subtable}
~
\begin{subtable}{.32\linewidth}
\scalebox{0.78}{
\begin{tabular}{|l|l|l|l|l|}
\hline
\backslashbox{$\hypotheses_{\instance_1}$}{$\compactDSet{\hypotheses}$} & $\instance_1$ & $\instance_2$ & ... & $\instance_{m-1}$ \\ \hline
$\hypothesis_1$                     & 0             & \multicolumn{3}{c|}{a}              \\ 
$\hypothesis_2$                     & 1             & \multicolumn{3}{c|}{b}              \\ 
$\hypothesis_3$                     & 0             & \multicolumn{3}{c|}{b}              \\ 
$\hypothesis_7$                     & 1             & \multicolumn{3}{c|}{a}              \\ \hline
\end{tabular}
}
\caption{$\hypotheses_{\instance_1}$}\label{tab:H-1.2}
\end{subtable}
~
\begin{subtable}{.32\linewidth}
\scalebox{0.78}{
\begin{tabular}{|l|l|l|l|l|}
\hline
\backslashbox{$\hypotheses^1$}{$\compactDSet{\hypotheses}$} & $\instance_1$ & $\instance_2$ & ... & $\instance_{m-1}$ \\ \hline
$\hypothesis_2$                     & 1             & \multicolumn{3}{c|}{b}              \\ 
$\hypothesis_7$                     & 1             & \multicolumn{3}{c|}{a}              \\ \hline
\end{tabular}
}
\caption{$\hclass{1} = \hypotheses^{\clabel_1=1}_{\instance_1}$}\label{tab:H-1.3}
\end{subtable}
\caption{Illustrative example for constructing the first partition $\hclass{1} = \hypotheses^{\clabel_1=1}_{\instance_1}$. %Table~\ref{tab:H-1.1} is the main hypotheses space $\Hypotheses$, table~\ref{tab:H-1.2} is $\Hypotheses_{\instance_1}$, and table~\ref{tab:H-1.3} is $\hypotheses_1 = \Hypotheses^+_{\instance_1}$.
}\label{tab:H-1}
\end{table}
%%%%%%%%%%%%%%%%%%%%%%%%%%%%%%%%%%%%%%%%%%%%%%%%%%%%%%%%%%%%%%%%%%%%%%%%%%%%%%%%%%%%%
In \tableref{tab:H-1}, we provide an example hypothesis class where we show how to construct the first partition $\hypotheses^{\clabel_1}_{\instance_1}$. \tableref{tab:H-1.1} shows the hypothesis class $\Hypotheses$ (here $a \neq b \neq c \neq d \neq e$) and $\hypothesis_\Hypotheses = \hypothesis_0$. \tableref{tab:H-1.2} shows the resulting set of hypotheses %$\Hypotheses_{\instance_1} = \{\hypothesis \in \hypotheses: \hypothesis \triangle \instance_1 \in \hypotheses\}$
$\hypotheses_{\instance_1} = \{\hypothesis \in \hypotheses: {\hypothesis \triangle \instance_1}_{|\compactDSet{{\hypotheses}}} \in \hypotheses_{|\compactDSet{{\hypotheses}}}\}$, and \tableref{tab:H-1.3} shows the first partition $\hypotheses^{\clabel_1=1}_{\instance_1}$.

We denote $\hclass{1} := \hypotheses^{\clabel_1}_{\instance_1}$, and define $\compactDSet{\hclass{1}} \subseteq \compactDSet{\hypotheses} \setminus \{\instance_1\}$ to be any compact-distinguishable set on $\hclass{1}$. 

%%%%%%%%%%%%%%%%%%%%%%%%%%%%%%%%%%%%%%
%For $j\in [m]$, let us use $\hclass{j} := \hypotheses^{\clabel_j}_{\instance_j}$ as the short-hand notation, and define $\compactDSet{\hclass{j}} = \compactDSet{\hypotheses} \setminus \bigcup_{i=1}^j\{\instance_i\}$.

\paragraph{Lower VCD.} Let $d=\VCD(\hypotheses, \compactDSet{\hypotheses})$. In the following, we prove that $\VCD(\hclass{1}, \compactDSet{\hclass{1}}) \leq d-1$. We consider the following two cases:
\begin{enumerate}
    \item If $d>1$, then
\begin{equation*}
     \VCD(\hclass{1}, \compactDSet{\hclass{1}}) \leq \VCD(\hypotheses^{\clabel_1}_{\instance_1}, \compactDSet{\hypotheses}) = \VCD(\hypotheses_{\instance_1}, \compactDSet{\hypotheses}) - 1 \leq \VCD(\hypotheses, \compactDSet{\hypotheses}) - 1 \leq d - 1
\end{equation*}
Since $\compactDSet{\hclass{1}} \subset \compactDSet{\hypotheses}$, the first inequality is due to the monotonicity of $\VCD$. The equality follows from the fact that, for all $\hypothesis \in \hypotheses^{\clabel_1}_{\instance_1}$, it holds that $\hypothesis(\instance_1) = \clabel_1$ and $ {\hypothesis \triangle \instance_1}_{|\compactDSet{\hypotheses}} \in {\hypotheses_{{\instance_1}}}_{|\compactDSet{\hypotheses}}$. This indicates that, $\instances \subseteq \compactDSet{\hypotheses}$ shatters $\hypotheses^{\clabel_1}_{\instance_1}$, iff $\instances \cup \{\instance_1\}$ shatters $\hypotheses_{\instance_1}$. The second inequality comes from the fact that $\VCD$ is monotonic.
\item If $d=1$ and $|\hypotheses^{\clabel_1}_{\instance_1}| \geq 2$, then\\
similar to the previous case we have the following: $\VCD(\hypotheses_{\instance_1}, \compactDSet{\hypotheses})\leq \VCD(\hypotheses, \compactDSet{\hypotheses}) = 1$ and $\VCD(\hypotheses_{\instance_1}, \compactDSet{\hypotheses}) = \VCD(\hypotheses^{\clabel_1}_{\instance_1}, \compactDSet{\hypotheses}) + 1$. Subsequently, $\VCD(\hclass{1}, \compactDSet{\hclass{1}}) = 0$.
%\geq \VCD(\hclass{1}, \compactDSet{\hclass{1}}) \geq 2$. This is a contradiction.
\item If $d=1$ and $|\hypotheses^{\clabel_1}_{\instance_1}| < 2$, then\\
since $|\hypotheses^{\clabel_1}_{\instance_1}| < 2$, by definition, we have $\VCD(\hclass{1}, \compactDSet{\hclass{1}}) = 0$ and hence is less than $d=1$.
\end{enumerate}

That is, the first partition $\hclass{1}, \compactDSet{\hclass{1}}$ has $\VCD(\hclass{1}, \compactDSet{\hclass{1}}) \leq d - 1$, i.e., $\hclass{1}$ satisfies property (i). In addition, it is clear that ${\hclass{1}}_{|\{\instance_1\}} = \{\clabel_1\} = \{1 - \hypothesis_\hypotheses(\instance_1)\}$. Therefore, $\hclass{1}$ also satisfies property (ii).

%\paragraph{Non-emptiness of $\hypotheses^1$.} We denote $\hypotheses_{\text{rest}} = \hypotheses \setminus \hclass{1}$ and $\instances_{\text{rest}} = \compactDSet{\hypotheses} \setminus \{\instance_1\}$. 
%Note that according to \defref{def:hdistinguishable}, $\instances_{\text{rest}}$ is $\hypotheses_{\text{rest}}$-distinguishable---since if two hypothesis in $\hypotheses_{\text{rest}}$ become equal after removing $\instance_1$ from $\compactDSet{\hypotheses}$, these two hypotheses should have been in $\hypotheses_{\instance_1}$. 
%%%%%%%%%%%%%%%%%
%Now we prove that $\hypotheses^1$ is not empty. For the sake of contradiction assume that $\hypotheses^1$ is empty. We would have $\hypotheses_{\text{rest}} = \hypotheses$. This indicates that $\instances_{\text{rest}}$ is $\hypotheses$-distinguishable, but $|\instances_{\text{rest}}| < |\compactDSet{\hypotheses}|$. This is in contradiction with the assumption that $\compactDSet{\hypotheses}$ is compact-distinguishable on $\hypotheses$. 

\paragraph{Non-emptiness of $\hypotheses^1$.} For the sake of contradiction assume that $\hypotheses^1$ is empty. Note that $\compactDSet{\hypotheses}$ is $\hypotheses$-distinguishable. Since $\hypotheses^1$ is empty, this means that there is no pair of hypotheses that differ only on $x_1$. This in turn indicates that $\compactDSet{\hypotheses} \setminus \{\instance_1\}$ is $\hypotheses$-distinguishable. However, $|\compactDSet{\hypotheses} \setminus \{\instance_1\}| < |\compactDSet{\hypotheses}|$ and this is in contradiction to the assumption that $\compactDSet{\hypotheses}$ is compact-distinguishable on $\hypotheses$. 

\paragraph{Continuing to create partitions.} Next, we remove the first partition $\hclass{1}$ from $\hypotheses$, and continue to create the above mentioned partitions on $\hypotheses_{\text{rest}} = \hypotheses \setminus \hclass{1}$ and $\instances_{\text{rest}} = \compactDSet{\hypotheses} \setminus \{\instance_1\}$.  We claim that $\hypotheses_{\text{rest}}, \instances_{\text{rest}}$ exhibit the following properties.
\begin{enumerate}
    \item $\instances_{\text{rest}}$ is $\hypotheses_{\text{rest}}$-distinguishable (see \defref{def:hdistinguishable}).

    For the sake of contradiction, assume that there exists a pair of hypotheses $\hypothesis^1, \hypothesis^2\in  \hypotheses_{\text{rest}}$ such that $\hypothesis^1_{|\instances_{\text{rest}}} = \hypothesis^2_{|\instances_{\text{rest}}}$. However, we know that $\hypothesis^1_{|\compactDSet{\hypotheses}} \neq \hypothesis^2_{|\compactDSet{\hypotheses}}$. Then, these two hypotheses should have been in $\hypotheses_{\instance_1}$ and only one of them could have stayed in $\hypotheses_{\text{rest}}$. Hence, there is no such pair of hypotheses in $\hypotheses_{\text{rest}}$ and this completes the proof of the statement.
    %become equal after removing $\instance_1$ from $\compactDSet{\hypotheses}$, i.e.,
    
    \item $\instances_{\text{rest}}$ is also a compact-distinguishable on $\hypotheses_{\text{rest}}$.
    
    We now provide a concrete proof for the above statement. Imagine $\instances \subseteq \instances_{\text{rest}}$ is an $\hypotheses_{\text{rest}}$-distinguishable set. In the following, we prove that $\instances \cup \{\instance_1\}$ is $\hypotheses$-distinguishable.

    For the sake of contradiction assume that, $\instances \cup \{\instance_1\}$ isn't $\hypotheses$-distinguishable. This indicates that there exist two hypotheses $\hypothesis^1 \neq \hypothesis^2 \in \hypotheses$, where they are equal on $\instances \cup \{\instance_1\}$, i.e., $\hypothesis^1_{|\instances \cup \{\instance_1\}} = \hypothesis^2_{|\instances \cup \{\instance_1\}}$; also this implies $\hypothesis^1_{|\instances} = \hypothesis^2_{|\instances}$. Since $\hypotheses = \hypotheses_{\text{rest}} \cup \hypotheses^1$, we consider the following three cases.
    \begin{enumerate}[(i)]
        \item $\hypothesis^1, \hypothesis^2 \in \hypotheses_{\text{rest}}$. Since $\instances$ is $\hypotheses_{\text{rest}}$-distinguishable, it is a contradiction that $\hypothesis^1_{|\instances} = \hypothesis^2_{|\instances}$.
        \item $\hypothesis^1, \hypothesis^2 \in \hypotheses^1$. By the construction of $\hypotheses^1$, there exist $\hat{\hypothesis}^1, \hat{\hypothesis}^2 \in \hypotheses_{\text{rest}}$, such that $\hat{\hypothesis}^1_{|\instances \cup\{\instance_1\}} = {\hypothesis^1 \triangle \instance_1}_{|\instances \cup\{\instance_1\}}$ and $\hat{\hypothesis}^2_{|\instances \cup\{\instance_1\}} = {\hypothesis^2 \triangle \instance_1}_{|\instances \cup\{\instance_1\}}$. Furthermore, since $\hypothesis^1_{|\instances} = \hypothesis^2_{|\instances}$, we must have $\hat{\hypothesis}^1_{|\instances} = \hat{\hypothesis}^2_{|\instances}$, which contradicts the fact that $\instances$ is $\hypotheses_{\text{rest}}$-distinguishable.
        %\item $\hypothesis^1, \hypothesis^2 \in \hypotheses^1$.  Since $\instance_1 \notin \instances$, we can derive that ${\hypothesis^1_{|\instances} \triangle \instance_1} = {\hypothesis^2_{|\instances} \triangle \instance_1}$. However, both ${\hypothesis^1}_{|\instances \cup \{\instance_1\}} \triangle \instance_1, {\hypothesis^2}_{|\instances \cup \{\instance_1\}} \triangle \instance_1 \in {\hypotheses^{1 - \clabel_1}_{\instance_1}}_{|\instances \cup \{\instance_1\}} \subseteq {\hypotheses_{\text{rest}}}_{|\instances \cup \{\instance_1\}}$, and clearly $\hypothesis^1 \triangle \instance_1 \neq \hypothesis^2 \triangle \instance_1$. This is in contradiction to $\instances$ being $\hypotheses_{\text{rest}}$-distinguishable.
        \item $\hypothesis^1 \in \hypotheses^1, \hypothesis^2 \in \hypotheses_{\text{rest}}.$
        By the construction of $\hypotheses^1$, there exist $\hat{\hypothesis}^1 \in \hypotheses_{\text{rest}}$, such that $\hat{\hypothesis}^1_{|\instances \cup\{\instance_1\}} = {\hypothesis^1 \triangle \instance_1}_{|\instances \cup\{\instance_1\}}$.
        Furthermore, since $\hypothesis^1_{|\instances} = \hypothesis^2_{|\instances}$, we must have $\hat{\hypothesis}^1_{|\instances} = \hypothesis^2_{|\instances}$, which contradicts the fact that $\instances$ is $\hypotheses_{\text{rest}}$-distinguishable.
        %\item $\hypothesis^1 \in \hypotheses^1, \hypothesis^2 \in \hypotheses_{\text{rest}},$ then since $\hypothesis^1(\instance_1) = \clabel_1 \neq \hypothesis^2(\instance_1)$, $\hypothesis^2$ cannot be in $\hypotheses^{1 - \clabel_1}_{\instance_1}$. Now we have $\hypothesis^1_{|\instances} = \hypothesis^2_{|\instances}$. Since $\instance_1 \notin \instances$, we can derive that ${\hypothesis^1 \triangle \instance_1}_{|\instances} = {\hypothesis^2}_{|\instances}$. However, since $\hypothesis^1 \in \hypotheses^{\clabel_1}_{\instance_1}$, we have $\hypothesis^1 \triangle \instance_1 \in \hypotheses^{1- \clabel_1}_{\instance_1} \subseteq \hypotheses_{\text{rest}}$. Moreover, $\hypothesis^2 \notin \hypotheses^{1- \clabel_1}_{\instance_1}$ indicates that $\hypothesis^1 \triangle \instance_1 \neq \hypothesis^2 \in \hypotheses_{\text{rest}}$. This is in contradiction to $\instances$ being $\hypotheses_{\text{rest}}$-distinguishable.

    \end{enumerate}
    Therefore, we conclude that $\instances \cup \{\instance_1\}$ is $\hypotheses$-distinguishable. Recall that $\compactDSet{\hypotheses}$ is compact-distinguishable on $\hypotheses$. This indicates that $\compactDSet{\hypotheses} = \instances \cup \{\instance_1\}$, and subsequently $\instances = \instances_{\text{rest}}$. This indicates that $\instances_{\text{rest}}$ is compact-distinguishable on $\hypotheses_{\text{rest}}$.

    \item If $|\compactDSet{\hypotheses}| > 1$, then $|\hypotheses_{\text{rest}}| > 1$.
    %If $|\instances_{\text{rest}}| = 1$, then $|\hypotheses_{\text{rest}}| > 1$.

    We prove the above statement by contradiction. Assume that $|\hypotheses_{\text{rest}}| = 1$. Since we know that $\hypotheses^1$ is non empty, hence $|\hypotheses_{\text{rest}}| = 1$ implies that $|\hypotheses^1| = 1$.
    %We already know that $\hypotheses^1$ is non empty, but $\forall \hypothesis \in \hypotheses^1: \hypothesis \triangle \instance_1 \in \hypotheses^{1 - \clabel_1}_{\instance_1} \subseteq \hypotheses_{\text{rest}}$. 
    Let $\hclass{1} = \{\hypothesis\}$, and $\hypotheses_{\text{rest}} = \{\hypothesis'\}$, then $\hypothesis'_{|\compactDSet{\hypotheses}} = {\hypothesis \triangle \instance_1}_{|\compactDSet{\hypotheses}}$. Since we know that $\hypotheses = \hclass{1} \cup \hypotheses_{\text{rest}}$ , subsequently $\{\instance_1\}$ is compact-distinguishable on $\hypotheses$, which is in contradiction to the assumption that $\compactDSet{\hypotheses}$ is compact-distinguishable.
\end{enumerate}

\paragraph{Case of $|\instances_{\text{rest}}| > 1$.}  Therefore, we can repeat the above procedure (\lnref{alg:ln:lemma_linvcd_start}-- \lnref{alg:ln:lemma_linvcd_end}, \algref{alg:linvcd}) to create the subsequent partitions. This process continues until the size of $\instances_{\text{rest}}$ reduces to $1$, i.e. $\instances_{\text{rest}} = \{\instance_{m-1}\}$. Until then, we obtain partitions $\{\hclass{1}, ..., \hclass{m-2}\}$. By construction, $\hclass{j}$ satisfy properties (i) and (ii) for all $j \in [m-2]$.

Note that each step $\instances_{\text{rest}}$ is compact $\hypotheses_{\text{rest}}$-distinguishable set. This implies that we have never lost a hypothesis in this process, i.e., all hypotheses in $\hypotheses$ were either in one of $\hypotheses_j$'s or in $\hypotheses_{\text{rest}}$.

\paragraph{Case of $|\instances_{\text{rest}}| = 1$.}  It remains to show that the last two partitions $\hclass{m-1}$ and $\hclass{m}$ also satisfy properties (i) and (ii); additionally we need to satisfy property (iii). Since $\instances_{\text{rest}} = \{\instance_{m-1}\}$, and $|\hypotheses_{\text{rest}}| > 1$ before we start iteration $m-1$, there must exist exactly two hypotheses in $\hypotheses_{\text{rest}}$. Therefore $|\hclass{m-1}|, |\hclass{m}| = 1$, and $\hclass{m-1}_{|\{\instance_{m-1}\}} = \{\{\clabel_{m-1}\}\}$. This implies that $\VCD(\hclass{m-1}, \compactDSet{\hclass{m-1}}) = \VCD(\hclass{m}, \compactDSet{\hclass{m}}) = 0 \leq d-1$. Furthermore, notice that for every $j \in [m-1], \hypothesis \in \hclass{j}, \hypothesis_\hypotheses(\instance_j) \neq \hypothesis(\instance_j)$. This  indicates $\hypothesis_\hypotheses \in \hypotheses_m$. Since $|\hypotheses_m| = 1$, we get $\hypotheses_m = \{\hypothesis_\hypotheses\}$ which completes the proof.
%\end{proof}
\end{proof}
\subsection{Supplementary Materials for the Proof of Theorem~\ref{thm:main:seq-models_vs_VCD}}\label{sec.appendix.seq-models_linearVCD.theoremproof}

%%%%%%%%%%%%%%%%%%%%%%%%%%%%%%%%%%%%%%%%%%%%%%%%%%%%%%%%%%%%
%%%%%%%%%%%%%%%%%%%%%%%%%%%%%%%%%%%%%%%%%%%%%%%%%%%%%%%%%%%%
Our proof of \thmref{thm:main:seq-models_vs_VCD} in the main paper relies on the fact that every teaching
example $(\instance_j, \clabel_j)$, where $\instance_j\in \compactDSet{\hypotheses}$ and $\clabel_j=1-\hypothesis(x_j)$ for some fixed $h$, corresponds to a unique version space $V^j$. The proof depends on the following lemma.

%We prove this statement below. 
%As a first step, we establish the following lemma:
% \subsubsection{Lemma 26}
\begin{lemma}\label{lem:app:vs_dependent_TD}
 %Let us consider a subset of hypotheses $\hypotheses$ and a compact $\hypotheses$-distinguishable set $\instances_\hypotheses$. Then, any pair of examples given by $\example=(\instance, \clabel)$ and $\example'=(\instance',\clabel')$ where $\instance, \instance' \in \instances_\hypotheses$ and $\clabel, \clabel' \in \{0, 1\}$. Then, the following two version spaces induced by these examples w.r.t. $\hypotheses$ are different: $\hypotheses_\example = \{\hypothesis \in \hypotheses: \hypothesis(\instance) = \clabel\}$ and $\hypotheses_\example' = \{\hypothesis \in \hypotheses: \hypothesis(\instance') = \clabel'\}$ are different.
 
 Fix $\hypotheses \subseteq \Hypotheses$, and let $\compactDSet{\hypotheses} \subseteq \Instances$ be a compact-distinguishable set on $\hypotheses$. For any $\instance, \instance' \in \compactDSet{\hypotheses}$ and $\clabel, \clabel' \in \{0,1\}$ such that $(\instance, \clabel) \neq (\instance', \clabel')$, the resulting version spaces $\{\hypothesis \in \hypotheses: \hypothesis(\instance) = \clabel\}$ and $\{\hypothesis \in \hypotheses: \hypothesis(\instance') = \clabel'\}$ are different.
 
 %consider the following the set $Z_$$2  \times |\Instances_\hypotheses|$ examples given by $z=(\instance, \clabel)$ where $\instance \in \Instances_\hypotheses$ and $\clabel \in \{0, 1\}$.
  %for any pair $\forall\ \instance, \instance'\in \Instances'$ and any $\lambda, \lambda' \in \{0,1\}$, 
 %next version space would not be equal after teaching $(\instance, \lambda)$, or $(\instance', \lambda')$.
\end{lemma}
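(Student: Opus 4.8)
The plan is to prove the statement by a case split on whether $\instance = \instance'$, relying throughout on the two structural properties of compact-distinguishable sets recorded just after \defref{def:compactds} --- in particular property (ii), that no instance in $\compactDSet{\hypotheses}$ is labeled uniformly across $\hypotheses$ --- together with the minimality built into the definition: every proper subset of $\compactDSet{\hypotheses}$ fails to be $\hypotheses$-distinguishable.

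In the case $\instance = \instance'$ we must have $\clabel \neq \clabel'$, so the two version spaces $\{\hypothesis \in \hypotheses : \hypothesis(\instance) = \clabel\}$ and $\{\hypothesis \in \hypotheses : \hypothesis(\instance) = \clabel'\}$ are disjoint and together exhaust $\hypotheses$; by property (ii) neither is empty, and being disjoint and nonempty they are different. This case is immediate. The substantive case is $\instance \neq \instance'$. Here I would argue by contradiction: assume the two version spaces coincide, and call the common set $A$. Property (ii) forces $A \neq \hypotheses$ and $A \neq \emptyset$, so $A$ and $\hypotheses \setminus A$ are both nonempty; every hypothesis in $A$ realizes the pattern $(\clabel, \clabel')$ on $(\instance, \instance')$ while every hypothesis in $\hypotheses \setminus A$ realizes $(1-\clabel, 1-\clabel')$. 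Thus $\hypotheses$ realizes exactly two patterns on $\{\instance, \instance'\}$, and these two patterns differ in both coordinates, so $\instance$ and $\instance'$ carry ``the same information'': for any distinct $\hypothesis^1, \hypothesis^2 \in \hypotheses$, $\hypothesis^1(\instance') \neq \hypothesis^2(\instance')$ implies $\hypothesis^1(\instance) \neq \hypothesis^2(\instance)$.

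From here the contradiction is obtained by showing $\compactDSet{\hypotheses} \setminus \{\instance'\}$ is still $\hypotheses$-distinguishable: given distinct $\hypothesis^1, \hypothesis^2 \in \hypotheses$, since $\compactDSet{\hypotheses}$ is $\hypotheses$-distinguishable they differ on some instance of $\compactDSet{\hypotheses}$, and if that instance is $\instance'$ the observation above lets us replace it with $\instance \in \compactDSet{\hypotheses} \setminus \{\instance'\}$. Hence $|\hypotheses_{|\compactDSet{\hypotheses} \setminus \{\instance'\}}| = |\hypotheses|$, contradicting the clause in \defref{def:compactds} that every proper subset of a compact-distinguishable set has strictly fewer than $|\hypotheses|$ patterns. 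The only step requiring care --- the main obstacle, such as it is --- is the pattern bookkeeping in this last paragraph: verifying cleanly that a pair of hypotheses separated by $\instance'$ is genuinely separated by $\instance$, which is exactly the point at which the two-pattern structure derived from $A$ being a proper nonempty subset gets used.
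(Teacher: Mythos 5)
Your proof is correct. The underlying idea is the same as the paper's --- if the two version spaces coincided, the instances $\instance$ and $\instance'$ would carry redundant information, contradicting compactness --- but you take a somewhat different and more self-contained route. The paper's proof is a two-line appeal to the asserted property~(i) of compact-distinguishable sets (no two instances of $\compactDSet{\hypotheses}$ are labeled identically, or oppositely, by all of $\hypotheses$), a property the paper states after \defref{def:compactds} but never proves; your argument instead derives the needed contradiction directly from the minimality clause of \defref{def:compactds}, by showing that $\compactDSet{\hypotheses}\setminus\{\instance'\}$ would remain $\hypotheses$-distinguishable (effectively proving the relevant half of property~(i) inline). You also split on $\instance=\instance'$ versus $\instance\neq\instance'$ rather than on $\clabel=\clabel'$ versus $\clabel\neq\clabel'$, which cleanly handles the $\instance=\instance'$, $\clabel\neq\clabel'$ subcase that the paper's citation of property~(i) (a statement about \emph{distinct} instances) does not literally cover. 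The replacement argument in your final paragraph is sound: injectivity of the restriction map to $\compactDSet{\hypotheses}$ guarantees any two distinct hypotheses differ on some instance of $\compactDSet{\hypotheses}$, and the two-pattern structure lets you swap $\instance'$ for $\instance$, so the restriction to the proper subset stays injective.
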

% \begin{proof}
% Consider two cases. \\
% {\bfseries Case 1} Let $\clabel = \clabel'$. If  $\hypotheses_\example  = \hypotheses_\example'$, this would violate the property 1 mentioned above.

% {\bfseries Case 2} Let $\clabel \neq \clabel'$. If  $\hypotheses_\example  = \hypotheses_\example'$, this would violate the property 2 mentioned above.
% \end{proof}

\begin{proof}[Proof of \lemref{lem:app:vs_dependent_TD}]
Denote $A = \{\hypothesis \in \hypotheses: \hypothesis(\instance) = \clabel\}$ and $B = \{\hypothesis \in \hypotheses: \hypothesis(\instance') = \clabel'\}$. We consider the following two cases: (1) $\clabel = \clabel'$, and (2) $\clabel \neq \clabel'$. For the first case where $\clabel = \clabel'$, if  $A  = B$, this would violate the first part of property (i) of \lemref{lem:main:div_concepts}, (i.e., there do not exist distinct $\instance, \instance'$ s.t. $\forall \hypothesis \in \hypotheses: \hypothesis(\instance) = \hypothesis(\instance')$. For the second case, if $A  = B$, this would violate the second part of property (i). Hence it completes the proof. %\todo{R2 30 -- explain what z is}
%Denote $\hypotheses_\example = \{\hypothesis \in \hypotheses: \hypothesis(\instance) = \clabel\}$ and $\hypotheses_\example' = \{\hypothesis \in \hypotheses: \hypothesis(\instance') = \clabel'\}$. We consider the following two cases: (1) $\clabel = \clabel'$, and (2) $\clabel \neq \clabel'$. For the first case where $\clabel = \clabel'$, if  $\hypotheses_\example  = \hypotheses_\example'$, this would violate the first part of property (i) of \lemref{lem:main:div_concepts}, (i.e., there do not exist distinct $\instance, \instance'$ s.t. $\forall \hypothesis \in \hypotheses: \hypothesis(\instance) = \hypothesis(\instance')$. For the second case, if $\hypotheses_\example  = \hypotheses_\example'$, this would violate the second part of property (i). Hence it completes the proof. \todo{R2 30 -- explain what z is}
\end{proof}
}
}
{}
%%%%%%%%%%%%%%%%%%%%%%%%%%%%%%%%%%%%%%%%%%%%%%%%%%%%%%%%%%
\end{document}